\documentclass{article}

\usepackage[preprint]{neurips_2026}
\usepackage{amsmath}
\usepackage{amsthm}
\usepackage{bbm}

\usepackage[utf8]{inputenc} % allow utf-8 input
\usepackage[T1]{fontenc}    % use 8-bit T1 fonts
\usepackage{hyperref}       % hyperlinks
\hypersetup{colorlinks=true, allcolors=black} % delete for camera-ready
\usepackage{url}            % simple URL typesetting
\usepackage{booktabs}       % professional-quality tables
\usepackage{amsfonts}       % blackboard math symbols
\usepackage{nicefrac}       % compact symbols for 1/2, etc.
\usepackage{microtype}      % microtypography
\usepackage{xcolor}         % colors
\usepackage{colortbl}       % \cellcolor in tables
\definecolor{covheat}{RGB}{44,160,44}   % heatmap green
\usepackage{graphicx}       % \includegraphics
\usepackage{tikz}
\usetikzlibrary{arrows.meta}
\usepackage{placeins}       % \FloatBarrier
\usepackage{multirow}
\numberwithin{equation}{section} 
\newcommand{\lp}{\left(}
\newcommand{\rp}{\right)}

\newcommand{\History}{H}
\newcommand{\history}{h}

\newcommand{\Pbb}{\mathbb{P}}

\newcommand{\Tcal}{\mathcal{T}}

\DeclareMathAlphabet{\mathdutchcal}{U}{dutchcal}{m}{n}

\newcommand\numberthis{\addtocounter{equation}{1}\tag{\theequation}}

\newcommand{\prob}{\Pbb}

\newcommand{\indicator}{\mathbbm{1}}

\newcommand{\Q}{\boldsymbol{Q}}

\newcommand{\beq}{\begin{eqnarray*}}
\newcommand{\eeq}{\end{eqnarray*}}
\newcommand{\beqn}{\begin{eqnarray}}
\newcommand{\eeqn}{\end{eqnarray}}
\newcommand{\ben}{\begin{enumerate}}
\newcommand{\een}{\end{enumerate}}
\newcommand{\bit}{\begin{itemize}}
\newcommand{\eit}{\end{itemize}}

\newcommand{\eps}{\varepsilon}
\newcommand{\vertiii}[1]{{\left\vert\kern-0.25ex\left\vert\kern-0.25ex\left\vert #1 
    \right\vert\kern-0.25ex\right\vert\kern-0.25ex\right\vert}}

\renewcommand{\epsilon}{\eps}

\newcommand{\pow}[1]{^{(#1)}}

\newtheorem{definition}{Definition}
\newtheorem{lemma}{Lemma}

\newtheorem{proposition}[lemma]{Proposition}
\newtheorem{theorem}{Theorem}

\newtheorem{assumption}{Assumption}

\title{Model-based Bootstrap of Controlled Markov Chains}

\author{%
  Ziwei Su\\
  Department of Industrial Engineering and Management Sciences\\
  Northwestern University\\
  Evanston, IL 60208 \\
  \texttt{ziwei.su@northwestern.edu} \\
  \And
  Imon Banerjee \\
  Department of Industrial Engineering and Management Sciences\\
  Northwestern University\\
  Evanston, IL 60208 \\
  \texttt{imon.banerjee@northwestern.edu} \\
  \AND
  Diego Klabjan \\
  Department of Industrial Engineering and Management Sciences\\
  Northwestern University\\
  Evanston, IL 60208 \\
  \texttt{d-klabjan@northwestern.edu}\\
}

\begin{document}

\maketitle

\begin{abstract}
We propose and analyze a model-based bootstrap for transition kernels in finite controlled Markov chains (CMCs) with possibly nonstationary or history-dependent control policies, a setting that arises naturally in offline reinforcement learning (RL) when the behavior policy generating the data is unknown.
We establish distributional consistency of the bootstrap transition estimator in both a single long-chain regime and the episodic offline RL regime.
The key technical tools are a novel bootstrap law of large numbers (LLN) for the visitation counts and a novel use of the martingale central limit theorem (CLT) for the bootstrap transition increments.
We extend bootstrap distributional consistency to the downstream targets of offline policy evaluation (OPE) and optimal policy recovery (OPR) via the delta method by verifying Hadamard differentiability of the Bellman operators, yielding asymptotically valid confidence intervals for value and $Q$-functions.
Experiments on the RiverSwim problem show that the proposed bootstrap confidence intervals (CIs), especially the percentile CIs, outperform the episodic bootstrap and plug-in CLT CIs, and are often close to nominal ($50\%$, $90\%$, $95\%$) coverage, while the baselines are poorly calibrated at small sample sizes and short episode lengths.
\end{abstract}

\section{Introduction}
\label{sec:introduction}
The bootstrap—first proposed by \citet{efron1979bootstrap}—quickly became a cornerstone of modern statistical inference. 
Even when analytic variance formulas are available, bootstrap methods may deliver sharper finite-sample confidence intervals (CIs) with improved coverage; see \citet[p.~14]{hall2013bootstrap}. 
Despite these advantages, existing bootstrap approaches face two important limitations on reinforcement learning setting: (i) bootstrap methods for Markov transition matrices remain comparatively sparse, and (ii) existing approaches are not designed to handle non-stationarity that arises naturally in reinforcement learning.
This limitation is particularly relevant in offline reinforcement learning (RL), where the control policy (behavior policy) that generates the offline data may be nonstationary and history-dependent.
To address this gap, we propose a new \emph{model-based bootstrap} method for transition kernels in finite controlled Markov chains (CMCs) with possibly nonstationary or history-dependent control policies.

To set the stage, we introduce some notation. 
Assume that the state process $\{X_i\}_{i \geq 0}$ takes values in a finite state space $\mathcal S$ with $|\mathcal S| = S$, and the action process $\{A_i\}_{i \geq 0}$ takes values in a finite action space $\mathcal A$ with $|\mathcal A| = A$.
A CMC is a paired state-action process $\bigl\{\lp X_i, A_i\rp\bigr\}_{i\geq 0}$; conditioned on $A_i$, the state sequence $\{X_i\}$ follows a Markov transition kernel $M$, where  
\[M_{s,t}^{(a)} := \mathbb P\lp X_{i+1} = t \;\mid\; X_i = s, A_i = a \rp\] for all time points $i$, $s,t\in\mathcal{S}$, and $a\in\mathcal{A}$ \citep{borkar1991topics}. We observe an offline dataset $\mathcal{D}_n
= \bigl\{ (X^{(k)}_i, A^{(k)}_i, X^{(k)}_{i+1}): k = 1,\dots,K,\; i = 0,\dots,T-1 \bigr\}$ of $K$ episodes generated by an unknown behavior policy $\pi_b$, each of length $T$, with $n=KT$. 
Note that $K$ is allowed to be $1$ and the policy $\pi_b$ may be nonstationary and history-dependent. 

We compute from $\mathcal{D}_n$ the counts $N^{(a)}_{s,t} := \sum_{i,k}\indicator\!\bigl\{X^{(k)}_i=s,\, A^{(k)}_i=a,\, X^{(k)}_{i+1}=t\bigr\}$, $N^{(a)}_s := \sum_t N^{(a)}_{s,t}$, and $N_s := \sum_a N_s^{(a)}$.
We use the counts to compute the count-based empirical transition estimator $\hat M^{(a)}_{s,t} := N^{(a)}_{s,t} / N^{(a)}_s$ and the empirical behavior policy $\hat{\pi}_b(a \mid s) := N_s^{(a)}/N_s$.

\textbf{Model-based Bootstrap:} We now describe the proposed model-based bootstrap method.
Denote the block matrices stacking the entries $M^{(a)}_{s,t}$ and $\hat M^{(a)}_{s,t}$ by $\mathbf{M}$ and $\hat{\mathbf{M}}$, respectively.  
The bootstrap CMC $\{(X_i^\ast, A_i^\ast)\}_{i\ge0}$ has the empirical transition kernel $\hat M$ and the empirical behavior policy $\hat\pi_b$.
We generate the bootstrap dataset $\mathcal{D}_n^\ast = \bigl\{ (X^{\ast (k)}_i, A^{\ast (k)}_i, X^{\ast (k)}_{i+1}): k = 1,\dots,K,\; i = 0,\dots,T-1 \bigr\}$ from the bootstrap CMC with the same episodic structure as $\mathcal{D}_n$, and compute the bootstrap transition estimator $\hat M_{s,t}^{\ast(a)} := N_{s,t}^{\ast(a)} / N_s^{\ast(a)}$ from $\mathcal{D}_n^\ast$, where $N^{\ast (a)}_{s,t} := \sum_{i,k}\indicator\!\bigl\{X^{\ast (k)}_i=s,\, A^{\ast (k)}_i=a,\, X^{\ast (k)}_{i+1}=t\bigr\}$ and $N^{\ast (a)}_s := \sum_t N^{\ast (a)}_{s,t}$.
We denote the block transition matrix stacking the entries $\hat M_{s,t}^{\ast(a)}$ by $\hat{\mathbf{M}}^\ast$.
We repeat the bootstrap $B$ times independently to obtain $B$ bootstrap replicates $\hat{\mathbf{M}}^{\ast(1)},\dots,\hat{\mathbf{M}}^{\ast(B)}$.
The total bootstrap computational cost is $O(Bn) = O(BKT)$, linear in dataset size $n$ and number of bootstrap replicates $B$.

We describe two downstream applications of offline policy evaluation (OPE) and optimal policy recovery (OPR) in offline infinite-horizon discounted RL.
In OPE, for a target policy $\pi$, we use $\hat{\mathbf{M}}$ in the Bellman equations to obtain plug-in estimators $\hat V_\pi$ and $\hat Q_\pi$ of the value function $V_\pi$ and action-value function $Q_\pi$, respectively.
We then substitute $\hat{\mathbf{M}}^{\ast(j)}$ in place of $\hat{\mathbf{M}}$ to obtain bootstrap counterparts $\hat V_\pi^{\ast(j)}$ and $\hat Q_\pi^{\ast(j)}$.
In OPR, we solve the optimal Bellman equations under $\hat{\mathbf{M}}$ to obtain the plug-in estimators $\hat V_\star$ and $\hat Q_\star$ of the optimal value function $V_\star$ and optimal action-value function $Q_\star$, respectively.
We then substitute $\hat{\mathbf{M}}^{\ast(j)}$ in place of $\hat{\mathbf{M}}$ to obtain bootstrap counterparts $\hat V_\star^{\ast(j)}$ and $\hat Q_\star^{\ast(j)}$.
We compute $100(1-\alpha)\%$ CIs for $V_\pi$, $Q_\pi$, or $V_\star$, $Q_\star$ from the empirical $\alpha/2$ and $1-\alpha/2$ quantiles of their $B$ bootstrap replicates.

The focus of this paper is to establish distributional consistency of $\hat{\mathbf{M}}^\ast$ in two settings: the single-chain regime $(K=1,\,T\to\infty)$ in Theorem~\ref{thm:bootstrap-M}, and the episodic offline-RL regime $(T$ fixed$,\,K\to\infty)$ in Proposition~\ref{prop:bootstrap-M-episodic}.
In both settings, conditional on $\mathcal{D}_n$, $\sqrt{n}\,\mathrm{vec}(\hat{\mathbf{M}}^\ast - \hat{\mathbf{M}})$ converges in distribution to the same Gaussian limit as $\sqrt{n}\,\mathrm{vec}(\hat{\mathbf{M}} - \mathbf{M})$.
Proposition~\ref{prop:bootstrap-vq} extends this to OPE and OPR targets via the delta method.
\citet{su2025centrallimittheoremstransition} provides explicit formulas for various distributional limits.
We validate the finite-sample performance of our proposed model-based bootstrap CIs on the RiverSwim problem, where the proposed model-based bootstrap percentile CIs achieve near-nominal coverage while baselines systematically undercover.

We now briefly summarize the key contributions of this paper.
\begin{itemize}
    \item \textbf{Bootstrap distributional consistency.}
    To the best of our knowledge, we establish the first distributional consistency result for the proposed model-based bootstrap in finite CMCs with possibly nonstationary or history-dependent control policies.
    Theorem~\ref{thm:bootstrap-M} covers the single-chain regime $(K=1,\,T\to\infty)$ and Proposition~\ref{prop:bootstrap-M-episodic} covers the episodic offline-RL regime $(T\text{ fixed},\,K\to\infty)$. 
    In both regimes, we establish that conditional on $\mathcal{D}_n$, $\sqrt{n}\,\mathrm{vec}(\hat{\mathbf{M}}^\ast-\hat{\mathbf{M}})$ converges in distribution to the same Gaussian law as $\sqrt{n}\,\mathrm{vec}(\hat{\mathbf{M}}-\mathbf{M})$.
    The proof hinges on a novel bootstrap law of large numbers (LLN) $N_s^{\ast(a)}/n\to p_s^{(a)}$ a.s. and a novel use of the martingale CLT for the bootstrap transition increments.
    (See Section~\ref{sec:bootstrap-consistency}.)

    \item \textbf{Bootstrap validity for OPE and OPR.}
    In Proposition~\ref{prop:bootstrap-vq}, we are the first one to extend bootstrap distributional consistency to the downstream RL targets $V_\pi$, $Q_\pi$, $V_\star$ and $Q_\star$ with non-stationary or history-dependent behavior policies.
    For OPE, we establish that conditional on $\mathcal{D}_n$, $\sqrt{n}(\hat V_\pi^\ast - \hat V_\pi)$ and $\sqrt{n}(\hat Q_\pi^\ast - \hat Q_\pi)$ converge in distribution to the same Gaussian limits as $\sqrt{n}(\hat V_\pi - V_\pi)$ and $\sqrt{n}(\hat Q_\pi - Q_\pi)$, respectively. 
    For OPR, we establish that conditional on $\mathcal{D}_n$, $\sqrt{n}(\hat V_\star^\ast - \hat V_\star)$ converges in distribution to the same Gaussian limit as $\sqrt{n}(\hat V_\star - V_\star)$.
    The proof exploits the delta method by verifying Hadamard differentiability (see Appendix~\ref{app:proof-cor-vq}) of the Bellman operators with respect to the transition block matrix.
    (See Section~\ref{sec:ope-opr-applications}.)

    \item \textbf{Finite-sample coverage on RiverSwim.}
    Proposition~\ref{prop:bootstrap-vq} guarantees the asymptotic validity of the proposed model-based bootstrap CIs for OPE and OPR. 
    In the setting of \citet{zhu2024uncertainty}, we validate the model-based bootstrap CIs empirically on the RiverSwim problem \citep{strehl2004empirical}.
    In a coverage study with $B=1{,}000$ bootstrap replicates and $N_{\mathrm{reps}}=1{,}000$ Monte Carlo replications, the  model-based bootstrap CIs, especially the percentile CI, substantially outperform the episodic bootstrap and plug-in CLT CIs across the OPE and OPR tasks considered.
    The percentile CI attains near-nominal coverage at nominal $50\%$ (about $0.48$--$0.54$), $90\%$ (about $0.87$--$0.94$), and $95\%$ (about $0.92$--$0.97$) in the better-sampled regimes, notably for OPE and for OPR with $T\in\{50,100\}$ at $n\ge 500$.
    (See Section~\ref{sec:experiments}.)
\end{itemize}
The rest of the paper is organized as follows. Section~\ref{sec:background} reviews background and related work. 
Section~\ref{sec:theory} formalizes the problem setup and states the main theoretical results. 
Section~\ref{sec:experiments} presents the numerical experiments.
Section~\ref{sec:conclusion} concludes the paper.

%%%%%%% Current edit ends here %%%%%%%

\section{Related Work}
\label{sec:background}
The practice of using CLTs for statistical inference, such as for Markov chains and controlled Markov chains \citep{billingsley1961statistical,zhu2024uncertainty,su2025centrallimittheoremstransition}, is well established in the literature.
These plug-in CIs, however, require covariance estimation, and the resulting errors degrade finite-sample CI coverage.
The bootstrap addresses this: the intuitive appeal, computational simplicity, and better performance in complex models or nonstandard error structure of the bootstrap \citep{hall1992mean} lead to its widespread use in modern statistical inference \citep{modayil2004towards,han2016bootstrap,nakkiran2020deep}.
The applications of the bootstrap span sparse principal component analysis \citep{babamoradi2013bootstrap,rahoma2021sparse}, time-series forecasting \citep{ruiz2002bootstrapping}, econometric and financial inference \citep{horowitz2019bootstrap,gonccalves2023bootstrapping}, and clustering and classification \citep{jedra2023nearly,jain1987bootstrap}.

Directly relevant to this paper, bootstrap methods are widely applied in Markov chains \citep{kulperger1989bootstrapping,athreya1992bootstrapping}.
Closest in spirit to this work, \citet{horowitz2003bootstrap} introduces a model-based bootstrap for stationary Markov chains by resampling from an estimated transition kernel under geometric mixing and a Cram\'{e}r-type moment condition, while \citet{bertail2006regenerative} develop a regenerative block bootstrap for Harris recurrent chains via Nummelin splitting, requiring explicit identification of a minorization condition and an accessible small set. 
However, these methods do not provide guarantees for the finite CMC setting with possibly nonstationary or history-dependent control policies.

RL is another domain where bootstrap methods are widely applied \citep{krishnamurthy2016partially,faradonbeh2019applications,ramprasad2023online,banerjee2023probably,banerjee2025off,banerjeesmall_2025}. For OPE, \citet{kostrikov2020statistical} apply Efron's bootstrap to empirical transition tuples treated as i.i.d.\ draws from a stationary occupancy distribution. This requires a stationary behavior policy and ignores within-episode dependence.
\citet{hanna2017bootstrapping} propose model-based bootstrap methods that learn transition dynamics from offline data, simulate trajectories under the target policy using the learned model, and bootstrap trajectory-level cumulative rewards to construct high-confidence lower bounds on a policy value. This approach does not establish distributional consistency of the bootstrap estimator. 
\citet{hao2021bootstrapping} establishes bootstrap validity for a finite-horizon episodic bootstrap via fitted Q-evaluation, under the assumption that a large number of independent i.i.d.\ episodes are available.
This assumption does not cover the finite CMC setting with possibly nonstationary or history-dependent control policies, and the episodic bootstrap can be poorly calibrated when the number of episodes is small, which brings us to the following open problems.

\textbf{Open problems.} We address two open problems in the theory of \textit{bootstrap for finite CMCs:} \textbf{(i)} Is it possible to develop a model-based bootstrap for finite CMCs that does not require stationarity or Markovianity of the control policy? \textbf{(ii)} Can we then establish distributional consistency of the bootstrap transition estimator, then lift this result to common RL tasks like OPE and OPR? 

We move on to formally state our main results.
\section{Main Theoretical Results}
\label{sec:theory}

The goal of this section is to establish our main results: distributional consistency of the model-based bootstrap in finite CMCs for the bootstrap transition estimator $\hat M^\ast$ and, via the delta method, for the bootstrap plug-in value and $Q$-function estimators. 

In order to formalize our results, we introduce some additional notation.
% We consider a finite CMC $\{(X_i,A_i)\}_{i\ge0}$ with finite state space $\mathcal{S}$ and finite action space $\mathcal{A}$, with  $S=|\mathcal{S}|$ and $A=|\mathcal{A}|$.
All random variables are defined on a filtered probability space $(\Omega,\mathcal{F},\mathbb{F},\mathbb{P})$, where $\mathbb F := \{\mathcal F_j\}_{j \geq 0}$, $\mathcal{F}_j := \sigma(H_0^j)$ is a filtration with $\mathcal F_j \subset \mathcal F$, and $H_p^j := \{(X_i,A_i)\}_{i=p}^j$ denotes the history from time $p$ to $j$.
% The environment dynamics are governed by an unknown transition kernel $M^{(a)}_{s,t} = \mathbb{P}(X_{i+1}=t \mid X_i=s, A_i=a)$ for $s,t\in\mathcal{S}$, $a\in\mathcal{A}$.
The transition kernel $M$ satisfies the Markov property $M_{s_i,s_{i+1}}^{(a_i)} = \mathbb P\lp X_{i+1} = s_{i+1} \;\mid\; X_i = s_i, A_i = a_i \rp = \mathbb P \lp X_{i+1} = s_{i+1} \;\mid\; H_0^i = h_0^i \rp$,
where $h_0^i := \{X_0=s_0, A_0=a_0, \ldots, X_i=s_i, A_i=a_i\}$ is the sample history up to time $i$.
For each $s\in\mathcal{S}$, let $M_s := [M^{(a)}_{s,t}]_{t\in\mathcal{S},\,a\in\mathcal{A}} \in \mathbb{R}^{S\times A}$, and let $\mathbf{M} := [M_1,\dots,M_S]^\top \in \mathbb{R}^{SA\times S}$ be the block transition matrix stacking $M_{s,t}^{(a)}$ entries.
The empirical block transition matrix is $\hat{\mathbf{M}} := [\hat M_1,\dots,\hat M_S]^\top \in \mathbb{R}^{SA\times S}$, where $\hat M_s := [\hat M^{(a)}_{s,t}]_{t\in\mathcal{S},\,a\in\mathcal{A}}$.

We now turn to the formal statements of our main results.
We begin by introducing some standard assumptions.

\subsection{Assumptions}
\label{sec:assumptions}

The three assumptions below control, respectively, the frequency of return visits to each state--action pair, the rate of mixing of the paired state--action process $\{(X_i,A_i)\}_{i\ge 0}$, and the positivity of the limiting state--action visitation frequencies. Together they underpin the CLT for the empirical transition estimator established by \citet{su2025centrallimittheoremstransition} and ensure that the bootstrap chain inherits the same asymptotic properties.

For every $(s,a)\in\mathcal{S}\times\mathcal{A}$, we recursively define the hitting and return times to state--action pair $(s,a)$ as follows.
\begin{definition}[Hitting and Return Times]
\label{def:return-times}
The first hitting time to $(s,a)$ is $\tau_{s,a}\pow{1} := \min\{i>0: X_i=s,\,A_i=a\}$. For $i\ge 2$, the $i$-th return time is
\[
\tau_{s,a}\pow{i} := \min\!\left\{k>0: X_{\sum_{j=1}^{i-1}\tau_{s,a}\pow{j}+k}=s,\;A_{\sum_{j=1}^{i-1}\tau_{s,a}\pow{j}+k}=a\right\}.
\]
\end{definition}
Assumption~\ref{ass:return-time-growth} requires the conditional expected return time to grow at most sublinearly in the visit index.
\begin{assumption}[Sublinear Return-Time Growth]
\label{ass:return-time-growth}
For any $(s,a)\in\mathcal{S}\times\mathcal{A}$, there exists $\beta\in(0,1)$ such that $\mathbb{E}\!\left[\tau_{s,a}\pow{i}\;\middle|\;\mathcal{F}_{\sum_{p=1}^{i-1}\tau_{s,a}\pow{p}}\right] \le T_i \, \, \text{a.s.}, T_i = O(i^\beta)$.
\end{assumption}
Our next assumption is on the rate of mixing.
We define the weak $\bar\eta$-mixing coefficient for the CMC $\{(X_i,A_i)\}$ as follows.
\begin{definition}[Weak $\bar\eta$-Mixing Coefficient]
\label{def:eta-mix}
For $i<j\le n$, $\mathcal{T}\subseteq(\mathcal{S}\times\mathcal{A})^{n-j+1}$, $s_1,s_2\in\mathcal{S}$, and $a_1,a_2\in\mathcal{A}$, let
\begin{small}
    \begin{align*}
        \eta_{i,j}(\mathcal{T},s_1,s_2,a_1,a_2,h_0^{i-1},n)
        \!:=\! \, &\Bigl|\mathbb{P}\bigl((X_j,A_j,\ldots,X_n,A_n)\in\mathcal{T}\mid X_i=s_1,A_i=a_1,H_0^{i-1}=h_0^{i-1}\bigr) \\
        &\quad -\mathbb{P}\bigl((X_j,A_j,\ldots,X_n,A_n)\in\mathcal{T} \mid X_i=s_2,A_i=a_2,H_0^{i-1}=h_0^{i-1}\bigr)\Bigr|.
    \end{align*}
\end{small}

The weak $\bar\eta$-mixing coefficient for the CMC is
\[
\Bar{\eta}_{i,j}(n)
:=
\sup_{\substack{\Tcal,s_1,s_2,a_1,a_2,\history_0^{i-1},\\
\prob\lp X_i=s_1,A_i=a_1,\History_0^{i-1}=\history_0^{i-1}\rp>0,\\
\prob\lp X_i=s_2,A_i=a_2,\History_0^{i-1}=\history_0^{i-1}\rp>0}}
\eta_{i,j}(\Tcal,s_1,s_2,a_1,a_2,\history_0^{i-1}, n).
\]
\end{definition}
We omit the dependence on $n$ in $\bar\eta_{i,j}$ by convention.
The coefficient $\bar\eta_{i,j}$ quantifies the worst-case influence of the state--action pair at time $i$ on the trajectory distribution from time $j$ onward. Assumption~\ref{ass:eta-mix} imposes a uniform bound on the cumulative decay of $\bar\eta$-mixing coefficients. Sufficient conditions for Assumption~\ref{ass:eta-mix} in terms of separate mixing coefficients for the state and action processes are given in Appendix~\ref{app:mixing}.
\begin{assumption}[Weak Mixing]
\label{ass:eta-mix}
There exists a finite constant $C_\Delta$ such that $\|\Delta_n\| := \max_{1\le i\le n}(1+\bar\eta_{i,i+1}+\cdots+\bar\eta_{i,n}) \le C_\Delta$ for all $n$.
\end{assumption}
We next define the ergodic occupation measure of the CMC as the Cesàro limit of the marginal state--action distributions.
\begin{definition}[Ergodic Occupation Measure]
\label{def:semi-ergodic}
The ergodic occupation measure of the CMC is $p_s^{(a)} := \lim_{n\to\infty} \tfrac{1}{n}\sum_{i=0}^{n-1}\mathbb{P}(X_i=s,\,A_i=a)$, whenever the limit exists for all $(s,a)\in\mathcal{S}\times\mathcal{A}$.
\end{definition}
Assumption~\ref{ass:semi-ergodic} requires the ergodic occupation measure to exist and be uniformly bounded away from zero.
\begin{assumption}[Positive Ergodic Occupation Measure]
\label{ass:semi-ergodic}
The CMC admits an ergodic occupation measure, and $\min_{s,a} p_s^{(a)} \ge p_0 > 0$.
\end{assumption}
\subsection{Bootstrap Distributional Consistency}
\label{sec:bootstrap-consistency}

We now state our main results. Theorem~\ref{thm:bootstrap-M} establishes bootstrap distributional consistency for the single long-chain setting; the episodic extension follows as Proposition~\ref{prop:bootstrap-M-episodic}.

\begin{theorem}[Single Chain Bootstrap Distributional Consistency]
\label{thm:bootstrap-M}
Let $\bar \xi := \sqrt{n}\,\mathrm{vec}(\hat{\mathbf{M}}^\ast - \hat{\mathbf{M}})$ be the vector of differences between $\hat{\mathbf{M}}^\ast$ and $\hat{\mathbf{M}}$.
Suppose $\mathcal{D}_n = \{(X_0,A_0,X_1), \ldots, (X_{n-1},A_{n-1},X_n)\}$ is a single trajectory of length $n$ satisfying Assumptions~\ref{ass:return-time-growth}--\ref{ass:semi-ergodic}, and there exists $(s_0,a_0)\in\mathcal{S}\times\mathcal{A}$ with $M_{s_0,s_0}^{(a_0)}>0$. 
Then, for $\mathbb{P}$-almost every sequence of datasets $(\mathcal{D}_n)_{n\ge 1}$, conditional on $\mathcal{D}_n$,
\[
\bar \xi  = \sqrt{n}\,\mathrm{vec}(\hat{\mathbf{M}}^\ast - \hat{\mathbf{M}})
\;\xrightarrow{\mathrm{d}}\;
\mathcal{N}(0,\bar\Lambda) \text{, as $n \to \infty$}.
\]
The elements of $\bar \Lambda$ are given by $\bar\Lambda_{sat,\,s'a't'}$, which denotes the covariance between the $(s-1)S \cdot A+(a-1)S+t$-th and the $(s'-1)S\cdot A+(a'-1)S+t'$-th elements of $\bar \xi$.
Value $\bar\Lambda_{sat,\,s'a't'}$ has the expression $\bar\Lambda_{sat,\,s'a't'} := \indicator\!\bigl\{(s,a)=(s',a')\bigr\} \frac{1}{p_s^{(a)}}\!\left(\indicator\!\left\{t=t'\right\}M_{s,t}^{(a)} - M_{s,t}^{(a)}M_{s,t'}^{(a)}\right).$
Matrix $\bar\Lambda$ is the asymptotic covariance of $\sqrt{n}\,\mathrm{vec}(\hat{\mathbf{M}}-\mathbf{M})$ given in \citet[Corollary~1]{su2025centrallimittheoremstransition}.
\end{theorem}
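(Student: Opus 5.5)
The plan is to work conditionally on the data sequence $(\mathcal{D}_n)_{n\ge1}$ and treat the bootstrap chain as a CMC in its own right, with kernel $\hat{\mathbf M}$ and a stationary Markov policy $\hat\pi_b$. By construction, under the bootstrap law $\mathbb P^\ast(\cdot)=\mathbb P(\cdot\mid\mathcal D_n)$ the pair process $\{(X_i^\ast,A_i^\ast)\}$ is a time-homogeneous finite Markov chain on $\mathcal S\times\mathcal A$, with transition matrix
\[
P_n\bigl((s,a),(t,b)\bigr)=\hat M^{(a)}_{s,t}\,\hat\pi_b(b\mid t).
\]
On a $\mathbb P$-probability-one event we have three facts from the data side:
\begin{itemize}
\item $\hat{\mathbf M}\to\mathbf M$ a.s., by the strong consistency underlying \citet[Corollary~1]{su2025centrallimittheoremstransition}, which uses Assumptions~\ref{ass:return-time-growth}--\ref{ass:semi-ergodic};
\item $N_s^{(a)}/n\to p_s^{(a)}$ a.s.;
\item hence $\hat\pi_b(a\mid s)\to \pi_\infty(a\mid s):=p_s^{(a)}/\sum_{a'}p_s^{(a')}$, which is bounded below by $p_0$.
\end{itemize}
Thus $P_n\to P_\infty$, where $P_\infty$ is built from $\mathbf M$ and $\pi_\infty$. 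I fix an $\omega$ in this event for the rest of the argument.

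\textbf{Step 1: bootstrap LLN, $N^{\ast(a)}_s/n\to p^{(a)}_s$.} I first show that $P_\infty$ is irreducible and aperiodic and that its stationary law is exactly $p=(p_s^{(a)})$.

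For stationarity, the empirical count matrix has row sums and column sums differing by at most one (the endpoints). Hence the empirical pair-frequency vector $\hat p=(N_s^{(a)}/n)$ satisfies $\hat pP_n=\hat p+O(1/n)$. Passing to the limit gives $pP_\infty=p$.

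For irreducibility, $p>0$ on every pair, so every $(s,a)$ is visited infinitely often along the data. Each step of a path between pairs in the data has positive probability under $P_\infty$, because $M>0$ on every observed transition and $\pi_\infty>0$. For aperiodicity, the assumed $M^{(a_0)}_{s_0,s_0}>0$ together with $\pi_\infty(a_0\mid s_0)>0$ gives a self-loop at $(s_0,a_0)$.

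Then $P_\infty$ is primitive, so $P_n^m>0$ entrywise for some fixed $m$ and all large $n$, with a uniform Doeblin constant. This yields uniform geometric ergodicity of the bootstrap chains, with stationary laws $p_n\to p$. A second-moment (or exponential) bound on the centred occupation counts, uniform in $n$, followed by Borel--Cantelli along the triangular array, then gives $N^{\ast(a)}_s/n\to p^{(a)}_s$ in $\mathbb P^\ast$-probability, and a.s. along a fourth-moment route.

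\textbf{Step 2: martingale CLT.} Write
\[
\sqrt n\bigl(\hat M^{\ast(a)}_{s,t}-\hat M^{(a)}_{s,t}\bigr)=\frac{n}{N^{\ast(a)}_s}\cdot\frac{1}{\sqrt n}\sum_{i=0}^{n-1}\xi_i^{(s,a,t)},
\qquad
\xi_i^{(s,a,t)}:=\indicator\{X^\ast_i=s,A^\ast_i=a\}\bigl(\indicator\{X^\ast_{i+1}=t\}-\hat M^{(a)}_{s,t}\bigr).
\]
The vector $\xi_i$ is a bounded martingale difference array with respect to the bootstrap filtration, since $X^\ast_{i+1}\mid\mathcal F^\ast_i\sim\hat M^{(A_i^\ast)}_{X_i^\ast,\cdot}$. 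Its conditional covariance is
\[
\frac1n\sum_i \mathbb E^\ast\bigl[\xi_i^{(sat)}\xi_i^{(s'a't')}\mid\mathcal F^\ast_i\bigr]
=\indicator\{(s,a)=(s',a')\}\frac{N^{\ast(a)}_s}{n}\Bigl(\indicator\{t=t'\}\hat M^{(a)}_{s,t}-\hat M^{(a)}_{s,t}\hat M^{(a)}_{s,t'}\Bigr),
\]
which converges by Step 1 and $\hat{\mathbf M}\to\mathbf M$. The conditional Lindeberg condition is trivial because the increments are bounded by $1/\sqrt n$ after scaling. The martingale CLT for triangular arrays (e.g.\ Hall--Heyde) then gives a Gaussian limit for the normalized sum. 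Slutsky with $n/N^{\ast(a)}_s\to1/p^{(a)}_s$ yields exactly $\bar\Lambda$.

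\textbf{Main obstacle.} The delicate step is Step 1: identifying the stationary law of $P_\infty$ with $p$ when the true policy is history-dependent, and getting mixing that is uniform in $n$. My first attempt is the flow-balance argument above combined with the Doeblin bound. The only way this fails is if some $\hat\pi_b$ or $\hat M$ entries vanish and destroy primitivity. That cannot happen eventually, because the positive limits $p_0$ and $M$ keep all needed entries bounded away from zero for large $n$.
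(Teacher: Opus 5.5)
Your proposal is correct and follows essentially the same route as the paper: a uniform Doeblin minorization for the bootstrap state--action kernel gives uniform geometric ergodicity and convergence of its stationary law to $p$; this yields the bootstrap LLN $N_s^{\ast(a)}/n\to p_s^{(a)}$, after which a martingale CLT with trivial Lindeberg condition plus Slutsky produces $\bar\Lambda$. The differences are only in sub-steps, and either version works. You identify $pP_\infty=p$ through the flow-balance identity $\hat p P_n=\hat p+O(1/n)$, which is a more explicit justification than the paper's bare appeal to the ergodic occupation measure. You also obtain the LLN from moment bounds under uniform geometric ergodicity, whereas the paper uses the $\bar\eta$-mixing concentration inequality of \citet[Lemma~6]{banerjee2025off} with Borel--Cantelli.
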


\begin{proof}[Proof sketch]
The complete proof is given in Appendix~\ref{app:proofs}; the supporting lemmas are collected in Appendix~\ref{app:lemmas}. We sketch the three main steps here.

\textbf{Step~1 (Bootstrap LLN).}
The pivotal intermediate result, established formally as Lemma~\ref{lem:bootstrap-lln} in Appendix~\ref{app:lem:bootstrap-lln}, is $N_s^{\ast(a)}/n\xrightarrow{\mathrm{a.s.}} p_s^{(a)}$. We write $\tfrac{N_s^{\ast(a)}}{n} = \tfrac{N_s^{\ast(a)}}{\mathbb{E}[N_s^{\ast(a)}\mid\mathcal{D}_n]} \cdot \tfrac{\mathbb{E}[N_s^{\ast(a)}\mid\mathcal{D}_n]}{n}$.
We show that the first term converges to $1$ a.s.
% by the Borel--Cantelli lemma, using \citet[Lemma~6]{banerjee2025off}. 
This requires $\|\Delta_n^\ast\|$ to be eventually bounded, which we establish in Lemma~\ref{lem:mixing-bound} in Appendix~\ref{app:lem:mixing-bound}.
The second term converges to $p_s^{(a)}$ a.s.\ by Lemma~\ref{lem:statdist} in Appendix~\ref{app:lem:statdist}.
\textbf{Step~2 (Decomposition).}
For each $(s,a,t)$, we write $\sqrt{n}(\hat M_{s,t}^{\ast(a)} - \hat M_{s,t}^{(a)}) = S_n^{\ast,sat} + R_n^{\ast,sat}$, where
\[
S_n^{\ast,sat} := \frac{1}{\sqrt{n}\,p_s^{(a)}}\sum_{i=0}^{n-1}\Bigl(\indicator\{X_i^\ast=s,A_i^\ast=a,X_{i+1}^\ast=t\} - \hat M_{s,t}^{(a)}\indicator\{X_i^\ast=s,A_i^\ast=a\}\Bigr)
\]
is the leading martingale sum, and
\[
R_n^{\ast,sat} := \Bigl(\frac{n}{N_s^{\ast(a)}} - \frac{1}{p_s^{(a)}}\Bigr)\frac{1}{\sqrt{n}}\sum_{i=0}^{n-1}\Bigl(\indicator\{X_i^\ast=s,A_i^\ast=a,X_{i+1}^\ast=t\} - \hat M_{s,t}^{(a)}\indicator\{X_i^\ast=s,A_i^\ast=a\}\Bigr).
\]
By Step~1, $n/N_s^{\ast(a)}-1/p_s^{(a)}\xrightarrow{\mathrm{a.s.}}0$.
We show that the sum in $R_n^{\ast,sat}$ has conditional variance $O(n)$, and divided by $\sqrt{n}$ it is $O_\mathbb{P}(1)$.
By the Slutsky's theorem, $R_n^{\ast,sat}=o_\mathbb{P}(1)$, and it suffices to analyze $S_n^{\ast,sat}$.

\textbf{Step~3 (Conditional Martingale CLT).}
We show that the sum $S_n^{\ast,sat}$ is a martingale difference array under joint filtration $\mathcal{F}_{n,i}:=\sigma(\mathcal{D}_n,X_0^\ast,A_0^\ast,\ldots,X_i^\ast,A_i^\ast)$ and $\mathbb{P}$. 
Then, we show that the Lindeberg condition holds for all large enough $n$, the predictable quadratic variation converges a.s.\ to $\bar\Lambda_{sat,sat}$ via Lemma~\ref{lem:bootstrap-lln}, and $\hat M^{(a)}_{s,t}\to M^{(a)}_{s,t}$ a.s. 
We then apply the martingale CLT \citep[Corollary 3.1]{hall1980martingale} conditioned on $\mathcal{D}_n$, yielding $S_n^{\ast,sat}\xrightarrow{\mathrm{d}}\mathcal{N}(0,\bar\Lambda_{sat,sat})$. The multivariate statement follows by the Cramér--Wold theorem  \citep{cramer1936some}.
\end{proof}
We now move to the episodic setting.
Offline RL datasets are almost always episode-structured, arising from an environment that resets between episodes under a behavior policy that may be nonstationary or history-dependent across episodes.
Theorem~\ref{thm:bootstrap-M} does not directly cover this structure, as the episodes are separated by resets and do not form a single uninterrupted CMC.

Here, we observe an offline dataset $\mathcal{D}_n = \bigl\{(X^{(k)}_i, A^{(k)}_i, X^{(k)}_{i+1}): k = 1,\dots,K,\; i = 0,\dots,T-1 \bigr\}$ of $K$ episodes of fixed length $T$ with $n=KT$.
To apply Theorem~\ref{thm:bootstrap-M}, we embed the episodic data into a single chain by inserting, between episode $k$ and episode $k+1$, a pseudo-transition under a dummy reset action $a_\dagger\notin\mathcal{A}$ that maps $X_T^{(k)}$ to the observed initial state $X_0^{(k+1)}$ of the next episode.
The extended action space is $\widetilde{\mathcal{A}}=\mathcal{A}\cup\{a_\dagger\}$, and we use $\widetilde M$ to denote the resulting transition kernel on $\mathcal{S}\times\widetilde{\mathcal{A}}$.
We note that no structural restriction is imposed linking $X_0^{(k+1)}$ to $X_{T}^{(k)}$.

Formally, let $\tilde{\mathcal{D}} := \{(\widetilde{X}_i,\widetilde{A}_i,\widetilde{X}_{i+1})\}_{i=0}^{n'-1}$, with $n'=K(T+1)-1$, be the concatenated single-chain dataset in the format of Theorem~\ref{thm:bootstrap-M}, where $\widetilde{X}_i\in\mathcal{S}$ and $\widetilde{A}_i\in\widetilde{\mathcal{A}}$ are given by 
\[
\begin{aligned}
&\widetilde{X}_{(k-1)(T+1)+j} := X_j^{(k)}, \quad k=1,\ldots,K,\; j=0,\ldots,T; \\
&\widetilde{A}_{(k-1)(T+1)+j} := A_j^{(k)}, \quad k=1,\ldots,K,\; j=0,\ldots,T-1; \\
&\widetilde{A}_{k(T+1)-1} := a_\dagger, \quad k=1,\ldots,K-1.
\end{aligned}
\]
The paired process $\{(\widetilde{X}_i,\widetilde{A}_i)\}_{i \ge 0}$ forms a CMC on $\mathcal{S}\times\widetilde{\mathcal{A}}$ with transition kernel $\widetilde{M}$, where $\widetilde{M}^{(a)}_{s,t} = M^{(a)}_{s,t}$ for $a\in\mathcal{A}$, and we can take $\widetilde{M}^{(a_\dagger)}$ as any stochastic kernel on $\mathcal{S}$ such that $\widetilde{M}^{(a_\dagger)}_{s,t} > 0$ for all $s,t \in \mathcal S$, since $a_\dagger$-transitions are excluded from $\hat{\mathbf{M}}$.

The following proposition extends Theorem~\ref{thm:bootstrap-M} to the episodic setting.
\begin{proposition}
\label{prop:bootstrap-M-episodic}
Given $\mathcal{D}_n = \bigl\{(X^{(k)}_i, A^{(k)}_i, X^{(k)}_{i+1})\bigr\}_{i \in \{0,\dots,T-1\} }$, suppose the CMC $\{(\widetilde{X}_i,\widetilde{A}_i)\}$ with transition kernel $\widetilde{M}$ satisfies Assumptions~\ref{ass:return-time-growth}--\ref{ass:semi-ergodic} on $\mathcal{S}\times\widetilde{\mathcal{A}}$, and there exists $(s_0,a_0)\in\mathcal{S}\times \mathcal{A}$ with $M_{s_0,s_0}^{(a_0)}>0$.
Then, for $\mathbb{P}$-almost every sequence of datasets $(\mathcal{D}_n)_{n\ge 1}$, conditional on $\mathcal{D}_n$, 
\[
\sqrt{n}\,\mathrm{vec}(\hat{\mathbf{M}}^\ast - \hat{\mathbf{M}})
\;\xrightarrow{\mathrm{d}}\;
\mathcal{N}(0,\bar\Lambda), \text{ as $K\to\infty$ and $T$ fixed}.
\]
\end{proposition}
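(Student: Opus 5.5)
The plan is to reduce the episodic statement to Theorem~\ref{thm:bootstrap-M}, applied to the concatenated chain $\{(\widetilde X_i,\widetilde A_i)\}$ on $\mathcal S\times\widetilde{\mathcal A}$. The extended chain has length $n'=K(T+1)-1$ and, by hypothesis, satisfies Assumptions~\ref{ass:return-time-growth}--\ref{ass:semi-ergodic} together with the self-loop condition. Write $\tilde p_s^{(a)}$ for its ergodic occupation measure. For $a\in\mathcal A$ the counts $\widetilde N^{(a)}_{s,t}$ coincide with $N^{(a)}_{s,t}$, since the $a_\dagger$ pseudo-transitions are the only added ones. Hence the $\mathcal A$-block of the extended empirical kernel equals $\hat{\mathbf M}$. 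Theorem~\ref{thm:bootstrap-M} then gives, for almost every data sequence, the conditional limit
\[
\sqrt{n'}\,\mathrm{vec}(\widetilde{\mathbf M}^\ast-\hat{\widetilde{\mathbf M}})\xrightarrow{\mathrm d}\mathcal N(0,\widetilde\Lambda),
\]
with $\widetilde\Lambda$ block diagonal in $(s,a)$ with entries $(\tilde p_s^{(a)})^{-1}(\indicator\{t=t'\}M^{(a)}_{s,t}-M^{(a)}_{s,t}M^{(a)}_{s,t'})$.

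The first point to settle is that this extended bootstrap coincides with the episodic bootstrap described in the introduction, at least in the coordinates $a\in\mathcal A$. The episodic bootstrap generates $K$ episodes of length $T$ from $(\hat{\mathbf M},\hat\pi_b)$. The extended bootstrap chain instead uses $\hat\pi_b$ on $\widetilde{\mathcal A}$, under which $a_\dagger$ is taken with empirical frequency but at random times rather than every $T+1$ steps. I would resolve this by defining the bootstrap for the embedded chain with the deterministic reset schedule: $\widetilde A^\ast_{k(T+1)-1}=a_\dagger$, the next state drawn from the empirical initial-state distribution, and $\hat\pi_b$ restricted to $\mathcal A$ elsewhere. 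I would then verify that the proof of Theorem~\ref{thm:bootstrap-M} goes through for this time-inhomogeneous bootstrap policy. Its three steps use only three ingredients: the bootstrap LLN (Lemma~\ref{lem:bootstrap-lln}), the martingale structure of the increments, which holds for any bootstrap policy because transitions under $a\in\mathcal A$ are drawn from $\hat{\mathbf M}$, and the mixing bound $\|\Delta^\ast_{n'}\|$ (Lemma~\ref{lem:mixing-bound}). The martingale-difference property and Lindeberg condition are unaffected by the reset schedule.

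The main obstacle is the bootstrap LLN $N_s^{\ast(a)}/n'\to\tilde p_s^{(a)}$ for this scheme. I would argue it directly using the episode structure. With $T$ fixed, $N_s^{\ast(a)}=\sum_{k=1}^K Z_k$, where the $Z_k$ are i.i.d.\ given $\mathcal D_n$, bounded by $T$, with mean $\sum_{j<T}\hat{\mathbb P}^\ast(X_j=s,A_j=a)$. A conditional Hoeffding bound plus Borel--Cantelli gives $N_s^{\ast(a)}/K-\mathbb E[N_s^{\ast(a)}\mid\mathcal D_n]/K\to0$ a.s. This replaces the mixing-based step. For the conditional mean, I would use continuity of the finite-horizon marginals in the plug-in quantities $(\hat{\mathbf M},\hat\pi_b,\hat\mu_0)$, together with their almost sure convergence from Lemma~\ref{lem:statdist} applied to the extended chain, to get $\mathbb E[N_s^{\ast(a)}\mid\mathcal D_n]/K\to\lim_K N_s^{(a)}/K$. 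This last limit equals $(T+1)\tilde p_s^{(a)}$ by the ergodic assumption. If the behavior policy is history dependent across episodes, $\hat\pi_b$ is only a state-marginal policy, so matching these limits needs care. There I would fall back on the identity $\mathbb E[N_s^{\ast(a)}\mid\mathcal D_n]=$ (empirical state frequency)$\times\hat\pi_b(a\mid s)$, which holds to leading order once the empirical state frequencies are stationary for the plug-in dynamics.

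Finally I would rescale. Since $n'/n\to(T+1)/T$ and $\tilde p_s^{(a)}\,(T+1)/T=p_s^{(a)}$, with $p_s^{(a)}$ the occupation measure per real transition as in $\bar\Lambda$, Slutsky gives
\[
\sqrt n\,\mathrm{vec}(\hat{\mathbf M}^\ast-\hat{\mathbf M})=\sqrt{n/n'}\,\sqrt{n'}(\cdots)\xrightarrow{\mathrm d}\mathcal N\bigl(0,\tfrac{T}{T+1}\widetilde\Lambda\bigr)=\mathcal N(0,\bar\Lambda).
\]
Here the $\mathcal A$-coordinates are extracted by a fixed projection, which preserves weak convergence. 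The conclusion holds almost surely in the data sequence, as claimed.
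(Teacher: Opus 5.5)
Your first and last paragraphs are, in substance, the paper's whole proof. The paper applies Theorem~\ref{thm:bootstrap-M} to $\tilde{\mathcal D}$, keeps the $\mathcal S\times\mathcal A\times\mathcal S$ coordinates, and rescales using $\tilde p_s^{(a)}=\tfrac{T}{T+1}p_s^{(a)}$ and $n/n'\to T/(T+1)$. It does not engage with the mismatch you raise. So what its argument actually covers is the stationary bootstrap on $\mathcal S\times\widetilde{\mathcal A}$, where $a_\dagger$ is drawn from the empirical policy at random times and resets come from the estimated $a_\dagger$-row.

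Your repair for the deterministic-reset scheme breaks at the conditional-mean step of the bootstrap LLN. With $\mu_0$ the limit of $\hat\mu_0$, continuity of the $T$-step marginals gives
\[
\frac{\mathbb E[N_s^{\ast(a)}\mid\mathcal D_n]}{K}\;\to\;\sum_{j<T}\mathbb P_{\mu_0,\mathbf M,\bar\pi_b}\bigl(X_j=s,A_j=a\bigr).
\]
This is the expected number of visits to $(s,a)$ in a $T$-step episode run under the state-marginal policy $\bar\pi_b$. Nothing forces it to equal $\lim_K N_s^{(a)}/K=Tp_s^{(a)}$. In the single-chain proof that identification comes from two facts: $p$ is stationary for $K$ (Lemma~\ref{lem:ref-chain}(i)), and the bootstrap chain runs long enough to forget its start. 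An episode of fixed length $T$, restarted at every reset, never forgets its start.

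Your fallback does not close the gap. The identity $\mathbb E[N_s^{\ast(a)}\mid\mathcal D_n]=\mathbb E[N_s^{\ast}\mid\mathcal D_n]\,\hat\pi_b(a\mid s)$ holds exactly. But $\mathbb E[N_s^{\ast}\mid\mathcal D_n]/K$ is a transient $T$-step average, not the empirical state frequency. The limit $\tilde p$ of the empirical frequencies is stationary for the random-reset extended kernel, not for episodes with scheduled resets, and that random-reset chain is exactly the bootstrap you set out to replace.

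The failure is not a technicality. It already occurs for a Markov policy with independent episodes, not only under cross-episode history dependence. Take:
\begin{itemize}
\item $T=2$, $\mathcal S=\{1,2\}$, $\mathcal A=\{L,R\}$;
\item $M^{(L)}_{s,\cdot}=(0.9,0.1)$ and $M^{(R)}_{s,\cdot}=(0.1,0.9)$ for both $s$;
\item i.i.d.\ uniform initial states;
\item a behavior policy that plays $R$ with probability $0.9$ at $j=0$ and $L$ with probability $0.9$ at $j=1$.
\end{itemize}
The concatenated chain satisfies every hypothesis: fresh resets give $\bar\eta_{i,j}=0$ beyond the next reset, all occupancies on $\mathcal S\times\widetilde{\mathcal A}$ are positive, and $M^{(L)}_{1,1}>0$. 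The true per-episode occupancy of $(1,L)$ is $0.212$, giving $\bar\pi_b(L\mid 1)\approx 0.312$ and $\bar\pi_b(L\mid 2)\approx 0.597$. A two-step episode under $\bar\pi_b$ from a uniform start, however, visits $(1,L)$ about $0.300$ times on average. Hence the bootstrap variance of $\sqrt n(\hat M^{\ast(L)}_{1,1}-\hat M^{(L)}_{1,1})$ tends to about $0.71$ times the corresponding diagonal entry of $\bar\Lambda$.

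The deterministic-reset version therefore needs one of the following:
\begin{itemize}
\item an extra hypothesis, for example a stationary Markov behavior policy within episodes with i.i.d.\ starts, where $\bar\pi_b=\pi_b$, the two occupancies coincide, and your Hoeffding-plus-continuity argument goes through;
\item a resampling policy that tracks the within-episode step index, which would need its own LLN.
\end{itemize}
Otherwise, keep the random-reset bootstrap. For it, your first paragraph plus the rescaling is already a complete argument, identical to the paper's.
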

The conditions on $\{(\widetilde X_i,\widetilde A_i)\}$ are satisfied whenever the individual episodes satisfy Assumptions~\ref{ass:return-time-growth}--\ref{ass:semi-ergodic}.
More generally, they accommodate policy dependence across episodes, since they are imposed on the concatenated chain rather than on individual episodes.
% They , and more generally under policy-dependent conditions such as those in \citet[Example~2]{su2025centrallimittheoremstransition}.
The proof of this proposition can be found in Appendix~\ref{app:proof-cor-episodic}.

Before moving to downstream RL applications, we note two important considerations in the model-based bootstrap construction. First, we impose the assumptions of Proposition~\ref{prop:bootstrap-M-episodic} with respect to the original observed data-generating process, not with respect to a fitted reset mechanism. 
Thus, the model-based bootstrap does not require estimating or modeling the initial state (reset) distribution. 
We therefore initialize each bootstrap episode by setting $X_0^{\ast(k)} := X_0^{(k)}$, copying the observed start states directly from $\mathcal{D}_n$. 
This is important because, in the episodic setting, the reset mechanism may be nonstationary or history-dependent and is not identifiable from $D_n$ alone. 
Proposition~\ref{prop:bootstrap-M-episodic} confirms that this choice does not affect bootstrap distributional consistency.

Second, we estimate the behavior policy by the one-step empirical estimator $\hat\pi_b(a\mid s) = N_s^{(a)}/N_s$. 
Theorem~\ref{thm:bootstrap-M} and Proposition~\ref{prop:bootstrap-M-episodic} confirm this one-step estimate suffices even when $\pi_b$ is nonstationary or history-dependent, as the asymptotic distributions depend on the ergodic occupation measure $p_s^{(a)}$ induced by $\pi_b$ rather than its history-dependent structure.

We now move to the downstream RL applications of Theorem~\ref{thm:bootstrap-M} and Proposition~\ref{prop:bootstrap-M-episodic}.
\subsection{Applications to OPE and OPR}
\label{sec:ope-opr-applications}
We consider downstream RL tasks (OPE and OPR) under the offline infinite-horizon discounted RL setting with a known reward function $r:\mathcal{S}\times\mathcal{A}\to\mathbb{R}$.
OPE concerns using $\mathcal{D}_n$ to estimate the value function $V_\pi$ and $Q$-function $Q_\pi$ for a fixed stationary target policy $\pi$.
Let $\Pi = \operatorname{diag}(\pi_1,\dots,\pi_S)\in\mathbb{R}^{S\times SA}$ denote the associated policy matrix, where $\pi_s := [\pi(s,1),\ldots,\pi(s,A)]$.
Vectors $V_\pi$ and $Q_\pi$ are the unique solutions to the Bellman equations \citep{bertsekas2011dynamic,agarwal2019reinforcement}
\begin{equation}
\label{eq:bellman}
V_\pi = (I-\gamma\Pi\mathbf{M})^{-1}g, \qquad Q_\pi = (I-\gamma\mathbf{M}\Pi)^{-1}r,
\end{equation}
where $0<\gamma<1$ is the discount factor, $g(s):=\sum_a \pi(s,a)r(s,a)$, $g := (g(s): s \in \mathcal{S})$, and $r := (r(s,a): s\in\mathcal{S}, a\in\mathcal{A})$.

OPR concerns using $\mathcal{D}_n$ to estimate the optimal policy $\pi_\star$, optimal value function $V_\star$ and optimal $Q$-function $Q_\star$, which are the unique solutions to the optimal Bellman equations
\begin{equation}
\label{eq:bellman-opt}
Q_\star(s,a) = r(s,a) + \gamma\sum_{t\in\mathcal{S}} M^{(a)}_{s,t}\max_{a'\in\mathcal{A}} Q_\star(t,a'),
\qquad
V_\star(s) = \max_{a\in\mathcal{A}} Q_\star(s,a),
\end{equation}
and the optimal policy satisfies $\pi_\star(s)\in\arg\max_{a\in\mathcal{A}}Q_\star(s,a)$ for each $s\in\mathcal{S}$.
We obtain plug-in estimators $\hat V_\pi$ and $\hat Q_\pi$ for OPE by replacing $\mathbf{M}$ with $\hat{\mathbf{M}}$ in \eqref{eq:bellman}:
\begin{equation}
\label{eq:plugin}
\hat V_\pi = (I-\gamma\Pi\hat{\mathbf{M}})^{-1}g,
\qquad
\hat Q_\pi = (I-\gamma\hat{\mathbf{M}}\Pi)^{-1}r.
\end{equation}
For OPR, we obtain plug-in estimators $\hat Q_\star$, $\hat V_\star$, and $\hat\pi_\star$ by solving~\eqref{eq:bellman-opt} with $M^{(a)}_{s,t}$ replaced by $\hat M^{(a)}_{s,t}$.

We use the bootstrap replicates $\hat{\mathbf{M}}^{\ast(1)},\dots,\hat{\mathbf{M}}^{\ast(B)}$ to obtain bootstrap replicates for OPE and OPR targets as follows.
In the $j$-th replicate, we compute bootstrap plug-in value and $Q$-function estimators $\hat V_\pi^{\ast(j)}$ and $\hat Q_\pi^{\ast(j)}$ for OPE by replacing $\mathbf{M}$ with $\hat{\mathbf{M}}^{\ast(j)}$ in \eqref{eq:bellman}:
\begin{equation}
\hat V_\pi^{\ast(j)} = (I-\gamma\Pi\hat{\mathbf{M}}^{\ast(j)})^{-1}g,
\qquad
\hat Q_\pi^{\ast(j)} = (I-\gamma\hat{\mathbf{M}}^{\ast(j)}\Pi)^{-1}r.
\end{equation}
For OPR, we obtain bootstrap plug-in estimators $\hat Q_\star^{\ast(j)}$, $\hat V_\star^{\ast(j)}$, and $\hat\pi_\star^{\ast(j)}$ by solving~\eqref{eq:bellman-opt} with $M^{(a)}_{s,t}$ replaced by $\hat M^{\ast (a,j)}_{s,t}$.
We repeat this process independently for $j=1,\ldots,B$ to obtain $B$ bootstrap replicates $\bigl\{\hat V_\pi^{\ast(j)},\hat Q_\pi^{\ast(j)},\hat V_\star^{\ast(j)}, \hat Q_\star^{\ast(j)}\bigr\}_{j=1}^B$.

We construct CIs for any coordinate of $V_\pi$ from the $B$ bootstrap replicates as follows.
Recall that we denote the plug-in estimator by $\hat V_\pi$, and the bootstrap replicates by $\hat V_\pi^{\ast (1)},\dots,\hat V_\pi^{\ast (B)}$.
We further denote the empirical $\alpha$-quantile of $\{\hat V_\pi^{\ast (j)}(s)\}_{j=1}^B$ for some $s \in \mathcal S$ by $q_\alpha^*$.
The $100(1-\alpha)\%$ bootstrap percentile CI for $V_\pi(s)$ is $\bigl[q_{\alpha/2}^*,\;q_{1-\alpha/2}^*\bigr]$.
The bootstrap pivot CI is $\bigl[2\hat V_\pi(s) - q_{1-\alpha/2}^*,\;2\hat V_\pi(s) - q_{\alpha/2}^*\bigr]$.
The CI constructions for $Q_\pi$, $V_\star$ and $Q_\star$ are analogous.
Constructing CIs for OPE and OPR targets requires solving the Bellman equation for an additional $O(B)$ times, negligible compared to the $O(Bn)$ bootstrap simulation cost.

We now state the bootstrap distributional consistency results for the plug-in value and $Q$-function estimators for OPE and OPR, both in the single-chain and episodic settings.
We define the asymptotic covariance matrices
\begin{align*}
\Sigma_V^\pi &:= \gamma^2\!\left[(I-\gamma\Pi\mathbf{M})^{-1}\Pi\otimes V_\pi^\top\right]\!\bar\Lambda\!\left[\Pi^\top(I-\gamma\mathbf{M}^\top\Pi^\top)^{-1}\otimes V_\pi\right],\\[0.4em]
\Sigma_Q^\pi &:= \gamma^2\!\left[(I-\gamma\mathbf{M}\Pi)^{-1}\otimes Q_\pi^\top\Pi^\top\right]\!\bar\Lambda\!\left[(I-\gamma\Pi^\top\mathbf{M}^\top)^{-1}\otimes \Pi Q_\pi\right].
\end{align*}
We further define $\Pi_\star$ as the policy matrix associated with $\pi_\star$, and $\pi'$ as any policy with policy matrix $\Pi'$.
The following proposition then establishes the bootstrap distributional consistency results to the plug-in value and $Q$-function estimators for OPE and OPR, both in the single-chain and episodic settings.
\begin{proposition}[Bootstrap Validity for OPE and OPR]
\label{prop:bootstrap-vq}
Under the conditions of Theorem~\ref{thm:bootstrap-M} (single-chain) or Proposition~\ref{prop:bootstrap-M-episodic} (episodic), for any fixed target policy $\pi$,
\[
\sqrt{n}\bigl(\hat V_\pi^\ast - \hat V_\pi\bigr) \;\xrightarrow{\mathrm{d}}\; \mathcal{N}(0,\Sigma_V^\pi),
\qquad
\sqrt{n}\bigl(\hat Q_\pi^\ast - \hat Q_\pi\bigr) \;\xrightarrow{\mathrm{d}}\; \mathcal{N}(0,\Sigma_Q^\pi),
\]
conditionally on $\mathcal{D}_n$, in probability, as $n\to\infty$.
Matrices $\Sigma_V^\pi$ and $\Sigma_Q^\pi$ are the asymptotic covariances of $\sqrt{n}(\hat V_\pi - V_\pi)$ and $\sqrt{n}(\hat Q_\pi - Q_\pi)$ given in \citet[Theorem~2]{su2025centrallimittheoremstransition}, and $\Sigma_V^{\pi_\star}$ is the asymptotic covariance of $\sqrt{n}(\hat V_\star - V_\star)$ given in \citet[Theorem~3]{su2025centrallimittheoremstransition}.

In addition, if the following non-degeneracy condition
\[
\min_{s\in\mathcal{S}} \min_{a \neq \pi_\star(s)}\bigl\{Q_\star(s, \pi_\star(s))-Q_\star(s,a)\bigr\}>0
\]
holds, then we have
\[
\sqrt{n}\bigl(\hat V_\star^\ast - \hat V_\star\bigr) \;\xrightarrow{\mathrm{d}}\; \mathcal{N}(0,\Sigma_V^{\pi_\star}), 
\qquad 
\sqrt{n}\bigl(\hat Q_\star^\ast - \hat Q_\star\bigr) \;\xrightarrow{\mathrm{d}}\; \mathcal{N}(0,\Sigma_Q^{\pi_\star}),
\]
conditionally on $\mathcal{D}_n$, in probability, as $n\to\infty$.
\end{proposition}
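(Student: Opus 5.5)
We plan to obtain Proposition~\ref{prop:bootstrap-vq} from Theorem~\ref{thm:bootstrap-M} (or Proposition~\ref{prop:bootstrap-M-episodic}) via the functional delta method for the bootstrap, e.g.\ in the form of van der Vaart's Theorem~23.9. That result needs three ingredients. First, $\sqrt n\,\mathrm{vec}(\hat{\mathbf M}-\mathbf M)\xrightarrow{d}\mathcal N(0,\bar\Lambda)$ unconditionally, which is \citet[Corollary~1]{su2025centrallimittheoremstransition}. Second, the conditional weak convergence $\sqrt n\,\mathrm{vec}(\hat{\mathbf M}^\ast-\hat{\mathbf M})\xrightarrow{d}\mathcal N(0,\bar\Lambda)$ given $\mathcal D_n$; this is a.s.\ by our theorems, hence in probability. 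Third, the target functional $\phi$ must be Hadamard differentiable at $\mathbf M$ tangentially to the support of the limit. Since everything is finite dimensional, Hadamard differentiability reduces to ordinary (Fr\'echet) differentiability on a neighbourhood of $\mathbf M$ in $\mathbb R^{SA\times S}$. It is also legitimate to restrict to row-stochastic matrices, because $\hat{\mathbf M}$ and $\hat{\mathbf M}^\ast$ are stochastic; this holds once every $N_s^{(a)},N_s^{\ast(a)}>0$, which happens eventually a.s.\ by Step~1 and Lemma~\ref{lem:bootstrap-lln}. The conclusion then holds conditionally in probability, which matches the statement.

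For OPE, we take $\phi_V(\mathbf M)=(I-\gamma\Pi\mathbf M)^{-1}g$ and $\phi_Q(\mathbf M)=(I-\gamma\mathbf M\Pi)^{-1}r$. For $\mathbf M$ near the stochastic set, $\|\gamma\Pi\mathbf M\|_\infty<1$, so the inverses exist and are analytic by the Neumann series. Matrix-inverse calculus gives the derivatives
\[
D\phi_V(\mathbf M)[H]=\gamma(I-\gamma\Pi\mathbf M)^{-1}\Pi H V_\pi,\qquad
D\phi_Q(\mathbf M)[H]=\gamma(I-\gamma\mathbf M\Pi)^{-1}H\Pi Q_\pi .
\]
We then vectorise using $\mathrm{vec}(ABC)=(C^\top\otimes A)\mathrm{vec}(B)$, being careful with the paper's row-block stacking convention for $\mathrm{vec}$. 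The covariance of the limit, $D\phi\,\bar\Lambda\,D\phi^\top$, is then exactly $\Sigma_V^\pi$, $\Sigma_Q^\pi$. This coincides with \citet[Theorem~2]{su2025centrallimittheoremstransition}, so we only need to check that the Kronecker expressions agree.

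For OPR, the main obstacle is that $\mathbf M\mapsto Q_\star(\mathbf M)$, defined through the $\max$, is not everywhere differentiable; this is where the non-degeneracy (gap) condition enters. The plan has three steps.
\begin{enumerate}
\item Show that $\mathbf M\mapsto Q_\star(\mathbf M)$ is Lipschitz in sup-norm near $\mathbf M$. This follows from the $\gamma$-contraction of the optimal Bellman operator, which gives $\|Q_\star(\mathbf M')-Q_\star(\mathbf M)\|_\infty\le \gamma\|r\|_\infty(1-\gamma)^{-2}\|\mathbf M'-\mathbf M\|$.
\item Use the gap $\delta>0$: on a neighbourhood $U$ of $\mathbf M$, with $\|Q_\star(\mathbf M')-Q_\star(\mathbf M)\|_\infty<\delta/2$, the greedy policy is unique and equals $\pi_\star$.
\item Conclude that on $U$ we have $Q_\star(\mathbf M')=\phi_Q^{\pi_\star}(\mathbf M')$ and $V_\star(\mathbf M')=\phi_V^{\pi_\star}(\mathbf M')$. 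These maps are smooth, so Hadamard differentiability at $\mathbf M$ follows from the OPE case with $\Pi=\Pi_\star$, and the covariances are $\Sigma_V^{\pi_\star}$, $\Sigma_Q^{\pi_\star}$.
\end{enumerate}
As a more direct alternative, note that $\hat{\mathbf M}\to\mathbf M$ a.s.\ and $\hat{\mathbf M}^\ast-\hat{\mathbf M}=O_{\mathbb P}(n^{-1/2})$ conditionally. Hence, with conditional probability tending to one, both $\hat{\mathbf M}$ and $\hat{\mathbf M}^\ast$ lie in $U$. In that case $\sqrt n(\hat V_\star^\ast-\hat V_\star)=\sqrt n(\phi_V^{\pi_\star}(\hat{\mathbf M}^\ast)-\phi_V^{\pi_\star}(\hat{\mathbf M}))$, and the OPE result with $\pi=\pi_\star$ applies directly. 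I would present this localisation argument as the main step, since it also makes clear why the gap is needed.
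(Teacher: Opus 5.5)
Your proposal is correct and follows essentially the same route as the paper. Both arguments apply the bootstrap delta method (van der Vaart, Theorem~23.9), get Hadamard differentiability of the OPE maps from the resolvent identity, and handle OPR by using the gap condition plus continuity of $\mathbf M'\mapsto Q_\star(\mathbf M')$ to make the greedy policy locally constant, so that the optimal functionals coincide near $\mathbf M$ with the fixed-policy maps under $\Pi_\star$. Your quantitative Lipschitz bound, your explicit listing of the unconditional CLT as a hypothesis of Theorem~23.9, and your localisation of both $\hat{\mathbf M}$ and $\hat{\mathbf M}^\ast$ in $U$ are minor refinements; the paper leaves that hypothesis implicit and localises only $\hat{\mathbf M}$.
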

This proposition guarantees that percentile and pivot bootstrap CIs are asymptotically valid for $V_\pi$, $Q_\pi$, $V_\star$ and $Q_\star$.
The proof of this proposition can be found in Appendix~\ref{app:proof-cor-vq}.

\section{Numerical Experiments}
\label{sec:experiments}
Following existing benchmarks, which broadly focus on the stationary setting, we replicate the small state-space RiverSwim experiment of \citet{zhu2024uncertainty} to evaluate finite-sample CI coverage for the proposed model-based bootstrap.
The RiverSwim MDP has $S=6$ states and $A=2$ actions, with rewards $r(1,0)=1$, $r(6,1)=10$, and $r(s,a)=0$ otherwise, and discount factor $\gamma=0.95$.
Figure~\ref{fig:riverswim} in Appendix~\ref{app:tables} illustrates the transition and reward structure of the RiverSwim setup.
The experiment is deliberately challenging for inference because the behavior policy visits upstream states (near state $6$) rarely, while the optimal policy is nearly degenerate at the left-most state $1$.

We use the stationary behavior policy $\pi_b(1\mid s)=0.8$ for all states, $B=1{,}000$ bootstrap replicates, and $N_{\mathrm{reps}}=1{,}000$ Monte Carlo replications.
For OPE, we evaluate three fixed target policies including uniform ($\pi(1\mid s)=0.5$), mostly-right ($\pi(1\mid s)=0.8$), and mostly-left ($\pi(1\mid s)=0.2$) policies for horizon length $T=50$.
For OPR, we evaluate the optimal value and action-value targets across horizons $T\in\{10,50,100\}$.
We do not evaluate all target policies with different episode lengths in the OPE study to keep the numerical results interpretable and tractable.
Table~\ref{tab:exp-settings} in Appendix~\ref{app:tables} summarizes the settings of all experiments.
We compare percentile and pivot CIs from the proposed model-based bootstrap with the episodic bootstrap \citep{hao2021bootstrapping} and plug-in CLT CIs \citep{zhu2024uncertainty, su2025centrallimittheoremstransition}.

Across the OPE and OPR tasks, the model-based bootstrap, especially the percentile CI, substantially improves coverage relative to the episodic bootstrap and plug-in CLT baselines.
For OPE at $n\ge 500$, and for OPR with $T\in\{50,100\}$ at $n\ge 500$, the model-based bootstrap percentile CI attains near-nominal coverage at nominal $50\%$ (about $0.48$--$0.54$), $90\%$ (about $0.87$--$0.94$), and $95\%$ (about $0.92$--$0.97$).
The CI coverage for model-based bootstrap is weakest for rarely visited state--action pairs (near state $6$) and for the short-horizon OPR setting $T=10$, as expected from the strained asymptotic and non-degeneracy conditions.
Detailed coverage tables and figures are reported in Appendix~\ref{app:tables}.
%%%%%%%%%%%%%%%%%%%%%%%%%%%%%%%%%%%%%%%%%%%%%%%%%%%%%%%%%%%%
\section{Conclusion}
\label{sec:conclusion}
%%%%%%%%%%%%%%%%%%%%%%%%%%%%%%%%%%%%%%%%%%%%%%%%%%%%%%%%%%%%

We propose and analyze a model-based bootstrap for finite CMCs with possibly nonstationary or history-dependent control policies, motivated by the offline RL setting.
We establish the bootstrap distributional consistency of the transition estimators in both the single long chain and episodic RL settings. 
We lift the bootstrap distributional consistency to OPE and OPR targets, exploiting the delta method by verifying Hadamard differentiability of the Bellman operators.

While our theoretical and numerical results are promising, we acknowledge several important considerations on the limitations and future directions of our work. 
First, we assume that the reward function is known in downstream RL tasks to avoid confusion in bootstrap targets. 
When the reward function is
unknown and estimated from data, our bootstrap validity for OPE and OPR continues to hold if the reward estimate converges in probability
to the mean reward, although finite-sample performance may be affected by
the additional estimation error, which warrants further study.
Second, we limit our current analysis to a finite state--action space. 
Extending our result to function approximation or continuous state-action spaces is an important future direction.
Third, we only consider the asymptotic regime of fixed $T$ and $K\to\infty$ for the episodic setting.
We do not consider the asymptotic regime where $K$ is fixed and greater than $1$ and $T\to\infty$, and it remains an open question whether the bootstrap is consistent in this regime.
Lastly, further experiments with non-stationary behavior policies, which the theory accommodates, remain to be conducted.

% \section*{References}
\FloatBarrier
\bibliographystyle{abbrvnat}
\bibliography{biblio}
\newpage
%%%%%%%%%%%%%%%%%%%%%%%%%%%%%%%%%%%%%%%%%%%%%%%%%%%%%%%%%%%%
\FloatBarrier
\appendix

%%%%%%%%%%%%%%%%%%%%%%%%%%%%%%%%%%%%%%%%%%%%%%%%%%%%%%%%%%%%
\section{Sufficient Conditions for Assumption~\ref{ass:eta-mix}}
\label{app:mixing}
%%%%%%%%%%%%%%%%%%%%%%%%%%%%%%%%%%%%%%%%%%%%%%%%%%%%%%%%%%%%

Assumption~\ref{ass:eta-mix} requires a uniform bound on the cumulative weak $\bar\eta$-mixing coefficients of the joint state--action process $\{(X_i,A_i)\}_{i\ge 0}$, which can be difficult to verify directly. 
Following \citet{su2025centrallimittheoremstransition,banerjee2025off}, we give two separate conditions---one for the action process and one for the state process---whose conjunction implies Assumption~\ref{ass:eta-mix}. 
% The result is a special case of \citet[Lemma~3]{banerjee2025off}.

\medskip\noindent\textbf{Mixing of actions.}
% Let $H_0^i := (X_0,A_0,\ldots,X_i,A_i)$ denote the history through time $i$, with realized value $h_0^i$. 
For integers $p > i+j > i \ge 0$, we define the $\gamma$-mixing coefficient
\begin{align*}
\gamma_{p,j,i} :=
\sup_{\substack{s_p,\,\history_{i+j}^{p-1},\,\history_0^i \\ \prob\lp X_p=s_p,\,\History_{i+j}^{p-1}=\history_{i+j}^{p-1},\,\History_0^i=\history_0^i\rp>0}}
&\Bigl\|
\prob\lp A_p=\cdot \,\Big|\, X_p=s_p,\,\History_{i+j}^{p-1}=\history_{i+j}^{p-1},\,\History_0^i=\history_0^i\rp \\
&\quad -\prob\lp A_p=\cdot \,\Big|\, X_p=s_p,\,\History_{i+j}^{p-1}=\history_{i+j}^{p-1}\rp
\Bigr\|_{\mathrm{TV}},
\end{align*}
where $\|\cdot\|_{\mathrm{TV}}$ denotes the total variation distance (TV).
Thus $\gamma_{p,j,i}$ measures the worst-case sensitivity of the action distribution at time $p$ to the distant past $H_0^i$, given the more recent history $H_{i+j}^{p-1}$. 
It thereby quantifies how much the dependence on old history of the behavior policy can alter future action selection.
We now impose a summability condition on these coefficients.
\begin{assumption}[Mixing of Actions]
\label{ass:action-mix}
There exists a constant $C\ge 0$ such that
\[
\sup_{i\ge 1}\sum_{j=1}^{\infty}\sum_{p=i+j+1}^{\infty}\gamma_{p,j,i} \;\le\; \frac{C}{2}.
\]
\end{assumption}
When the sequence of actions $a_p$ is Markovian, i.e. it depends only on the current state $X_p$, then $\gamma_{p,j,i}=0$ for all $p,j,i$; thus, Assumption~\ref{ass:action-mix} holds with $C=0$. 
This covers any stationary or non-stationary Markov behavior policy.

\medskip\noindent\textbf{Mixing of states.}
For all integers $j\ge i\ge 0$, define the $\theta$-mixing coefficient
\begin{align*}
\bar\theta_{i,j} :=
\underset{\substack{(s_1,a_1), (s_2,a_2) \in \mathcal S \times \mathcal A, \;(s_1,a_1)\neq(s_2,a_2)\\ \prob(X_i=s_1,A_i=a_1)>0,\;\prob(X_i=s_2,A_i=a_2)>0}}{\sup}
&\Bigl\|\prob\lp X_j=\cdot\mid X_i=s_1,A_i=a_1\rp \\
&\quad -\prob\lp X_j=\cdot\mid X_i=s_2,A_i=a_2\rp\Bigr\|_{\mathrm{TV}}.
\end{align*}
Thus $\bar\theta_{i,j}$ measures the rate at which the state process forgets its initial state--action pair, extending Dobrushin's coefficients \citep{dobrushin1956central} to the controlled setting.
We now impose a summability condition on these coefficients.
\begin{assumption}[Mixing of States]
\label{ass:state-mix}
There exists a constant $C_\theta\ge 0$ such that
\[
\sup_{i\ge 1}\sum_{j=i+1}^{\infty}\bar\theta_{i,j} \;\le\; C_\theta.
\]
\end{assumption}

Neither assumption implies the other. A CMC with deterministic actions satisfies Assumption~\ref{ass:action-mix} with $C=0$ regardless of state mixing. Conversely, a CMC in which states are drawn i.i.d.\ satisfies Assumption~\ref{ass:state-mix} with $C_\theta=0$ regardless of the behavior policy. The two conditions are therefore independent.

Finally, if Assumptions~\ref{ass:action-mix} and \ref{ass:state-mix} hold, \citet[Lemma~3]{banerjee2025off} implies that $\|\Delta_n\|\le C+C_\theta+1$ for all $n$. 
In particular, Assumption~\ref{ass:eta-mix} holds with $C_\Delta = C+C_\theta+1$.

%%%%%%%%%%%%%%%%%%%%%%%%%%%%%%%%%%%%%%%%%%%%%%%%%%%%%%%%%%%%
\section{Auxiliary Lemmas for Theorem~\ref{thm:bootstrap-M}}
\label{app:lemmas}
%%%%%%%%%%%%%%%%%%%%%%%%%%%%%%%%%%%%%%%%%%%%%%%%%%%%%%%%%%%%

\subsection{Consistency of the Empirical Behavior Policy}
\label{app:lem:policy-consistency}
Before we introduce the consistency of the empirical behavior policy, we introduce some requisite notation.
We define a stationary behavior policy $\bar\pi_b(a\mid s) := p_s^{(a)}/p_s$, where $p_s := \sum_{a'\in\mathcal{A}} p_s^{(a')}$ is the sum of the ergodic occupation measure by actions.
We now state the consistency of the empirical behavior policy.
\begin{lemma}[Consistency of $\hat \pi_b$] 
\label{lem:policy-consistency}
Under Assumptions~\ref{ass:return-time-growth}--\ref{ass:semi-ergodic},
\[
\hat\pi_b(a\mid s) \;\xrightarrow{\mathrm{a.s.}}\; \bar\pi_b(a\mid s)
\qquad \text{for every }(s,a)\in\mathcal{S}\times\mathcal{A}.
\]
\end{lemma}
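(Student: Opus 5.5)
The plan is to reduce the statement to a strong law of large numbers for the normalized visitation counts, $N_s^{(a)}/n \xrightarrow{\mathrm{a.s.}} p_s^{(a)}$ for every $(s,a)$. Once this holds, summing over the finitely many actions gives $N_s/n = \sum_{a'} N_s^{(a')}/n \xrightarrow{\mathrm{a.s.}} p_s$. Assumption~\ref{ass:semi-ergodic} gives $p_s \ge p_s^{(a)} \ge p_0 > 0$. The map $(x,y)\mapsto x/y$ is continuous on $\{y>0\}$, so the continuous mapping theorem applied to $\hat\pi_b(a\mid s) = (N_s^{(a)}/n)/(N_s/n)$ yields $\hat\pi_b(a\mid s)\xrightarrow{\mathrm{a.s.}} p_s^{(a)}/p_s = \bar\pi_b(a\mid s)$. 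A single null set suffices for all $(s,a)$, since there are only finitely many pairs.

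The substantive step is the LLN for $N_s^{(a)}/n$. I would split it as
\[
\frac{N_s^{(a)}}{n} - p_s^{(a)} = \frac{1}{n}\sum_{i=0}^{n-1}\Bigl(\indicator\{X_i=s,A_i=a\}-\mathbb{P}(X_i=s,A_i=a)\Bigr) + \Bigl(\frac{1}{n}\sum_{i=0}^{n-1}\mathbb{P}(X_i=s,A_i=a) - p_s^{(a)}\Bigr).
\]
The second term is deterministic and tends to zero by the definition of the ergodic occupation measure (Definition~\ref{def:semi-ergodic}), whose existence is part of Assumption~\ref{ass:semi-ergodic}. The first term is a centered sum of bounded functions of the CMC.

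To control the first term I would use a concentration inequality for functions of the path of a weakly $\bar\eta$-mixing process, of the form used in \citet{banerjee2025off} and \citet{su2025centrallimittheoremstransition}. This is a McDiarmid/Azuma-type bound obtained from a Doob martingale $\mathbb{E}[N_s^{(a)}\mid\mathcal{F}_i]$. Changing the state--action pair at time $i$ alters the conditional expectation of the future count by at most $1+\bar\eta_{i,i+1}+\cdots+\bar\eta_{i,n}\le\|\Delta_n\|$. The martingale differences are therefore bounded by $\|\Delta_n\|\le C_\Delta$ under Assumption~\ref{ass:eta-mix}. This gives
\[
\mathbb{P}\Bigl(\bigl|N_s^{(a)}-\mathbb{E}N_s^{(a)}\bigr|\ge n\epsilon\Bigr)\le 2\exp\!\bigl(-n\epsilon^2/(2C_\Delta^2)\bigr).
\]
This bound is summable in $n$, so Borel--Cantelli gives almost sure convergence of the first term to zero. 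Alternatively, Assumption~\ref{ass:return-time-growth} could be used via a renewal-type argument on the return times, but the mixing route seems more direct. If the paper already has a lemma giving $N_s^{(a)}/n\to p_s^{(a)}$ a.s., as in the cited CLT paper, I would simply invoke it.

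The main obstacle is verifying the bounded-differences property of the Doob martingale under history-dependent policies. Conditioning on $H_0^{i}$ rather than on $(X_i,A_i)$ alone must be handled by the supremum over $h_0^{i-1}$ in Definition~\ref{def:eta-mix}, which is exactly what $\bar\eta$ is designed for. The remaining steps are routine.
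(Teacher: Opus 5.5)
Your proposal is correct and matches the paper's proof: the paper invokes the almost-sure law of large numbers $N_s^{(a)}/n\to p_s^{(a)}$ from \citet[Lemma~2]{su2025centrallimittheoremstransition}, sums over actions to get $N_s/n\to p_s\ge p_0>0$, and applies the continuous mapping theorem to the ratio. Your sketch of that law of large numbers is not needed here, since the paper simply cites it. The sketch uses a Doob-martingale concentration bound under bounded $\|\Delta_n\|$ plus Borel--Cantelli, the same device the paper itself uses later for the bootstrap counts in Lemma~\ref{lem:bootstrap-lln}.
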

\begin{proof}
As Assumptions \ref{ass:return-time-growth}--\ref{ass:semi-ergodic} hold, by \citet[Lemma~2]{su2025centrallimittheoremstransition}, $N_s^{(a)}/n \xrightarrow{\mathrm{a.s.}} p_s^{(a)}$ for every $(s,a)$. Summing over $a \in \mathcal A$,
\[
\frac{N_s}{n} = \sum_{a} \frac{N_s^{(a)}}{n} \;\xrightarrow{\mathrm{a.s.}}\; p_s.
\]
Assumption~\ref{ass:semi-ergodic} gives $p_s^{(a)} \ge p_0 > 0$; thus, $p_s \ge p_0 > 0$. The continuous mapping theorem then yields
\[
\hat\pi_b(a\mid s)
= \frac{N_s^{(a)}/n}{N_s/n}
\;\xrightarrow{\mathrm{a.s.}}\;
\frac{p_s^{(a)}}{p_s}
= \bar\pi_b(a\mid s). \qedhere
\]
\end{proof}

\subsection{Properties of the Reference State--Action Chain}
\label{app:lem:ref-chain}

The analysis rests on a reference state--action Markov chain that couples the true transition kernel $M$ with the stationary behavior policy $\bar\pi_b$.
We begin by formally introducing the transition kernel of the reference state-action chain.
\begin{definition}[Reference State--Action Chain]
\label{def:ref-chain}
The reference state--action chain on $\mathcal{S}\times\mathcal{A}$ has one-step transition kernel
\[
K\bigl((s,a),(t,b)\bigr) := M_{s,t}^{(a)}\,\bar\pi_b(b\mid t), \qquad (s,a),(t,b)\in\mathcal{S}\times\mathcal{A}.
\]
\end{definition}

We record the key structural properties of $K$ that are used throughout the proofs.

\begin{lemma}[Properties of the Reference State--Action Chain]
\label{lem:ref-chain}
Under Assumptions~\ref{ass:return-time-growth}--\ref{ass:semi-ergodic} and the self-loop condition $M_{s_0,s_0}^{(a_0)}>0$, we have
\begin{enumerate}
  \item[(i)] $p=(p_s^{(a)})_{(s,a) \in \mathcal S \times \mathcal A}$ is the unique stationary distribution of $K$,
  \item[(ii)] $K$ is irreducible and aperiodic on $\mathcal{S}\times\mathcal{A}$.
\end{enumerate}
\end{lemma}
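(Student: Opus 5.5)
The plan is to verify stationarity of $p$ directly from the Cesàro definition of the ergodic occupation measure (Definition~\ref{def:semi-ergodic}), and then derive irreducibility from the return-time Assumption~\ref{ass:return-time-growth}. The key structural fact is that $\bar\pi_b(b\mid t)=p_t^{(b)}/p_t>0$ for every pair, by Assumption~\ref{ass:semi-ergodic}. So $K$ puts positive mass on every state--action path whose state transitions have positive $M$-probability. Aperiodicity will come from the self-loop, and uniqueness from irreducibility on the finite space $\mathcal{S}\times\mathcal{A}$.

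\textbf{Stationarity.} By the Markov property of the CMC, for every $i$ and every $t\in\mathcal{S}$,
\[
\mathbb{P}(X_{i+1}=t)=\sum_{s,a}\mathbb{P}(X_i=s,A_i=a)\,M^{(a)}_{s,t}.
\]
I will average this identity over $i=0,\dots,n-1$. The left side equals $\tfrac1n\sum_{i=0}^{n-1}\mathbb{P}(X_i=t)$ up to an error of order $1/n$. Summing the definition of $p_t^{(b)}$ over $b$ shows that this average converges to $p_t$. Letting $n\to\infty$ on the right side gives
\[
\sum_{s,a}p_s^{(a)}M^{(a)}_{s,t}=p_t .
\]
Multiplying by $\bar\pi_b(b\mid t)=p_t^{(b)}/p_t$ then yields $\sum_{s,a}p_s^{(a)}K((s,a),(t,b))=p_t^{(b)}$, so $p$ is stationary for $K$.

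\textbf{Irreducibility.} This is the main obstacle, because a strictly positive stationary vector alone does not rule out several closed classes. I plan to use Assumption~\ref{ass:return-time-growth}. Finite conditional expected return times imply that, almost surely along the true trajectory, every pair $(s,a)$ is visited infinitely often, whatever history precedes it.

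Now take any $(s,a)$ and $(t,b)$. Since $(s,a)$ is visited with positive probability, there is some time $i$ and some history with $(X_i,A_i)=(s,a)$ of positive probability. Because $(t,b)$ is then hit almost surely later, there must exist a finite realized path
\[
(s,a)=(s_0,a_0'),(s_1,a_1'),\dots,(s_m,a_m')=(t,b)
\]
with positive probability under the true chain. Each step of such a path satisfies $M^{(a_k')}_{s_k,s_{k+1}}>0$; the action probabilities of the history-dependent $\pi_b$ are irrelevant here. Under $K$, the same path has probability
\[
\prod_k M^{(a_k')}_{s_k,s_{k+1}}\,\bar\pi_b(a_{k+1}'\mid s_{k+1})>0,
\]
again because $\bar\pi_b$ has full support. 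Hence every pair communicates with every other, and $K$ is irreducible.

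\textbf{Aperiodicity and uniqueness.} The self-loop condition gives
\[
K\bigl((s_0,a_0),(s_0,a_0)\bigr)=M^{(a_0)}_{s_0,s_0}\,\bar\pi_b(a_0\mid s_0)>0,
\]
so the state $(s_0,a_0)$ has period $1$. By irreducibility, the whole chain is then aperiodic. Finally, an irreducible chain on a finite state space has a unique stationary distribution, which by the first step must be $p$. This proves (i) and (ii).
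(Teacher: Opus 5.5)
Your proof is correct and takes the same route as the paper. Stationarity comes from the balance equation $\sum_{s,a}p_s^{(a)}M^{(a)}_{s,t}=p_t$ multiplied by $\bar\pi_b(b\mid t)$. Irreducibility comes from transferring a positive-probability path of the true CMC (which exists because return times are a.s.\ finite) to $K$, using the full support of $\bar\pi_b$. Aperiodicity comes from the self-loop, and uniqueness from irreducibility on a finite space. The one difference is that you justify the balance equation by Ces\`aro-averaging $\mathbb{P}(X_{i+1}=t)=\sum_{s,a}\mathbb{P}(X_i=s,A_i=a)M^{(a)}_{s,t}$, whereas the paper simply asserts it.
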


\begin{proof}
\textit{(i) Unique stationarity.}
% Let $p_s := \sum_{a'} p_s^{(a')}$. 
Since $p$ is the ergodic occupation measure under $(M,\pi_b)$, we have
\[
p_s = \sum_{(t,l)} p_t^{(l)}\,M_{t,s}^{(l)}.
\]
Therefore
\[
\sum_{(t,l)\in\mathcal{S}\times\mathcal{A}} p_t^{(l)}\,K\bigl((t,l),(s,a)\bigr)
= \Bigl(\sum_{(t,l)} p_t^{(l)} M_{t,s}^{(l)}\Bigr)\bar\pi_b(a\mid s)
= p_s\cdot\frac{p_s^{(a)}}{p_s} = p_s^{(a)},
\]
hence $p$ satisfies $pK=p$. An irreducible Markov chain on a finite state space has a unique stationary distribution; uniqueness then follows from part~(ii).

\textit{(ii) Irreducibility and aperiodicity.}
By Assumption~\ref{ass:return-time-growth}, in the original CMC with
control policy \(\pi_b\), the hitting and return times to every
state--action pair are finite a.s.; hence every state--action pair is hit
and returned to infinitely often a.s. Therefore, for any two
state--action pairs \((s,a),(t,b)\in\mathcal S\times\mathcal A\), there exists a
finite path from \((s,a)\) to \((t,b)\) with positive probability under the
original CMC.

By Assumption~\ref{ass:semi-ergodic}, \(p_s^{(a)}\ge p_0>0\) for all
\((s,a)\); hence
\[
\bar \pi_b(a\mid s)=\frac{p_s^{(a)}}{p_s}>0
\]
for all \((s,a)\in\mathcal S\times\mathcal A\). Therefore, every
positive-probability path segment under the original CMC has positive
probability under the reference state--action kernel \(K\), because each
state transition along the path has positive transition probability and
\(\bar\pi_b\) assigns positive probability to every action at every state.
Thus \(K\) is irreducible.

For aperiodicity, Assumption~\ref{ass:semi-ergodic} gives
\[
p_s = \sum_{a'} p_s^{(a')} \ge p_0 > 0
\qquad\text{and}\qquad
\bar\pi_b(a\mid s) = \frac{p_s^{(a)}}{p_s} > 0
\qquad\text{for all }(s,a).
\]

Then by the self-loop condition, we have
\[
K\bigl((s_0,a_0),(s_0,a_0)\bigr) = M_{s_0,s_0}^{(a_0)}\,\bar\pi_b(a_0\mid s_0) > 0.
\]
A positive self-loop in a finite irreducible chain forces the period of the entire chain to be one; thus, $K$ is aperiodic.
\end{proof}

\subsection{Eventual Mixing Bound for the Bootstrap CMC}
\label{app:lem:mixing-bound}
The bootstrap CMC $\{(X_i^\ast,A_i^\ast)\}$ replaces the true kernel $M$ and the stationary policy $\bar\pi_b$ with their empirical counterparts $\hat{M}$ and $\hat{\pi}_b$, respectively.
Let $\History_p^{\ast j}$ denote the history of the bootstrap CMC from time $p$ to $j$, and $h_p^{\ast j} := \{X_p^\ast=s_p, A_p^\ast=a_p, \ldots, X_j^\ast=s_j, A_j^\ast=a_j\}$.
We define the weak mixing coefficients for the bootstrap CMC $\{(X_i^\ast,A_i^\ast)\}$ as follows.
\begin{definition}[Weak Mixing Coefficients for the Bootstrap CMC]
% \label{def:eta-mix}
For $i<j\le n$, $\mathcal{T}\subseteq(\mathcal{S}\times\mathcal{A})^{n-j+1}$, $s_1,s_2\in\mathcal{S}$, and $a_1,a_2\in\mathcal{A}$, let $\eta_{i,j}^\ast(\mathcal{T},s_1,s_2,a_1,a_2,h_0^{\ast i-1},n):= \bigl|\mathbb{P}\bigl((X_j^\ast,A_j^\ast,\ldots,X_n^\ast,A_n^\ast)\in\mathcal{T}\mid X_i^\ast=s_1,A_i^\ast=a_1,H_0^{\ast i-1}=h_0^{\ast i-1}\bigr)-\mathbb{P}\bigl((X_j^\ast,A_j^\ast,\ldots,X_n^\ast,A_n^\ast)\in\mathcal{T}\mid X_i^\ast=s_2,A_i^\ast=a_2,H_0^{\ast i-1}=h_0^{\ast i-1}\bigr)\bigr|.$
The weak $\bar\eta$-mixing coefficient for the bootstrap CMC is 
\[
\Bar{\eta}_{i,j}^\ast(n):= \underset{\color{black}\substack{\Tcal,s_1,s_2,a_1,a_2,\history_0^{\ast i-1},\\\prob\lp X_i^\ast=s_1,A_i^\ast=a_1,\History_0^{\ast i-1}=\history_0^{\ast i-1}\rp>0,\\\prob\lp  X_i^\ast=s_2,A_i^\ast=a_2,\History_0^{\ast i-1}=\history_0^{\ast i-1}\rp>0}}{\sup} \eta_{i,j}^\ast(\Tcal,s_1,s_2,a_1,a_2,\history_0^{\ast i-1}, n).
\]
Then, the bootstrap CMC counterpart of $\|\Delta_n\|$ in Assumption~\ref{ass:eta-mix} is
$\|\Delta_n^\ast\| := \max_{1\le i\le n}(1+\bar\eta_{i,i+1}^\ast+\cdots+\bar\eta_{i,n}^\ast)$.
\end{definition}
We formally introduce the transition kernel of the Markov chain of bootstrap state--action pairs that is used throughout the proofs.
\begin{definition}[Bootstrap State--Action Chain]
\label{def:bootstrap-chain}
The \emph{bootstrap state--action chain} on $\mathcal{S}\times\mathcal{A}$ has one-step transition kernel
\[
K_n^\ast\bigl((s,a),(t,b)\bigr) := \hat M_{s,t}^{(a)}\,\hat\pi_b(b\mid t),
\qquad (s,a),(t,b)\in\mathcal{S}\times\mathcal{A}.
\]
\end{definition}
The following lemma shows that the bootstrap CMC $\{(X_i^\ast,A_i^\ast)\}$ inherits the mixing properties of the CMC $\{(X_i,A_i)\}$ in Assumption~\ref{ass:eta-mix} for all sufficiently large $n$, almost surely.
\begin{lemma}[Eventual Mixing Bound for Bootstrap Chain]
\label{lem:mixing-bound} 
There almost surely exist a finite (random) $N_0$ and a deterministic constant $C_\Delta^\ast<\infty$ such that
\[
\|\Delta_n^\ast\| \le C_\Delta^\ast \qquad \text{for all } n\ge N_0.
\]
\end{lemma}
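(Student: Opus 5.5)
The bootstrap CMC $\{(X_i^\ast,A_i^\ast)\}$ is, conditional on $\mathcal{D}_n$, a time-homogeneous Markov chain on $\mathcal{S}\times\mathcal{A}$ with kernel $K_n^\ast$ of Definition~\ref{def:bootstrap-chain}. The plan is to reduce the weak $\bar\eta^\ast$-mixing coefficients to total-variation contraction of powers of $K_n^\ast$. We then show that $K_n^\ast$ eventually inherits a uniform geometric contraction from the reference kernel $K$ of Definition~\ref{def:ref-chain}, for which Lemma~\ref{lem:ref-chain} gives irreducibility and aperiodicity.

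\textbf{Step 1 (reduction to kernel powers).} By the Markov property of the bootstrap chain, conditioning on $H_0^{\ast i-1}$ is irrelevant once $(X_i^\ast,A_i^\ast)$ is fixed. Moreover, the law of $(X_j^\ast,A_j^\ast,\ldots,X_n^\ast,A_n^\ast)$ given $(X_i^\ast,A_i^\ast)=(s,a)$ equals $\delta_{(s,a)}(K_n^\ast)^{j-i}$ composed with a fixed Markov kernel for the future path. Since applying a Markov kernel does not increase total variation, we get
\[
\bar\eta_{i,j}^\ast \le \sup_{(s_1,a_1),(s_2,a_2)} \bigl\|\delta_{(s_1,a_1)}(K_n^\ast)^{j-i}-\delta_{(s_2,a_2)}(K_n^\ast)^{j-i}\bigr\|_{\mathrm{TV}} =: \delta\bigl((K_n^\ast)^{j-i}\bigr),
\]
where $\delta(\cdot)$ is Dobrushin's ergodicity coefficient. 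This coefficient is submultiplicative: $\delta(PQ)\le\delta(P)\delta(Q)$.

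\textbf{Step 2 (uniform contraction of $K$ and transfer to $K_n^\ast$).} By Lemma~\ref{lem:ref-chain}(ii), $K$ is irreducible and aperiodic on a finite space. Hence it is primitive: there is $m$ with $K^m$ entrywise positive, and therefore $\delta(K^m)\le 1-SA\,\min_{x,y}K^m(x,y)=:1-2\kappa<1$. Next, $\hat M\to M$ a.s.\ by the consistency invoked in the proof of Theorem~\ref{thm:bootstrap-M} (equivalently \citet[Lemma~2]{su2025centrallimittheoremstransition}), and $\hat\pi_b\to\bar\pi_b$ a.s.\ by Lemma~\ref{lem:policy-consistency}. Thus $K_n^\ast\to K$ entrywise a.s. Since $m$ is fixed, $(K_n^\ast)^m\to K^m$ a.s., and $\delta(\cdot)$ is continuous in the entries. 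So on a probability-one event there is a random $N_0$ with $\delta((K_n^\ast)^m)\le 1-\kappa$ for all $n\ge N_0$. Here $m$ and $\kappa$ are deterministic because they depend only on $K$.

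\textbf{Step 3 (summation).} For $n\ge N_0$ and any $k\ge1$, submultiplicativity gives $\delta((K_n^\ast)^k)\le(1-\kappa)^{\lfloor k/m\rfloor}$. Therefore
\[
1+\sum_{j=i+1}^n\bar\eta_{i,j}^\ast\le 1+\sum_{k\ge1}(1-\kappa)^{\lfloor k/m\rfloor}\le 1+\frac{m}{\kappa}=:C_\Delta^\ast,
\]
uniformly in $i$ and $n$, which is the claim.

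The main obstacle is Step 1 in the episodic application. There the bootstrap episodes restart at copied initial states, so the chain is not exactly homogeneous, and one must check that the time-inhomogeneous restart kernels still give coefficients no larger than $\delta$ of the products. Positivity of $\widetilde M^{(a_\dagger)}$ helps here, since it yields a Doeblin minorization at each reset. In the single-chain case, the only delicate point is making sure the conditioning events in $\bar\eta^\ast_{i,j}$ have positive probability, and the supremum over them is dominated by the supremum over all pairs anyway.
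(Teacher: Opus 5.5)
Your proof is correct. Its quantitative core matches the paper's: primitivity of $K$ from Lemma~\ref{lem:ref-chain}, entrywise convergence $K_n^\ast\to K$, and an eventual uniform $m$-step total-variation contraction with deterministic constants. In fact your $\kappa=\tfrac{SA}{2}\min_{z,w}K^m(z,w)$ is exactly the paper's Doeblin constant $SA\eta/2$.

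The real difference is how that contraction becomes a bound on $\|\Delta_n^\ast\|$. You go directly: the Markov property, plus the fact that a Markov kernel cannot increase total variation, gives $\bar\eta^\ast_{i,j}\le\delta\bigl((K_n^\ast)^{j-i}\bigr)$. Submultiplicativity and a geometric sum then finish. The paper takes a longer path:
\begin{itemize}
\item it proves eventual irreducibility and aperiodicity of $K_n^\ast$;
\item it derives geometric convergence $\sup_z\|K_n^{\ast t}(z,\cdot)-\hat p^\ast\|_{\mathrm{TV}}\le C_\ast\rho_\ast^t$ to the stationary law;
\item it uses this to bound only the state-marginal coefficients, $\bar\theta^\ast_{i,j}\le 2C_\ast\rho_\ast^{j-i}$;
\item it then invokes \citet[Lemma~3]{banerjee2025off}, with action-mixing constant $0$ because $\hat\pi_b$ is stationary Markov, to recover the joint $\bar\eta^\ast$ bound.
\end{itemize}

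Your route is shorter and self-contained. It needs neither the external lemma, nor the paper's Steps~2--3, nor $\hat p^\ast$. The paper's route yields, as by-products, the eventual ergodicity of $K_n^\ast$ and the geometric-ergodicity bound, which it reuses in Lemma~\ref{lem:statdist} and in Term~2 of Lemma~\ref{lem:bootstrap-lln}. Both of these also follow at once from your $\delta((K_n^\ast)^m)\le 1-\kappa$: it forces a unique $\hat p^\ast$ and gives $\|\mu(K_n^\ast)^k-\hat p^\ast\|_{\mathrm{TV}}\le(1-\kappa)^{\lfloor k/m\rfloor}$.

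Your closing paragraph is outside the scope of this lemma, which concerns only the time-homogeneous chain with kernel $K_n^\ast$. The copied-start-state episodic bootstrap would in fact be easier, not harder. Given $\mathcal{D}_n$, each reset sends every state to the fixed $X_0^{(k+1)}$, so that step has Dobrushin coefficient $0$. Hence $\bar\eta^\ast_{i,j}=0$ once $j$ is past the next reset, which already gives $\|\Delta_n^\ast\|\le T+1$. Positivity of $\widetilde M^{(a_\dagger)}$ plays no role there, because that bootstrap never samples from it.
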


\begin{proof}
We proceed in five steps. Let $N_0$ be the (a.s.\ finite) random index beyond which all the entrywise convergences invoked in Steps~1--4 have taken effect simultaneously; its finiteness follows from the fact that each individual a.s.\ convergence has a finite threshold, and a finite maximum of a.s.-finite random variables is a.s.\ finite. All statements below hold for all $n\ge N_0$ a.s.

\medskip\noindent\textbf{Step 1 (Entrywise convergence).}
By Lemma~\ref{lem:policy-consistency}, $\hat\pi_b\to\bar\pi_b$ a.s. By \citet[Lemma~2]{su2025centrallimittheoremstransition}, $\hat M_{s,t}^{(a)}\to M_{s,t}^{(a)}$ a.s.\ for every $(s,a,t)$. Hence $K_n^\ast\to K$ entrywise a.s.

\medskip\noindent\textbf{Step 2 (Eventual irreducibility).}
By Assumption~\ref{ass:semi-ergodic} and \citet[Lemma~2]{su2025centrallimittheoremstransition}, $N_s^{(a)}/n\to p_s^{(a)}\ge p_0>0$ a.s.
Thus, $N_s^{(a)}>0$ for all $(s,a)$ and all $n\ge N_0$. When this holds, $\hat\pi_b(a \mid s)>0$ and $\hat M_{s,t}^{(a)}>0$ for all $(s,a,t)$ such that $M_{s,t}^{(a)}>0$.

Since $K$ is irreducible by Lemma~\ref{lem:ref-chain}  (ii), for any $z,w\in\mathcal{S}\times\mathcal{A}$, there exists a finite sequence $z=z_0,z_1,\ldots,z_\ell=w$ such that $K(z_{i-1},z_i)>0$ for each $i=1,\ldots,\ell$. Each such step satisfies $K(z_{i-1},z_i)=M_{s,t}^{(a)}\bar\pi_b(l\mid t)$ for the corresponding $(s,a,t,l)$, which requires $M_{s,t}^{(a)}>0$.
Since $\hat M_{s,t}^{(a)}>0$ for all $n\ge N_0$ whenever $M_{s,t}^{(a)}>0$, we have $K_n^\ast(z_{i-1},z_i)>0$ for every step. 
Thus, the same path has strictly positive probability under $K_n^\ast$. Hence $K_n^\ast$ is irreducible for all $n\ge N_0$.

\medskip\noindent\textbf{Step 3 (Eventual aperiodicity).}
Since $N_{s_0,s_0}^{(a_0)}/n\to p_{s_0}^{(a_0)}M_{s_0,s_0}^{(a_0)}>0$ a.s., we have $N_{s_0,s_0}^{(a_0)}>0$ for all $n\ge N_0$; thus, $\hat M_{s_0,s_0}^{(a_0)}>0$, which gives
\[
K_n^\ast\bigl((s_0,a_0),(s_0,a_0)\bigr) = \hat M_{s_0,s_0}^{(a_0)}\,\hat\pi_b(a_0\mid s_0)>0.
\]
A positive self-loop in the irreducible $K_n^\ast$ forces aperiodicity for all $n\ge N_0$.

\medskip\noindent\textbf{Step 4 (Uniform Doeblin minorization and geometric ergodicity).}
Since $K$ is finite, irreducible, and aperiodic by Lemma~\ref{lem:ref-chain} (ii), it is primitive. Thus, there exists $r\ge 1$ such that $K^r(z,w)>0$ for all $z,w\in\mathcal{S}\times\mathcal{A}$, where
\[
K^r(z,w) \;:=\; \sum_{z_1,\ldots,z_{r-1}\in\mathcal{S}\times\mathcal{A}}
K(z,z_1)\,K(z_1,z_2)\cdots K(z_{r-1},w)
\]
is the $r$-step transition kernel ($K^1 := K$).
We define
\[
\eta := \min_{z,w\in\mathcal{S}\times\mathcal{A}} K^r(z,w) > 0.
\]
Since entrywise convergence is preserved under matrix multiplication, $K_n^{\ast r}\to K^r$ entrywise a.s.; thus, for all $n\ge N_0$,
\[
K_n^{\ast r}(z,w) \;\ge\; \frac{\eta}{2} \qquad \forall\, z,w\in\mathcal{S}\times\mathcal{A}.
\]
Let $\nu(z) := 1/(SA)$ for all $z\in\mathcal{S}\times\mathcal{A}$ be the uniform distribution on $\mathcal{S}\times\mathcal{A}$, and set $\delta := SA\,\eta/2 > 0$. Then for all $n\ge N_0$ a.s., $K_n^{\ast r}(z,\cdot)\ge\delta\,\nu(\cdot)$ for all $z$. This is a uniform Doeblin minorization with deterministic $r,\delta,\nu$ depending only on $K$.

We now derive the geometric ergodicity of $K_n^\ast$ for all $n\ge N_0$ a.s. from the uniform Doeblin minorization.
The first step is to derive a TV contraction for $K_n^{\ast r}$ with contraction factor $1-\delta$: for any two probability measures $\mu_1,\mu_2$ on $\mathcal{S}\times\mathcal{A}$,
\[
\|\mu_1 K_n^{\ast r} - \mu_2 K_n^{\ast r}\|_{\mathrm{TV}}
\;\le\; (1-\delta)\|\mu_1-\mu_2\|_{\mathrm{TV}}.
\] 
We now derive the TV contraction explicitly. We write each row of $K_n^{\ast r}$ as
\[
K_n^{\ast r}(z,\cdot) = \delta\,\nu(\cdot) + (1-\delta)\,R_z(\cdot),
\qquad
R_z(\cdot) := \frac{K_n^{\ast r}(z,\cdot)-\delta\,\nu(\cdot)}{1-\delta},
\]
where $R_z$ is a probability measure. 
For any probability measure $\mu$ on $\mathcal{S}\times\mathcal{A}$, we have
\begin{align*}
(\mu K_n^{\ast r})(w)
&= \sum_z \mu(z)\,K_n^{\ast r}(z,w)
 = \sum_z \mu(z)\bigl[\delta\,\nu(w)+(1-\delta)\,R_z(w)\bigr]\\
&= \delta\,\nu(w) + (1-\delta)\sum_z \mu(z)\,R_z(w),
\end{align*}
where the last step uses $\sum_z\mu(z)=1$. 
Then, for any two probability measures $\mu_1,\mu_2$ on $\mathcal{S}\times\mathcal{A}$, we subtract $\mu_1 K_n^{\ast r}$ from $\mu_2 K_n^{\ast r}$ to obtain 
\[
\mu_1 K_n^{\ast r} - \mu_2 K_n^{\ast r}
= (1-\delta)\!\left(\sum_z \mu_1(z)\,R_z - \sum_z \mu_2(z)\,R_z\right).
\]
Taking the TV norm and applying the triangle inequality, we get
\begin{align*}
\|\mu_1 K_n^{\ast r} - \mu_2 K_n^{\ast r}\|_{\mathrm{TV}}
&= (1-\delta)\Bigl\|\sum_z (\mu_1(z)-\mu_2(z))\,R_z\Bigr\|_{\mathrm{TV}}\\
&= \frac{1-\delta}{2}\sum_w \Bigl|\sum_z (\mu_1(z)-\mu_2(z))\,R_z(w)\Bigr|\\
&\le \frac{1-\delta}{2}\sum_z |\mu_1(z)-\mu_2(z)|\sum_w R_z(w)\\
&= (1-\delta)\,\|\mu_1-\mu_2\|_{\mathrm{TV}},
\end{align*}
where the final equality uses $\sum_w R_z(w)=1$ since each $R_z$ is a probability measure.

Let $\hat p^\ast:=(\hat p_s^{\ast(a)})_{(s,a) \in \mathcal S \times \mathcal A}$ denote the stationary distribution of $K_n^\ast$. 
For all $n \ge N_0$, since $K_n^\ast$ is irreducible and aperiodic a.s., $\hat p^\ast$ is the unique stationary distribution of $K_n^\ast$, and $\hat p^\ast K_n^\ast = \hat p^\ast$.

We use this stationarity to bound $\|K_n^{\ast mr}(z,\cdot)-\hat p^\ast(\cdot)\|_{\mathrm{TV}}$ by induction on $m$. When $m=0$, the bound holds trivially since $\|\delta_z-\hat p^\ast\|_{\mathrm{TV}}\le 1=(1-\delta)^0$. Suppose the bound holds for some $m\ge 0$, i.e.\ $\|\delta_z K_n^{\ast mr}-\hat p^\ast\|_{\mathrm{TV}}\le(1-\delta)^m$. Applying the TV contraction with $\mu_1=\delta_z K_n^{\ast mr}$ and $\mu_2=\hat p^\ast$, and rewriting $K_n^{\ast(m+1)r}=K_n^{\ast mr}K_n^{\ast r}$, we have
\begin{align*}
\bigl\|\delta_z K_n^{\ast(m+1)r}-\hat p^\ast\bigr\|_{\mathrm{TV}}
&= \bigl\|(\delta_z K_n^{\ast mr})K_n^{\ast r} - \hat p^\ast K_n^{\ast r}\bigr\|_{\mathrm{TV}} \\
&\le\; (1-\delta)\bigl\|\delta_z K_n^{\ast mr}-\hat p^\ast\bigr\|_{\mathrm{TV}} \\
&\le\; (1-\delta)^{m+1},
\end{align*}
where the second inequality applies the induction hypothesis. Hence for all $m\ge 0$,
\[
\sup_{z}\bigl\|K_n^{\ast mr}(z,\cdot)-\hat p^\ast(\cdot)\bigr\|_{\mathrm{TV}} \;\le\; (1-\delta)^m.
\]
For any $t\ge 0$, we can write $t=mr+s$ with $m\ge 0$ and $0\le s<r$. 
Applying the TV contraction to the pair $(\delta_z K_n^{\ast s},\,\hat p^\ast)$ over $m$ further blocks of $r$ steps, and using $\hat p^\ast K_n^{\ast mr}=\hat p^\ast$, we have
\begin{align*}
\bigl\|K_n^{\ast t}(z,\cdot)-\hat p^\ast(\cdot)\bigr\|_{\mathrm{TV}}
&= \bigl\|\delta_z K_n^{\ast s} K_n^{\ast mr} - \hat p^\ast K_n^{\ast mr}\bigr\|_{\mathrm{TV}} \\
& \le \; (1-\delta)^m \bigl\|\delta_z K_n^{\ast s}-\hat p^\ast\bigr\|_{\mathrm{TV}} \\
& \le\; (1-\delta)^m.
\end{align*}
We define $\rho_\ast:=(1-\delta)^{1/r}\in(0,1)$ and $C_\ast:=(1-\delta)^{-1}<\infty$. 
They are deterministic constants depending only on $K$, since $\delta$ depends only on $K$.
As $(1-\delta)^m=(1-\delta)^{(t-s)/r}\le(1-\delta)^{-1}(1-\delta)^{t/r}=C_\ast\rho_\ast^t$, we conclude
\[
\sup_{(s,a)\in\mathcal{S}\times\mathcal{A}}\bigl\|K_n^{\ast t}\bigl((s,a),\cdot\bigr)-\hat p^\ast(\cdot)\bigr\|_{\mathrm{TV}}
\;\le\; C_\ast\rho_\ast^t, \qquad t\ge 0.
\]

\medskip\noindent\textbf{Step 5 (Applying \citet[Lemma 3]{banerjee2025off}).}
Since $\hat\pi_b$ is a stationary Markov policy, Assumption~\ref{ass:action-mix} holds for the bootstrap CMC with $C=0$. It remains to verify Assumption~\ref{ass:state-mix} for the bootstrap CMC. 

We define the $\bar \theta$-mixing coefficient of the bootstrap CMC as 
\begin{align*}
\bar\theta_{i,j}^\ast :=
\underset{\substack{(s_1,a_1), (s_2,a_2) \in \mathcal S \times \mathcal A, \;(s_1,a_1)\neq(s_2,a_2)\\ \prob(X_i^\ast=s_1,A_i^\ast=a_1)>0,\;\prob(X_i^\ast=s_2,A_i^\ast=a_2)>0}}{\sup}
&\Bigl\|\prob\lp X_j^\ast=\cdot\mid X_i^\ast=s_1,A_i^\ast=a_1\rp \\
&\quad -\prob\lp X_j^\ast=\cdot\mid X_i^\ast=s_2,A_i^\ast=a_2\rp\Bigr\|_{\mathrm{TV}}.
\end{align*}
We connect the bound in Step~4 to the $\bar\theta$-mixing coefficient as follows.

For any two distributions $\mu,\nu$ on $\mathcal{S}\times\mathcal{A}$ with $\mathcal{S}$-marginals $\mu_X,\nu_X$, marginalization cannot increase total variation and thus for every $B\subseteq\mathcal{S}$,
\[
|\mu_X(B)-\nu_X(B)| = |\mu(B\times\mathcal{A})-\nu(B\times\mathcal{A})| \le \|\mu-\nu\|_{\mathrm{TV}}.
\]
Hence $\|\mu_X-\nu_X\|_{\mathrm{TV}}\le\|\mu-\nu\|_{\mathrm{TV}}$. Applying this to $\mu = K_n^{\ast(j-i)}((s_k,a_k),\cdot)$ and $\nu = \hat p^\ast$, whose $\mathcal{S}$-marginal is denoted by $\hat\mu_X^\ast$, gives
\[
\bigl\|\mathbb{P}(X_j^\ast=\cdot\mid X_i^\ast=s_k,A_i^\ast=a_k)-\hat\mu_X^\ast(\cdot)\bigr\|_{\mathrm{TV}}
\le \bigl\|K_n^{\ast(j-i)}\bigl((s_k,a_k),\cdot\bigr)-\hat p^\ast(\cdot)\bigr\|_{\mathrm{TV}}
\le C_\ast\rho_\ast^{j-i}.
\]
By the triangle inequality, we have
\[
\bar\theta_{i,j}^\ast \le 2C_\ast\rho_\ast^{j-i}, \qquad j>i\ge 1.
\]
Therefore
\[
\sup_{i\ge 1}\sum_{j>i}\bar\theta_{i,j}^\ast \;\le\; 2 \sum_{j=1}^{\infty} C_\ast \rho_\ast^j \;=\; \frac{2C_\ast\rho_\ast}{1-\rho_\ast} < \infty,
\]
and Assumption~\ref{ass:state-mix} holds for the bootstrap CMC. 
Let $C_\Delta^\ast := \frac{2C_\ast\rho_\ast}{1-\rho_\ast}+1$.
\citet[Lemma~3]{banerjee2025off} then implies that Assumption~\ref{ass:eta-mix} holds for the bootstrap CMC, and $\|\Delta_n^\ast\|\le 0+\frac{2C_\ast\rho_\ast}{1-\rho_\ast}+1 = C_\Delta^\ast$ for all $n\ge N_0$ a.s. 
This completes the proof.
\end{proof}

\subsection{Continuity of the Stationary Distribution Map}
\label{app:lem:statdist}

\begin{lemma}[Continuity of the Stationary Distribution Map]
\label{lem:statdist}
On the almost sure event of Lemma~\ref{lem:mixing-bound},
\[
\hat p_s^{\ast(a)} \;\xrightarrow{\mathrm{a.s.}}\; p_s^{(a)}
\qquad\text{for every }(s,a)\in\mathcal{S}\times\mathcal{A}.
\]
\end{lemma}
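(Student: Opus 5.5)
We prove the lemma by perturbation of stationary distributions, combining the entrywise convergence $K_n^\ast\to K$ from Step~1 of Lemma~\ref{lem:mixing-bound} with the uniform geometric ergodicity from Step~4 of that lemma. Work on the almost sure event of Lemma~\ref{lem:mixing-bound} and fix $n\ge N_0$. There $K_n^\ast$ is irreducible and aperiodic, so $\hat p^\ast$ is its unique stationary distribution. By Lemma~\ref{lem:ref-chain}(i), $p$ is the unique stationary distribution of $K$.

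The key identity, for every $t\ge 1$, is
\[
\hat p^\ast - p \;=\; \hat p^\ast K_n^{\ast t} - p K^t \;=\; \bigl(\hat p^\ast K_n^{\ast t} - p K_n^{\ast t}\bigr) + \bigl(p K_n^{\ast t} - p K^t\bigr).
\]
We bound the two brackets separately.

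\begin{enumerate}
\item[(a)] \emph{First bracket.} Both $\hat p^\ast K_n^{\ast t}$ and $pK_n^{\ast t}$ are averages of rows of $K_n^{\ast t}$. Each row $K_n^{\ast t}(z,\cdot)$ lies within $C_\ast\rho_\ast^t$ of $\hat p^\ast$ in total variation, uniformly in $z$, by Step~4 of Lemma~\ref{lem:mixing-bound}. Hence the first bracket has TV norm at most $2C_\ast\rho_\ast^t$.
\item[(b)] \emph{Second bracket.} Its size is bounded by $\max_{z,w}|K_n^{\ast t}(z,w)-K^t(z,w)|$ times a constant depending only on $SA$. For each fixed $t$, this tends to $0$ a.s., because matrix multiplication is continuous and $K_n^\ast\to K$ entrywise a.s.
\end{enumerate}

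Given $\varepsilon>0$, first choose $t$ deterministic with $2C_\ast\rho_\ast^t<\varepsilon/2$. This is possible because $C_\ast$ and $\rho_\ast$ are deterministic and depend only on $K$. Then take $n$ large enough that the second bracket is below $\varepsilon/2$. This gives $\|\hat p^\ast-p\|_{\mathrm{TV}}<\varepsilon$ eventually. Since $\varepsilon$ is arbitrary, $\hat p_s^{\ast(a)}\to p_s^{(a)}$ a.s. for every $(s,a)$.

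The main obstacle is making sure the geometric ergodicity constant from Step~4 is uniform in $n$, so that $t$ can be fixed before letting $n\to\infty$. Lemma~\ref{lem:mixing-bound} explicitly provides deterministic $C_\ast$ and $\rho_\ast$, which handles this.

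An alternative that avoids ergodicity is to note that the stationary distribution of an irreducible finite chain is a rational function of the kernel entries, given by the Markov chain tree theorem (or Cramer's rule on $p(I-K+\mathbf 1\nu)=\nu$). The normalizing denominator is nonzero at $K$, so continuity at $K$ gives the same conclusion. I would present the first argument and mention this one as a remark.
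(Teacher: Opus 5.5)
Your proof is correct, but it takes a different route from the paper. The paper simply invokes Meyer (1980, Theorem~4.1): the stationary vector of an irreducible finite chain is a continuous function of the transition matrix. So $K_n^\ast\to K$ entrywise immediately gives $\hat p^\ast\to p$, and the quantitative bound of Step~4 of Lemma~\ref{lem:mixing-bound} plays no role there.

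You instead prove the continuity directly, by an $\varepsilon/2$ argument that combines $pK^t=p$ with the uniform geometric ergodicity of $K_n^\ast$. You correctly isolate the crux: $C_\ast,\rho_\ast$ do not depend on $n$, so $t$ can be fixed before $n\to\infty$. Your first bracket is in fact at most $C_\ast\rho_\ast^t$, since $\hat p^\ast K_n^{\ast t}=\hat p^\ast$; the factor $2$ is harmless.

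Your route is self-contained, using only results already proved in the paper. It also yields the explicit estimate $\|\hat p^\ast-p\|_{\mathrm{TV}}\le C_\ast\rho_\ast^t+\tfrac{SA}{2}\max_{z,w}|K_n^{\ast t}(z,w)-K^t(z,w)|$ for every $t$. The cost is a dependence on the Doeblin minorization, hence on aperiodicity, which continuity of stationary distributions does not need. The perturbation theory behind the paper's citation requires only irreducibility of $K$ and gives bounds linear in $\|K_n^\ast-K\|$.

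Your closing remark via $p(I-K+\mathbf 1\nu)=\nu$ is essentially an elementary proof of the fact the paper cites, and is arguably the cleanest option. Invertibility of $I-K+\mathbf 1\nu$ follows from irreducibility of $K$ alone and persists for large $n$ by continuity of the determinant. It then forces $\hat p^\ast=\nu(I-K_n^\ast+\mathbf 1\nu)^{-1}\to\nu(I-K+\mathbf 1\nu)^{-1}=p$, without even needing irreducibility of $K_n^\ast$.
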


\begin{proof}
By Lemma~\ref{lem:ref-chain}, $K$ is irreducible and aperiodic on the finite state space $\mathcal{S}\times\mathcal{A}$. Since every irreducible finite-state chain is positive recurrent, $K$ is ergodic, and its stationary distribution $p$ is unique. 

By Lemma~\ref{lem:mixing-bound}, $K_n^\ast$ is irreducible and aperiodic for all $n\ge N_0$ a.s. Therefore, $K_n^\ast$ is eventually ergodic, with unique stationary distribution $\hat p^\ast$ for each $n\ge N_0$. Then by \citet[Theorem~4.1]{meyer1980condition}, the limiting probability vector of an ergodic chain is a continuous function of the transition matrix. Since $K_n^\ast\to K$ entrywise a.s.\ by Lemma~\ref{lem:mixing-bound} Step~1, we conclude $\hat p_s^{\ast(a)}\xrightarrow{\mathrm{a.s.}} p_s^{(a)}$ for every $(s,a)\in\mathcal{S}\times\mathcal{A}$.
\end{proof}

\subsection{Bootstrap Visitation Count Law of Large Numbers}
\label{app:lem:bootstrap-lln}

\begin{lemma}[Bootstrap Visitation Count LLN]
\label{lem:bootstrap-lln}
For each $(s,a)\in\mathcal{S}\times\mathcal{A}$,
\[
\frac{N_s^{\ast(a)}}{n} \;\xrightarrow{\mathrm{a.s.}}\; p_s^{(a)}.
\]
\end{lemma}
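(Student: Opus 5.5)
The plan follows the decomposition announced in Step~1 of the proof sketch of Theorem~\ref{thm:bootstrap-M}:
\[
\frac{N_s^{\ast(a)}}{n} = \frac{N_s^{\ast(a)}}{\mathbb{E}[N_s^{\ast(a)}\mid\mathcal{D}_n]}\cdot\frac{\mathbb{E}[N_s^{\ast(a)}\mid\mathcal{D}_n]}{n},
\]
and we handle the two factors separately on the almost sure event of Lemma~\ref{lem:mixing-bound}, for $n\ge N_0$.

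\textbf{Second factor.} Conditional on $\mathcal{D}_n$, the pair process $\{(X_i^\ast,A_i^\ast)\}$ is a time-homogeneous Markov chain with kernel $K_n^\ast$. Write
\[
\mathbb{E}[N_s^{\ast(a)}\mid\mathcal{D}_n]=\sum_{i=0}^{n-1}\mathbb{P}(X_i^\ast=s,A_i^\ast=a\mid\mathcal{D}_n).
\]
Step~4 of Lemma~\ref{lem:mixing-bound} gives the uniform geometric bound $\sup_z\|K_n^{\ast t}(z,\cdot)-\hat p^\ast\|_{\mathrm{TV}}\le C_\ast\rho_\ast^t$, with deterministic $C_\ast,\rho_\ast$. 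This bound holds for any initial law, including the observed start state. Therefore
\[
\Bigl|\mathbb{E}[N_s^{\ast(a)}\mid\mathcal{D}_n]/n-\hat p_s^{\ast(a)}\Bigr|\le \frac{C_\ast}{n(1-\rho_\ast)}\to0.
\]
Combining this with Lemma~\ref{lem:statdist}, which gives $\hat p_s^{\ast(a)}\to p_s^{(a)}$ a.s., the second factor converges to $p_s^{(a)}$ a.s. In particular $\mathbb{E}[N_s^{\ast(a)}\mid\mathcal{D}_n]\ge np_0/2$ eventually.

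\textbf{First factor.} We need a concentration bound for $N_s^{\ast(a)}-\mathbb{E}[N_s^{\ast(a)}\mid\mathcal{D}_n]$ that is uniform in $n$. I would apply the McDiarmid-type inequality for $\bar\eta$-mixing sequences (Kontorovich--Ramanan, as used in \citet[Lemma~6]{banerjee2025off}) to the function $\sum_i\indicator\{X_i^\ast=s,A_i^\ast=a\}$, which is $1$-Lipschitz in Hamming distance. This yields
\[
\mathbb{P}\Bigl(\bigl|N_s^{\ast(a)}-\mathbb{E}[N_s^{\ast(a)}\mid\mathcal{D}_n]\bigr|>\epsilon n\,\Big|\,\mathcal{D}_n\Bigr)\le 2\exp\!\Bigl(-\frac{2\epsilon^2 n}{\|\Delta_n^\ast\|^2}\Bigr)\le 2\exp\!\Bigl(-\frac{2\epsilon^2 n}{(C_\Delta^\ast)^2}\Bigr)
\]
for $n\ge N_0$. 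The key input here is that $C_\Delta^\ast$ is deterministic, by Lemma~\ref{lem:mixing-bound}. The right-hand side is summable in $n$.

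\textbf{Main obstacle: making ``a.s.'' rigorous across the random threshold.} The bootstrap samples for different $n$ live on a product space, so conditional Borel--Cantelli must be applied with the random index $N_0$. I would first fix $\omega$ in the full-measure data event. I would then apply Borel--Cantelli under the bootstrap law $\mathbb{P}(\cdot\mid\mathcal{D}_n)$ to the tail $n\ge N_0(\omega)$, which is legitimate because the bound is summable and deterministic there. This gives $|N_s^{\ast(a)}-\mathbb{E}[N_s^{\ast(a)}\mid\mathcal{D}_n]|/n\to0$ almost surely for $\mathbb{P}$-a.e.\ data sequence. Dividing by the second factor, which is bounded below by $p_0/2$, shows the first factor tends to $1$. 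Multiplying the two limits then gives $N_s^{\ast(a)}/n\to p_s^{(a)}$ a.s.

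In the episodic setting the bootstrap start states are copied from $\mathcal{D}_n$, but this is harmless: the geometric bound is uniform over initial states, and the concentration inequality only requires the mixing bound.
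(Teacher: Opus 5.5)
Your proof is correct for the lemma as stated (the single-chain setting of Theorem~\ref{thm:bootstrap-M}) and follows the paper's argument essentially step for step. You use the same factorization through $\mathbb{E}[N_s^{\ast(a)}\mid\mathcal{D}_n]$, the geometric total-variation bound from Step~4 of Lemma~\ref{lem:mixing-bound} together with Lemma~\ref{lem:statdist} for the centering term, and the $\bar\eta$-mixing concentration bound of \citet[Lemma~6]{banerjee2025off} with the deterministic constant $C_\Delta^\ast$ followed by Borel--Cantelli. Your explicit treatment of the random threshold $N_0$, by fixing the data sequence first, tightens a step the paper handles loosely.

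You should drop the closing episodic remark, though, because it is wrong. With $T$ fixed and $K\to\infty$, each of the $K$ restarts at a copied start state leaves an $O(1)$ transient. The uniform geometric bound therefore only gives $\bigl|\mathbb{E}[N_s^{\ast(a)}\mid\mathcal{D}_n]/n-\hat p_s^{\ast(a)}\bigr|\le C_\ast/\bigl(T(1-\rho_\ast)\bigr)$, which does not vanish as $K\to\infty$. In that regime the bootstrap occupation tends to an average of $T$-step occupation measures under the stationary policy $\bar\pi_b$, which in general differs from $p_s^{(a)}$ when $\pi_b$ is nonstationary.
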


\begin{proof}
We begin by defining $\mu_n^\ast := \mathbb{E}[N_s^{\ast(a)}\mid\mathcal{D}_n]$ and writing
\[
\frac{N_s^{\ast(a)}}{n}
= \frac{N_s^{\ast(a)}}{\mu_n^\ast} \cdot \frac{\mu_n^\ast}{n}
= \mathrm{Term~1} \cdot \mathrm{Term~2}.
\]
We handle Term~2 first, then Term~1.

\medskip\noindent\textbf{Term 2 (Expectation convergence).}
Let $\nu_n^\ast$ denote the initial distribution of $(X_0^\ast,A_0^\ast)$ under $K_n^\ast$. Then
\[
\mu_n^\ast
= \sum_{i=0}^{n-1}(\nu_n^\ast K_n^{\ast i})(s,a),
\]
hence $\mu_n^\ast - n\hat p_s^{\ast(a)} = \sum_{i=0}^{n-1}[(\nu_n^\ast K_n^{\ast i})(s,a) - \hat p_s^{\ast(a)}]$.
By Lemma~\ref{lem:mixing-bound} (Step~4), $\sup_z\|K_n^{\ast i}(z,\cdot)-\hat p^\ast\|_{\mathrm{TV}}\le C_\ast\rho_\ast^i$ for all $n\ge N_0$ a.s. Since $\nu_n^\ast$ is a probability measure, we have
\[
\|\nu_n^\ast K_n^{\ast i}-\hat p^\ast\|_{\mathrm{TV}}
\;\le\; \sum_z \nu_n^\ast(z)\,\bigl\|K_n^{\ast i}(z,\cdot)-\hat p^\ast\bigr\|_{\mathrm{TV}}
\;\le\; C_\ast\rho_\ast^i.
\]
Therefore, for all $n\ge N_0$ a.s., we obtain
\[
\bigl|\mu_n^\ast - n\hat p_s^{\ast(a)}\bigr|
\;\le\; \sum_{i=0}^{\infty}C_\ast\rho_\ast^i = \frac{C_\ast}{1-\rho_\ast} < \infty.
\]
Hence $\mu_n^\ast/n = \hat p_s^{\ast(a)} + O(1/n)$. Since $\hat p_s^{\ast(a)}\xrightarrow{\mathrm{a.s.}}p_s^{(a)}$ by Lemma~\ref{lem:statdist}, we finally derive
\[
\frac{\mu_n^\ast}{n} \;\xrightarrow{\mathrm{a.s.}}\; p_s^{(a)}.
\]

\medskip\noindent\textbf{Term 1 (Concentration).}
On the probability-one event where $N_0<\infty$ (Lemma~\ref{lem:mixing-bound}), we choose $N_1\ge N_0$ such that $\mu_n^\ast\ge(p_s^{(a)}/2)\,n$ and $\|\Delta_n^\ast\|\le C_\Delta^\ast$ for all $n\ge N_1$ (which holds a.s.\ by Term~2 and Lemma~\ref{lem:mixing-bound}). Applying \citet[Lemma~6]{banerjee2025off} to the bootstrap chain with kernel $K_n^\ast$, we get that for every $\varepsilon>0$ and $n\ge N_1$,
\[
\mathbb{P}\!\left(\left|\frac{N_s^{\ast(a)}}{\mu_n^\ast}-1\right|>\varepsilon\right)
\;\le\;
2\exp\!\left(-\frac{{\mu_n^\ast}^2\varepsilon^2}{2n\|\Delta_n^\ast\|^2}\right)
\;\le\;
2\exp\!\left(-\frac{(p_s^{(a)})^2\,n\,\varepsilon^2}{8{C_\Delta^\ast}^2}\right).
\]
The right-hand side is summable in $n$, and thus by the Borel--Cantelli lemma, we have $N_s^{\ast(a)}/\mu_n^\ast\xrightarrow{\mathrm{a.s.}}1$.

Since both Term~1 and Term~2 converge a.s., their product converges a.s.\ to $p_s^{(a)}$.
This completes the proof.
\end{proof}

%%%%%%%%%%%%%%%%%%%%%%%%%%%%%%%%%%%%%%%%%%%%%%%%%%%%%%%%%%%%
\section{Proof of Theorem~\ref{thm:bootstrap-M}}
\label{app:proofs}
%%%%%%%%%%%%%%%%%%%%%%%%%%%%%%%%%%%%%%%%%%%%%%%%%%%%%%%%%%%%

\begin{proof}
We work on the joint probability space $(\Omega,\mathcal{F},\mathbb{P})$ carrying both the observed data $\mathcal{D}_n$ and the bootstrap randomness; all convergence statements are under $\mathbb{P}$. We define the joint filtration $\mathcal{F}_{n,i}:=\sigma(\mathcal{D}_n,X_0^\ast,A_0^\ast,\ldots,X_i^\ast,A_i^\ast)$. 

We proceed in four steps.
First, we decompose $\sqrt{n}(\hat M_{s,t}^{\ast(a)}-\hat M_{s,t}^{(a)})$ into a leading term and a remainder. 
Second, we show the remainder is negligible. 
Third, we show the leading term converges in distribution to $\mathcal{N}(0,\bar\Lambda_{sat,sat})$ conditionally on $\mathcal{D}_n$. 
This is the univariate, one-dimensional convergence of $\sqrt{n}(\hat M_{s,t}^{\ast(a)}-\hat M_{s,t}^{(a)})$.
Finally, we use the Cram\'er--Wold theorem to extend the univariate convergence to the multivariate convergence of $\sqrt{n}\,\mathrm{vec}(\hat{\mathbf{M}}^\ast-\hat{\mathbf{M}})$ to $\mathcal{N}(0,\bar\Lambda)$ conditionally on $\mathcal{D}_n$.

\medskip\noindent\textbf{Step 1: Decomposition.}
Let $(s,a,t)\in\mathcal{S}\times\mathcal{A}\times\mathcal{S}$ be fixed. We define
\[
\xi_i^\ast := \indicator\{X_i^\ast=s,A_i^\ast=a\}\bigl(\indicator\{X_{i+1}^\ast=t\}-\hat M_{s,t}^{(a)}\bigr).
\]
Since $\hat M_{s,t}^{\ast(a)}=N_{s,t}^{\ast(a)}/N_s^{\ast(a)}$, we have
\begin{align*}
\sqrt{n}\bigl(\hat M_{s,t}^{\ast(a)}-\hat M_{s,t}^{(a)}\bigr)
&= \sqrt{n}\!\left(\frac{N_{s,t}^{\ast(a)}}{N_s^{\ast(a)}}-\hat M_{s,t}^{(a)}\right)
 = \frac{\sqrt{n}}{N_s^{\ast(a)}}\bigl(N_{s,t}^{\ast(a)}-\hat M_{s,t}^{(a)}N_s^{\ast(a)}\bigr)
 = \frac{\sqrt{n}}{N_s^{\ast(a)}}\sum_{i=0}^{n-1}\xi_i^\ast.
\end{align*}
Since $\sqrt{n}/N_s^{\ast(a)} = 1/(\sqrt{n}\,p_s^{(a)}) + (1/\sqrt{n})(n/N_s^{\ast(a)}-1/p_s^{(a)})$, we write
\[
\sqrt{n}\bigl(\hat M_{s,t}^{\ast(a)}-\hat M_{s,t}^{(a)}\bigr) = S_n^{\ast,sat}+R_n^{\ast,sat},
\]
where
\begin{align*}
S_n^{\ast,sat} &:= \frac{1}{\sqrt{n}\,p_s^{(a)}}\sum_{i=0}^{n-1}\xi_i^\ast,\qquad
R_n^{\ast,sat} := \Bigl(\frac{n}{N_s^{\ast(a)}}-\frac{1}{p_s^{(a)}}\Bigr)\frac{1}{\sqrt{n}}\sum_{i=0}^{n-1}\xi_i^\ast.
\end{align*}

\medskip\noindent\textbf{Step 2: Remainder is negligible.}
We define
\[
Z_n^\ast := \frac{1}{\sqrt{n}}\sum_{i=0}^{n-1}\xi_i^\ast,
\]
such that $R_n^{\ast,sat}=\bigl(n/N_s^{\ast(a)}-1/p_s^{(a)}\bigr)\cdot Z_n^\ast$.
Our goal is to show $R_n^{\ast,sat}=o_{\mathbb{P}}(1)$, by first showing $Z_n^\ast=O_{\mathbb{P}}(1)$ and $n/N_s^{\ast(a)}-1/p_s^{(a)}\xrightarrow{\mathrm{a.s.}}0$.

We start by bounding the variance of $Z_n^\ast$.
On the event $\{X_i^\ast=s,A_i^\ast=a\}$, the bootstrap chain reaches next state $t$ with probability $\hat M_{s,t}^{(a)}$, hence
\[
\mathbb{E}[\xi_i^\ast\mid\mathcal{F}_{n,i}]
= \indicator\{X_i^\ast=s,A_i^\ast=a\}\bigl(\hat M_{s,t}^{(a)}-\hat M_{s,t}^{(a)}\bigr)=0.
\]
Therefore $(\xi_i^\ast)$ is a martingale difference sequence under $(\mathcal{F}_{n,i})$ and $\mathbb{P}$. For $i<j$, the tower property gives
\[
\mathbb{E}[\xi_i^\ast\xi_j^\ast]
= \mathbb{E}\bigl[\xi_i^\ast\,\mathbb{E}[\xi_j^\ast\mid\mathcal{F}_{n,j}]\bigr]=0,
\]
thus cross-terms in $\mathrm{Var}[Z_n^\ast]$ vanish. 
The conditional second moment of $\xi_i^\ast$ is
\[
\mathbb{E}\bigl[(\xi_i^\ast)^2\mid\mathcal{F}_{n,i}\bigr]
= \indicator\{X_i^\ast=s,A_i^\ast=a\}\,\hat M_{s,t}^{(a)}\bigl(1-\hat M_{s,t}^{(a)}\bigr) \le \frac{1}{4},
\]
since $\indicator\{X_{i+1}^\ast=t\}$ given $\mathcal{F}_{n,i}$ on $\{X_i^\ast=s,A_i^\ast=a\}$ is Bernoulli$(\hat M_{s,t}^{(a)})$.
Summing over $i$, we get
\begin{align*}
\mathrm{Var}[Z_n^\ast]
&= \frac{1}{n}\sum_{i=0}^{n-1}\mathbb{E}\bigl[(\xi_i^\ast)^2\bigr] \\
&= \frac{1}{n}\sum_{i=0}^{n-1}\mathbb{E}\left[\mathbb{E}\bigl[(\xi_i^\ast)^2 \mid \mathcal{F}_{n,i} \bigr] \right] \\
&\le \; \frac{1}{n} \sum_{i=0}^{n-1} \frac{1}{4} \\
% &= \frac{\hat M_{s,t}^{(a)}(1-\hat M_{s,t}^{(a)})}{n}\sum_{i=0}^{n-1}\mathbb{P}(X_i^\ast=s,A_i^\ast=a)\\
% &= \frac{\mathbb{E}[N_s^{\ast(a)}]}{n}\,\hat M_{s,t}^{(a)}\bigl(1-\hat M_{s,t}^{(a)}\bigr) \\
&=\frac{1}{4}.
\end{align*}

For any $M>0$, by the Chebyshev's inequality, we have
\[
\mathbb{P}\bigl(|Z_n^\ast|>M\bigr)
\le \frac{\mathrm{Var}[Z_n^\ast]}{M^2}
\le \frac{1}{4M^2},
\]
hence $Z_n^\ast=O_{\mathbb{P}}(1)$. Since $N_s^{\ast(a)}/n\xrightarrow{\mathrm{a.s.}}p_s^{(a)}>0$, the continuous mapping theorem gives $n/N_s^{\ast(a)}-1/p_s^{(a)}\xrightarrow{\mathrm{a.s.}}0$.

To show $R_n^{\ast,sat}=o_{\mathbb{P}}(1)$, we consider any $\varepsilon,\delta>0$ and choose $M$ large enough that $1/(4M^2)<\delta/2$. 
We define the event $\mathcal{E}_n := \{|n/N_s^{\ast(a)}-1/p_s^{(a)}|>\varepsilon/M\}\cup\{|Z_n^\ast|>M\}$. 
On the event $\mathcal{E}_n^c$, both $|n/N_s^{\ast(a)}-1/p_s^{(a)}|\le\varepsilon/M$ and $|Z_n^\ast|\le M$ hold. Thus, we have $|R_n^{\ast,sat}|=|n/N_s^{\ast(a)}-1/p_s^{(a)}|\cdot|Z_n^\ast|\le(\varepsilon/M)\cdot M=\varepsilon$. 
Therefore,
\begin{align*}
\mathbb{P}\bigl(|R_n^{\ast,sat}|>\varepsilon\bigr)
& = \mathbb{P}\bigl(|R_n^{\ast,sat}|>\varepsilon,\,\mathcal{E}_n^c\bigr) + \mathbb{P}\bigl(|R_n^{\ast,sat}|>\varepsilon,\,\mathcal{E}_n\bigr)\\
& = 0 + \mathbb{P}\bigl(|R_n^{\ast,sat}|>\varepsilon,\,\mathcal{E}_n\bigr) \\
&\le\; \mathbb{P}\bigl(|Z_n^\ast|>M\bigr)\\
&\le\; \frac{1}{4M^2} \\
&<\; \frac{\delta}{2}.
\end{align*}
Since $\delta$ is arbitrary, we have $R_n^{\ast,sat}=o_{\mathbb{P}}(1)$.
It remains to show $S_n^{\ast,sat}\xrightarrow{\mathrm{d}}\mathcal{N}(0,\bar\Lambda_{sat,sat})$ under $\mathbb{P}$.

\medskip\noindent\textbf{Step 3: One Dimensional Martingale CLT.}
We define
\[
d_{n,i+1}^{\ast,sat} := \frac{\xi_i^\ast}{\sqrt{n}\,p_s^{(a)}},
\]
so that $S_n^{\ast,sat}=\sum_{i=0}^{n-1}d_{n,i+1}^{\ast,sat}$. Since $\mathbb{E}[\xi_i^\ast \mid \mathcal{F}_{n,i}]=0$, we have $\mathbb{E}[d_{n,i+1}^{\ast,sat}\mid\mathcal{F}_{n,i}]=0$, and $(d_{n,i+1}^{\ast,sat})$ is a martingale difference array under $\mathbb{P}$ with respect to the joint filtration $(\mathcal{F}_{n,i})$.

To prove the CLT for $S_n^{\ast,sat}$, we verify the Lindeberg condition.
Since $|\xi_i^\ast|\le 1$, we have $|d_{n,i+1}^{\ast,sat}|\le 1/(\sqrt{n}\,p_s^{(a)})$. For any $\varepsilon>0$, we set $n_0:=\lceil(\varepsilon\, p_s^{(a)})^{-2}\rceil$, where $\lceil x \rceil$ denotes the smallest integer greater than or equal to $x$.
For all $n\ge n_0$, we have $|d_{n,i+1}^{\ast,sat}|\le\varepsilon$ and thus $\{|d_{n,i+1}^{\ast,sat}|>\varepsilon\}=\emptyset$ and
\[
\sum_{i=0}^{n-1}\mathbb{E}\Bigl[(d_{n,i+1}^{\ast,sat})^2\,\indicator\bigl\{|d_{n,i+1}^{\ast,sat}|>\varepsilon\bigr\}\;\Big|\;\mathcal{F}_{n,i}\Bigr]=0.
\]
Hence, the Lindeberg condition holds for all $n\ge n_0$.

We now derive the asymptotic variance of $S_n^{\ast,sat}$ by analyzing the predictable quadratic variation $\sum_{i=0}^{n-1}\mathbb{E}[(d_{n,i+1}^{\ast,sat})^2\mid\mathcal{F}_{n,i}]$.
Using 
\[
\mathbb{E}[(\xi_i^\ast)^2\mid\mathcal{F}_{n,i}]=\indicator\{X_i^\ast=s,A_i^\ast=a\}\hat M_{s,t}^{(a)}(1-\hat M_{s,t}^{(a)}),
\]
we have
\[
\mathbb{E}\bigl[(d_{n,i+1}^{\ast,sat})^2\mid\mathcal{F}_{n,i}\bigr]
= \frac{\indicator\{X_i^\ast=s,A_i^\ast=a\}}{n\,(p_s^{(a)})^2}\,\hat M_{s,t}^{(a)}\bigl(1-\hat M_{s,t}^{(a)}\bigr).
\]
Summing over $i=0,\ldots,n-1$, we get
\begin{align*}
\sum_{i=0}^{n-1}\mathbb{E}\bigl[(d_{n,i+1}^{\ast,sat})^2\mid\mathcal{F}_{n,i}\bigr]
=\frac{N_s^{\ast(a)}}{n\,(p_s^{(a)})^2}\,\hat M_{s,t}^{(a)}\bigl(1-\hat M_{s,t}^{(a)}\bigr).
\numberthis\label{eq:pred-quad-var}
\end{align*}
By Lemma~\ref{lem:bootstrap-lln}, $N_s^{\ast(a)}/n\xrightarrow{\mathrm{a.s.}}p_s^{(a)}$, and by \citet[Lemma~2]{su2025centrallimittheoremstransition}, $\hat M_{s,t}^{(a)}\xrightarrow{\mathrm{a.s.}} M_{s,t}^{(a)}$.
Therefore, the right-hand side of \eqref{eq:pred-quad-var} converges a.s.\ to
\[
\frac{M_{s,t}^{(a)}\bigl(1-M_{s,t}^{(a)}\bigr)}{p_s^{(a)}}=\bar\Lambda_{sat,sat}.
\]
By the martingale CLT \citep[Corollary~3.1]{hall1980martingale}, applied conditionally on $\mathcal{D}_n$ for $\mathbb{P}$-a.e.\ $\mathcal{D}_n$, we obtain that, conditional on $\mathcal{D}_n$, $S_n^{\ast,sat}\xrightarrow{\mathrm{d}}\mathcal{N}(0,\bar\Lambda_{sat,sat})$ for $\mathbb{P}$-almost every $(\mathcal{D}_n)$. Combined with $R_n^{\ast,sat}=o_{\mathbb{P}}(1)$, we conclude that, conditional on $\mathcal{D}_n$, $\sqrt{n}(\hat M_{s,t}^{\ast(a)}-\hat M_{s,t}^{(a)})\xrightarrow{\mathrm{d}}\mathcal{N}(0,\bar\Lambda_{sat,sat})$ for $\mathbb{P}$-almost every $(\mathcal{D}_n)$.

\medskip\noindent\textbf{Step 4: Multivariate CLT via Cram\'{e}r--Wold.}
Step~3 establishes the marginal CLT for each scalar coordinate $\sqrt{n}(\hat M_{s,t}^{\ast(a)}-\hat M_{s,t}^{(a)})$. We now extend this to the joint distribution over all coordinates simultaneously. 

Let $S_n^\ast:=(S_n^{\ast,sat})_{(s,a,t)}$ denote the stacked vector, such that $\sqrt{n}\,\mathrm{vec}(\hat{\mathbf{M}}^\ast-\hat{\mathbf{M}})=S_n^\ast+o_{\mathbb{P}}(1)$, using $R_n^{\ast,sat}=o_{\mathbb{P}}(1)$ from Step~2. By the Cram\'{e}r--Wold theorem, a random vector converges in distribution to a multivariate normal if and only if every fixed linear combination of its coordinates does. It therefore suffices to show that for every fixed $u\in\mathbb{R}^{S^2A}$,
\[
u^\top S_n^\ast \;\xrightarrow{\mathrm{d}}\; \mathcal{N}(0,u^\top\bar\Lambda\, u), \quad \text{conditional on } \mathcal{D}_n, \text{ for } \mathbb{P}\text{-a.e.\ } (\mathcal{D}_n).
\]

We fix such $u$ and let $p_0:=\min_{s,a}p_s^{(a)}>0$. We define the projected increments
\[
\tilde d_{n,i+1}^\ast := \sum_{s,a,t}u_{sat}\,d_{n,i+1}^{\ast,sat}
\]
and the projected partial sum $\tilde S_n^\ast := \sum_{i=0}^{n-1}\tilde d_{n,i+1}^\ast$, so that $u^\top S_n^\ast = \tilde S_n^\ast$. Since each $d_{n,i+1}^{\ast,sat}$ has zero $\mathbb{P}$-conditional mean given $\mathcal{F}_{n,i}$, the same holds for $\tilde d_{n,i+1}^\ast$. Thus, $(\tilde d_{n,i+1}^\ast)$ is a martingale difference array under $\mathbb{P}$ with respect to $(\mathcal{F}_{n,i})$. We now verify the Lindeberg condition. Since $|d_{n,i+1}^{\ast,sat}|\le 1/(\sqrt{n}\,p_s^{(a)})\le 1/(\sqrt{n}\,p_0)$, the triangle inequality gives $|\tilde d_{n,i+1}^\ast|\le \|u\|_1/(\sqrt{n}\,p_0)$. For any $\varepsilon>0$, we set $n_0:=\lceil\|u\|_1^2/(\varepsilon p_0)^2\rceil$. For all $n\ge n_0$, we have $|\tilde d_{n,i+1}^\ast|\le\varepsilon$, and thus
\[
\sum_{i=0}^{n-1}\mathbb{E}\Bigl[(\tilde d_{n,i+1}^\ast)^2\,\indicator\bigl\{|\tilde d_{n,i+1}^\ast|>\varepsilon\bigr\}\;\Big|\;\mathcal{F}_{n,i}\Bigr]=0.
\]
Hence, the Lindeberg condition holds for all $n\ge n_0$.

We next compute the predictable quadratic variation. We expand $(\tilde d_{n,i+1}^\ast)^2 = \bigl(\sum_{s,a,t}u_{sat}\,d_{n,i+1}^{\ast,sat}\bigr)^2$ to get
\[
\mathbb{E}\bigl[(\tilde d_{n,i+1}^\ast)^2\mid\mathcal{F}_{n,i}\bigr]
= \sum_{s,a,t}\sum_{s',a',t'}u_{sat}\,u_{s'a't'}\,\mathbb{E}\bigl[d_{n,i+1}^{\ast,sat}\,d_{n,i+1}^{\ast,s'a't'}\mid\mathcal{F}_{n,i}\bigr].
\]
We observe that when $(s,a)\ne(s',a')$, we have
\[
\indicator\{X_i^\ast=s,A_i^\ast=a\}\cdot\indicator\{X_i^\ast=s',A_i^\ast=a'\}=0,
\]
since the chain occupies a single state--action pair at each step. Hence $d_{n,i+1}^{\ast,sat}\cdot d_{n,i+1}^{\ast,s'a't'}=0$ and those cross-row terms vanish. For the diagonal terms $(s,a)=(s',a')$, we condition on $\mathcal{F}_{n,i}$ and use that $(\indicator\{X_{i+1}^\ast=t\})_{t\in\mathcal{S}}$ given $(X_i^\ast=s,A_i^\ast=a)$ follows a multinomial distribution with $1$ trial and event probabilities $(\hat M_{s,1}^{(a)},\ldots,\hat M_{s,S}^{(a)})$.
The second moment formula of the multinomial distribution gives
\begin{align*}
& \mathbb{E}\bigl[\xi_i^{\ast,sat}\xi_i^{\ast,sat'}\mid\mathcal{F}_{n,i}\bigr] \\
= \, & \indicator\{X_i^\ast=s,A_i^\ast=a\} \times\mathbb{E}\bigl[(\indicator\{X_{i+1}^\ast=t\}-\hat M_{s,t}^{(a)})(\indicator\{X_{i+1}^\ast=t'\}-\hat M_{s,t'}^{(a)}) \\
& \qquad\mid X_i^\ast=s,A_i^\ast=a,\mathcal{D}_n\bigr]\\
= \, & \indicator\{X_i^\ast=s,A_i^\ast=a\}\,\hat M_{s,t}^{(a)}\bigl(\indicator\{t=t'\}-\hat M_{s,t'}^{(a)}\bigr).
\end{align*}
Therefore
\begin{align*}
\sum_{i=0}^{n-1}\mathbb{E}\bigl[(\tilde d_{n,i+1}^\ast)^2\mid\mathcal{F}_{n,i}\bigr]
&=\sum_{s,a}\frac{N_s^{\ast(a)}}{n\,(p_s^{(a)})^2}
\sum_{t,t'}u_{sat}\,u_{sat'} \\
&\qquad \times \hat M_{s,t}^{(a)}
\bigl(\indicator\{t=t'\}-\hat M_{s,t'}^{(a)}\bigr).
\end{align*}
Since $N_s^{\ast(a)}/n\xrightarrow{\mathrm{a.s.}}p_s^{(a)}$ and $\hat M_{s,t}^{(a)}\to M_{s,t}^{(a)}$ a.s., the quadratic variation converges a.s.\ to
\[
\sum_{s,a}\frac{1}{p_s^{(a)}}
\sum_{t,t'}u_{sat}\,u_{sat'}\bigl(M_{s,t}^{(a)}\indicator\{t=t'\}-M_{s,t}^{(a)}M_{s,t'}^{(a)}\bigr)
= u^\top\bar\Lambda\, u.
\]
Hence, conditional on $\mathcal{D}_n$, $\tilde S_n^\ast\xrightarrow{\mathrm{d}}\mathcal{N}(0,u^\top\bar\Lambda u)$ for $\mathbb{P}$-almost every $(\mathcal{D}_n)$. Since $u$ is arbitrary, the Cram\'{e}r--Wold theorem gives that, for $\mathbb{P}$-almost every sequence of datasets $(\mathcal{D}_n)_{n\ge 1}$, conditional on $\mathcal{D}_n$, yields
\[
\sqrt{n}\,\mathrm{vec}(\hat{\mathbf{M}}^\ast-\hat{\mathbf{M}})
\;\xrightarrow{\mathrm{d}}\;\mathcal{N}(0,\bar\Lambda).
\]
This completes the proof.
\end{proof}

%%%%%%%%%%%%%%%%%%%%%%%%%%%%%%%%%%%%%%%%%%%%%%%%%%%%%%%%%%%%
\section{Proofs of Propositions}
\label{app:proof-corollaries}
%%%%%%%%%%%%%%%%%%%%%%%%%%%%%%%%%%%%%%%%%%%%%%%%%%%%%%%%%%%%

\subsection{Proof of Proposition~\ref{prop:bootstrap-M-episodic}}
\label{app:proof-cor-episodic}

\begin{proof}
We use the notation introduced in Section~\ref{sec:bootstrap-consistency}.

We recall that the concatenated single-chain dataset is $\tilde{\mathcal{D}} := \{(\widetilde{X}_i,\widetilde{A}_i,\widetilde{X}_{i+1})\}_{i=0}^{n'-1}$ with $n'=K(T+1)-1$.
The concatenated dataset is defined by setting $\widetilde{X}_{(k-1)(T+1)+j}:=X_j^{(k)}$ and $\widetilde{A}_{(k-1)(T+1)+j}:=A_j^{(k)}$ for $j=0,\ldots,T-1$, and inserting at each episode boundary $k=1,\ldots,K-1$ a pseudo-transition $\widetilde{A}_{k(T+1)-1}:=a_\dagger$ mapping $X_T^{(k)}$ to $X_0^{(k+1)}$.
The extended action space is $\widetilde{\mathcal{A}}=\mathcal{A}\cup\{a_\dagger\}$, and the resulting process $\{(\widetilde{X}_i,\widetilde{A}_i)\}$ is a CMC on $\mathcal{S}\times\widetilde{\mathcal{A}}$ with transition kernel $\widetilde{M}$, where $\widetilde{M}^{(a)}_{s,t}=M^{(a)}_{s,t}$ for $a\in\mathcal{A}$ and $\widetilde{M}^{(a_\dagger)}_{s,t}>0$ for all $s,t\in\mathcal{S}$.

By hypotheses, $\{(\widetilde{X}_i,\widetilde{A}_i)\}$ satisfies Assumptions~\ref{ass:return-time-growth}--\ref{ass:semi-ergodic} on $\mathcal{S}\times\widetilde{\mathcal{A}}$, and there exists $(s_0,a_0)\in\mathcal{S}\times\mathcal{A}$ with $M_{s_0,s_0}^{(a_0)}>0$.
Applying Theorem~\ref{thm:bootstrap-M} to $\tilde{\mathcal{D}}$ gives that, for $\mathbb{P}$-almost every $(\mathcal{D}_n)_{n\ge1}$, conditional on $\mathcal{D}_n$, we have
\[
\sqrt{n'}\,\mathrm{vec}(\hat{\widetilde{\mathbf{M}}}^\ast - \hat{\widetilde{\mathbf{M}}})
\;\xrightarrow{\mathrm{d}}\;\mathcal{N}(0,\bar{\widetilde\Lambda}), \quad \text{as }K\to\infty,
\]
where $\bar{\widetilde\Lambda}$ is the asymptotic covariance matrix.
Among the $T+1$ steps per episode, exactly $T$ involve actions in $\mathcal{A}$. Thus the ergodic occupation measure of the concatenated chain satisfies 
\[
\tilde p_s^{(a)}=\tfrac{T}{T+1}\,p_s^{(a)} \text{ for $a\in\mathcal{A}$, giving }\bar{\widetilde\Lambda}_{sat,sat}=\tfrac{T+1}{T}\,\bar\Lambda_{sat,sat} \text{ for all $(s,a,t) \in \mathcal S \times \mathcal A \times \mathcal S$.}
\]

Since $a_\dagger$-transitions are excluded from $\hat{\mathbf{M}}$ (the empirical estimator uses only $a\in\mathcal{A}$), we take the marginals of the Gaussian limit to get 
\[
\sqrt{n'}\,\mathrm{vec}(\hat{\mathbf{M}}^\ast-\hat{\mathbf{M}})\xrightarrow{\mathrm{d}}\mathcal{N}(0,\bar{\widetilde\Lambda}|_{\mathcal{S}\times\mathcal{A}\times\mathcal{S}}),
\] 
where $\bar{\widetilde\Lambda}|_{\mathcal{S}\times\mathcal{A}\times\mathcal{S}}$ denotes the restriction of $\bar{\widetilde\Lambda}$ to the entries corresponding to $(s,a,t)\in\mathcal{S}\times\mathcal{A}\times\mathcal{S}$.

Among the $T+1$ steps per episode, exactly $T$ involve actions in $\mathcal{A}$. Thus the ergodic occupation measure of the concatenated chain satisfies $\tilde p_s^{(a)}=\tfrac{T}{T+1}\,p_s^{(a)}$ for $a\in\mathcal{A}$, giving $\bar{\widetilde\Lambda}_{sat,sat}=\tfrac{T+1}{T}\,\bar\Lambda_{sat,sat}$.

Since $n'=K(T+1)-1$ and $n=K T$, we have 
\[
\frac{n}{n'} = \frac{KT}{K(T+1)-1} = \frac{T}{T+1-1/K} \to \frac{T}{T+1}, \quad \text{as } K\to\infty.
\]
Therefore, re-normalizing $\mathrm{vec}(\hat{\mathbf{M}}^\ast-\hat{\mathbf{M}})$ by $\sqrt{n}$ yields
\[
\sqrt{n}\,\mathrm{vec}(\hat{\mathbf{M}}^\ast-\hat{\mathbf{M}}) = \sqrt{\frac{n}{n'}}\,\sqrt{n'}\,\mathrm{vec}(\hat{\mathbf{M}}^\ast-\hat{\mathbf{M}})
\;\xrightarrow{\mathrm{d}}\;
\mathcal{N}\!\Bigl(0,\tfrac{T}{T+1}\,\bar{\widetilde\Lambda}|_{\mathcal{S}\times\mathcal{A}\times\mathcal{S}}\Bigr)
= \mathcal{N}(0,\bar\Lambda). \qedhere
\]
\end{proof}

\subsection{Proof of Proposition~\ref{prop:bootstrap-vq}}
\label{app:proof-cor-vq}
The proof applies the bootstrap delta method to Theorem~\ref{thm:bootstrap-M} (single-chain) or Proposition~\ref{prop:bootstrap-M-episodic} (episodic).
Before we provide the formal proof, we recall Hadamard differentiability \citep[Chapter 20.2]{van_der_vaart_asymptotic_2000}.

A map $\phi:\mathbb{D}_\phi \to\mathbb{E}$, defined on a subset $\mathbb{D}_\phi$ of a normed space $\mathbb{D}$ and taking values in another normed space $\mathbb{E}$, is Hadamard differentiable at $\theta\in\mathbb{D}_\phi$ tangentially to $\mathbb{D}_0\subseteq\mathbb{D}$ if there exists a continuous linear map $\phi'_\theta:\mathbb{D}_0\to\mathbb{E}$ such that 
\[
\left\|\frac{\phi(\theta+t h_n)-\phi(\theta)}{t} - \phi'_\theta(h)\right\|_\mathbb{E}\to 0 \, \text{ as $t \to 0$, for every $h_n\to h\in\mathbb{D}_0$}.
\]
The bootstrap delta method \citep[Theorem 23.9]{van_der_vaart_asymptotic_2000} then states that if $\sqrt{n}(\hat\theta^\ast-\hat\theta)\xrightarrow{\mathrm{d}} L$ conditionally on $\mathcal{D}_n$ in probability, and $\phi$ is Hadamard differentiable at $\theta$ tangentially to a subspace $\mathbb{D}_0$, then $\sqrt{n}(\phi(\hat\theta^\ast)-\phi(\hat\theta))\xrightarrow{\mathrm{d}}\phi'_\theta(L)$ conditionally on $\mathcal{D}_n$ in probability.

The proof of Proposition~\ref{prop:bootstrap-vq} is as follows.
\begin{proof}
\noindent\textbf{OPE targets $V_\pi$ and $Q_\pi$.}
We first derive the bootstrap consistency results for the OPE targets $V_\pi$.
The bootstrap consistency for $Q_\pi$ is entirely analogous.

We consider any fixed stationary target policy $\pi$ with block-diagonal policy matrix $\Pi$.
The value function is given by the Bellman equation
\[
V_\pi(\mathbf{M}) = (I-\gamma\Pi\mathbf{M})^{-1}g.
\]
We show that the map $V_\pi: \mathbb{R}^{S^2A} \to \mathbb{R}^S$ is Hadamard differentiable at $\mathbf{M}$.
Let $H$ be any perturbation matrix and $t>0$ sufficiently small. Using the matrix property $A^{-1}-B^{-1}=A^{-1}(B-A)B^{-1}$, we have
\begin{align*}
V_\pi(\mathbf{M}+tH)-V_\pi(\mathbf{M})
&= \bigl[(I-\gamma\Pi(\mathbf{M}+tH))^{-1}-(I-\gamma\Pi\mathbf{M})^{-1}\bigr]g \\
&= (I-\gamma\Pi(\mathbf{M}+tH))^{-1}(\gamma\Pi tH)(I-\gamma\Pi\mathbf{M})^{-1}g \\
&= t\gamma(I-\gamma\Pi(\mathbf{M}+tH))^{-1}\Pi H\,V_\pi(\mathbf{M}).
\end{align*}
Dividing by $t$ and taking $t\to 0$, we get
\[
\frac{V_\pi(\mathbf{M}+tH)-V_\pi(\mathbf{M})}{t} \;\to\; \gamma(I-\gamma\Pi\mathbf{M})^{-1}\Pi H\,V_\pi(\mathbf{M}).
\]
We define the linear map $V'_\mathbf{M}:\mathbb{R}^{S^2A}\to\mathbb{R}^S$ by $V'_\mathbf{M}(H) := \gamma(I-\gamma\Pi\mathbf{M})^{-1}\Pi H\,V_\pi(\mathbf{M})$.
The derivative $V'_\mathbf{M}$ is linear and continuous in $H$, hence for any $H_n\to H$, we have
\[
\left\|\frac{V_\pi(\mathbf{M}+tH_n)-V_\pi(\mathbf{M})}{t} - V'_\mathbf{M}(H) \right\|_{\mathbb{E}} \to 0.
\]
Therefore, $\mathbf{M}\mapsto V_\pi(\mathbf{M})$ is Hadamard differentiable at $\mathbf{M}$.
Since Theorem~\ref{thm:bootstrap-M} and Proposition~\ref{prop:bootstrap-M-episodic} establish that
$\sqrt{n}\,\mathrm{vec}(\hat{\mathbf{M}}^\ast-\hat{\mathbf{M}})\xrightarrow{\mathrm{d}}\mathcal{N}(0,\bar\Lambda)$
conditionally on $\mathcal{D}_n$---the same Gaussian law as $\sqrt{n}\,\mathrm{vec}(\hat{\mathbf{M}}-\mathbf{M})$
in \citet[Corollary~1]{su2025centrallimittheoremstransition}---the bootstrap delta method with derivative $V'_\mathbf{M}$ gives,
conditionally on $\mathcal{D}_n$ in probability,
\[
\sqrt{n}(\hat V_\pi^\ast - \hat V_\pi) \;\xrightarrow{\mathrm{d}}\; \mathcal{N}(0,\Sigma_V^\pi),
\]
where $\Sigma_V^\pi$ is the asymptotic covariance of $\sqrt{n}(\hat V_\pi - V_\pi)$ in
\citet[Theorem~2]{su2025centrallimittheoremstransition}.
The statement for $Q_\pi$ is analogous with $V'_\mathbf{M}$ replaced by the analogous $Q'_\mathbf{M}$ and $\Sigma_V^\pi$ replaced by $\Sigma_Q^\pi$.

\medskip\noindent\textbf{OPR targets $V_\star$ and $\Q_\star$.}
We next derive the bootstrap consistency result for the OPR target $V_\star$.
The bootstrap consistency for $Q_\star$ is entirely analogous.

The optimal-value functional $\mathbf{M}\mapsto V_\star(\mathbf{M})$ is not directly Hadamard differentiable, because the policy-selection map $\mathbf{M}\mapsto\pi_\star(\mathbf{M})$ involves an argmax.
We resolve this via the non-degeneracy condition
\[
\min_{s\in\mathcal{S}} \min_{a \neq \pi_\star(s)}\bigl\{Q_\star(s, \pi_\star(s))-Q_\star(s,a)\bigr\}>0.
\]
We claim that this condition implies that $\pi_\star$ is locally constant in a neighborhood of $\mathbf{M}$.
We denote the optimal $Q$-function under $\mathbf{M'}$ by $Q_\star(\mathbf{M'})$.

By the non-degeneracy condition, we define
\[
\Delta
:=
\min_{s\in\mathcal S}
\min_{a\neq \pi_\star(s)}
\{Q_\star(\mathbf{M};s,\pi_\star(\mathbf{M}; s))-Q_\star(\mathbf{M};s,a)\}
>0.
\]
Since the finite discounted optimal Bellman equation has a unique fixed point and the
Bellman optimality operator is continuous in \(\mathbf{M'}\), the map
\(\mathbf{M'}\mapsto Q_\star(\mathbf{M'})\) is continuous. 
Hence there exists a neighborhood
\(\mathcal U\) of \(\mathbf{M}\) such that, for every \(\mathbf{M'}\in\mathcal U\), we have
\[
\max_{s\in\mathcal S}\max_{a\in\mathcal A}
|Q_\star(\mathbf{M'};s,a)-Q_\star(\mathbf{M};s,a)|
< \Delta/4 .
\]
Therefore, for every \(s\in\mathcal S\) and every \(a\neq \pi_\star(\mathbf{M};s)\), we have
\[
\begin{aligned}
Q_\star(\mathbf{M'};s,\pi_\star(\mathbf{M};s))-Q_\star(\mathbf{M'};s,a)
&= Q_\star(\mathbf{M};s,\pi_\star(\mathbf{M};s)) - Q_\star(\mathbf{M};s,a) \\
&\quad+ \left[Q_\star(\mathbf{M'};s,\pi_\star(\mathbf{M};s)) - Q_\star(\mathbf{M};s,\pi_\star(\mathbf{M};s))\right] \\
& \quad \quad - \left[Q_\star(\mathbf{M'};s,a)-Q_\star(\mathbf{M};s,a)\right]
\\
&\ge
Q_\star(\mathbf{M};s,\pi_\star(\mathbf{M};s))-Q_\star(\mathbf{M};s,a)
\\
&\quad
-2\max_{s,a}|Q_\star(\mathbf{M'};s,a)-Q_\star(\mathbf{M};s,a)|
\\
&> \Delta-\Delta/2
= \Delta/2>0.
\end{aligned}
\]
Thus \(\pi_\star(\mathbf{M}) = \pi_\star(\mathbf{M'})\) for all $\mathbf{M'} \in \mathcal U$. 

By \citet[Lemma 1]{su2025centrallimittheoremstransition}, we have $\hat{\mathbf{M}}\xrightarrow{\mathrm{a.s.}}\mathbf{M}$. 
Then $\hat{\mathbf{M}}\in\mathcal{U}$ and $\hat\Pi_\star=\Pi_\star$ for large $n$ a.s.
On this probability one event, the OPR functional locally coincides with the fixed-policy map $\mathbf{M}'\mapsto V_\star(\mathbf{M}')$,
which is Hadamard differentiable by the aforementioned OPE argument.
We can therefore apply the bootstrap delta method with the same derivative $V'_\mathbf{M}$ as in the OPE case.

Since Theorem~\ref{thm:bootstrap-M} and Proposition~\ref{prop:bootstrap-M-episodic} establish that
$\sqrt{n}\,\mathrm{vec}(\hat{\mathbf{M}}^\ast-\hat{\mathbf{M}})\xrightarrow{\mathrm{d}}\mathcal{N}(0,\bar\Lambda)$ conditionally on $\mathcal{D}_n$---the same Gaussian law
as \citet[Corollary~1]{su2025centrallimittheoremstransition}---applying the bootstrap delta method yields,
conditional on $\mathcal{D}_n$ in probability, we obtain
\[
\sqrt{n}\bigl(\hat V_\star^\ast - \hat V_\star\bigr) \;\xrightarrow{\mathrm{d}}\;
\mathcal{N}(0,\Sigma_V^{\pi_\star}), \quad\text{as } n\to\infty,
\]
where $\Sigma_V^{\pi_\star}$ is the asymptotic covariance of $\sqrt{n}(\hat V_\star - V_\star)$ in
\citet[Theorem~3]{su2025centrallimittheoremstransition}. 
The statement for $Q_\star$ is analogous with $V'_\mathbf{M}$ replaced by the analogous $Q'_\mathbf{M}$ and $\Sigma_V^{\pi_\star}$ replaced by $\Sigma_Q^{\pi_\star}$.
\qedhere
\end{proof}

%%%%%%%%%%%%%%%%%%%%%%%%%%%%%%%%%%%%%%%%%%%%%%%%%%%%%%%%%%%%
\clearpage
\section{Details of Numerical Experiments}
\label{app:tables}
%%%%%%%%%%%%%%%%%%%%%%%%%%%%%%%%%%%%%%%%%%%%%%%%%%%%%%%%%%%%

We evaluate finite-sample CI coverage of the proposed bootstrap on the RiverSwim problem \citep{strehl2004empirical}, replicating the experimental setup of the small state-space regime of \citet{zhu2024uncertainty}.
Our RiverSwim state--action space has $S=6$ states $\{1,\ldots,6\}$ and $A=2$ actions ($0$: swim left, $1$: swim right).
We set the rewards as $r(1,0)=1$, $r(6,1)=10$, $r(s,a)=0$ otherwise, and discount factor $\gamma=0.95$.
Figure~\ref{fig:riverswim} illustrates the transition and reward structure of our RiverSwim setup.
RiverSwim is designed to be difficult for exploration: the transition structure induces highly imbalanced visitation frequencies across state--action pairs, with upstream states (near state~$6$) visited far less frequently than downstream states (near state~$1$).
Moreover, at its left-most state~$1$, the optimal policy is nearly degenerate, with $Q_\star(1,0)\approx Q_\star(1,1)$, thus the optimal action is only weakly identifiable.
This places the problem near the boundary of optimal policy degeneracy and the transition probabilities for rarely visited state--action pairs are weakly estimated, thus the conditions for asymptotic normality in Proposition~\ref{prop:bootstrap-vq} are most strained.

\begin{figure}[t]
\centering
\resizebox{\linewidth}{!}{%
\begin{tikzpicture}[
    >=Stealth,
    shorten >=1pt, shorten <=1pt,
    state/.style={circle, draw=black, thick, minimum size=0.7cm, font=\small},
    lbl/.style={font=\scriptsize, inner sep=1pt, fill=white},
    every loop/.style={looseness=5, min distance=5mm},
]
  \node[state] (s1) at (0,0)    {$1$};
  \node[state] (s2) at (2.4,0)  {$2$};
  \node[state] (s3) at (4.8,0)  {$3$};
  \node[state] (s4) at (7.2,0)  {$4$};
  \node[state] (s5) at (9.6,0)  {$5$};
  \node[state] (s6) at (12.0,0) {$6$};

  % State 1: left self-loop (action 0, prob 1, reward 1)
  \draw[->] (s1) edge[loop left]  node[lbl] {$(0,1,1)$} (s1);
  % State 1: right self-loop (action 1, stay 0.7, reward 0)
  \draw[->] (s1) edge[loop above] node[lbl,above] {$(1,0.7,0)$} (s1);

  % States 2--5: right self-loop (stay 0.6)
  \draw[->] (s2) edge[loop above] node[lbl,above] {$(1,0.6,0)$} (s2);
  \draw[->] (s3) edge[loop above] node[lbl,above] {$(1,0.6,0)$} (s3);
  \draw[->] (s4) edge[loop above] node[lbl,above] {$(1,0.6,0)$} (s4);
  \draw[->] (s5) edge[loop above] node[lbl,above] {$(1,0.6,0)$} (s5);

  % State 6: right self-loop (stay 0.3, reward 10)
  \draw[->] (s6) edge[loop above] node[lbl,above] {$(1,0.3,10)$} (s6);

  % Forward arcs: right action, advance to next state (prob 0.3)
  % Use out/in to keep arcs clearly below the self-loops
  \draw[->] (s1) to[out=20,in=160] node[lbl,above] {$(1,0.3,0)$} (s2);
  \draw[->] (s2) to[out=20,in=160] node[lbl,above] {$(1,0.3,0)$} (s3);
  \draw[->] (s3) to[out=20,in=160] node[lbl,above] {$(1,0.3,0)$} (s4);
  \draw[->] (s4) to[out=20,in=160] node[lbl,above] {$(1,0.3,0)$} (s5);
  \draw[->] (s5) to[out=20,in=160] node[lbl,above] {$(1,0.3,0)$} (s6);

  % Backward arcs: left action (prob 1) -- shallow curve below
  % For a rightward-to-leftward arc, bend left curves downward
  \draw[->] (s2) to[bend left=12] node[lbl,below] {$(0,1,0)$} (s1);
  \draw[->] (s3) to[bend left=12] node[lbl,below] {$(0,1,0)$} (s2);
  \draw[->] (s4) to[bend left=12] node[lbl,below] {$(0,1,0)$} (s3);
  \draw[->] (s5) to[bend left=12] node[lbl,below] {$(0,1,0)$} (s4);
  \draw[->] (s6) to[bend left=12] node[lbl,below] {$(0,1,0)$} (s5);

  % Backward slip arcs: right action, drift left -- deeper curve below
  \draw[->] (s2) to[bend left=35] node[lbl,below] {$(1,0.1,0)$} (s1);
  \draw[->] (s3) to[bend left=35] node[lbl,below] {$(1,0.1,0)$} (s2);
  \draw[->] (s4) to[bend left=35] node[lbl,below] {$(1,0.1,0)$} (s3);
  \draw[->] (s5) to[bend left=35] node[lbl,below] {$(1,0.1,0)$} (s4);
  \draw[->] (s6) to[bend left=35] node[lbl,below] {$(1,0.7,10)$} (s5);

\end{tikzpicture}}
\caption{The RiverSwim MDP with $S=6$ states. Arc labels $(a,p,r)$: action ($0$\,=\,swim left, $1$\,=\,swim right), transition probability $p$, immediate reward $r$.}
\label{fig:riverswim}
\end{figure}

In all experiments, we employ a stationary behavior policy $\pi_b(1\mid s)=0.8$ for all $s$. 
We use $B=1{,}000$ bootstrap replicates and $N_{\mathrm{reps}}=1{,}000$ Monte Carlo replications, and compare the model-based bootstrap (percentile and pivot CIs) against the episodic bootstrap \citep{hao2021bootstrapping} and plug-in CLT CIs \citep{zhu2024uncertainty, su2025centrallimittheoremstransition}.
We evaluate coverage at the nominal $95\%$ level ($\alpha=0.05$) for the entries $Q(1,0)$, $Q(3,1)$, $Q(6,0)$, and $V(1)$--$V(6)$. Full results appear in Tables~\ref{tab:riverswim-uniform-coverage}--\ref{tab:riverswim-optimal-coverage-T100}.
Figures~\ref{fig:coverage-policy} and~\ref{fig:coverage-optimal} display the coverages of the $n\in\{500,1{,}000\}$ subset for the selected entries $Q(1,0)$, $Q(3,1)$, $Q(6,0)$, $V(2)$, $V(4)$, and $V(5)$ in OPE and OPR, respectively.
Appendices~\ref{app:ope-tables-multilevel} and~\ref{app:opr-tables-multilevel} report additional experiments at nominal $50\%$ and $90\%$, respectively.
Table~\ref{tab:exp-settings} summarizes the settings of all experiments and the corresponding coverage tables. 
All coverage tables use a green cell-shading scheme described below.

\textbf{Cell shading.}
In all coverage tables, we shade the cells green in proportion to empirical coverage
relative to the nominal level.
White indicates coverage at or below a floor value; full green indicates coverage at
or above nominal; intermediate shades indicate coverages between the floor and nominal.
The floor values are $0.75$ (nominal $95\%$), $0.70$ (nominal $90\%$), and $0.30$
(nominal $50\%$).

For OPE, we conclude that the model-based bootstrap substantially outperforms the episodic bootstrap and plug-in CLT across all three target policies, with the percentile CI delivering the strongest overall performance and near-nominal coverage at $n=500$--$1{,}000$.
The performance of the model-based bootstrap is weakest for entries associated with state~$6$, where visitation is sparse and the corresponding transition probabilities are weakly identified. Undercoverage at these entries persists at $n=100$ and is most severe for the mostly-left policy.
The episodic bootstrap fails to produce valid intervals at $n=50$, where a single episode eliminates all bootstrap variation, and suffers undercoverage at $n \le 500$.
The plug-in CI is moderately undercovered at $n\le 100$ for all policies.
Coverage improves monotonically with $n$ for all methods.
The same conclusions hold at nominal $50\%$ and $90\%$ (Appendix~\ref{app:ope-tables-multilevel}).

For OPR, we conclude that the model-based bootstrap with percentile CI performs markedly better than the baselines and is the strongest method in the OPR study, achieving near-nominal coverage at $n=500$--$1{,}000$ for $T=50$ and $T=100$.
The performance of the model-based bootstrap is weakest at $T=10$, where coverage at $n=1{,}000$ remains below nominal, and coverage improves monotonically with $T$.
The pivot CI undercovers at all $(n,T)$ combinations, most severely at $Q_\star(1,0)$, where $Q_\star(1,0)\approx Q_\star(1,1)$ and the non-degeneracy condition in Proposition~\ref{prop:bootstrap-vq} is weakest.
The episodic bootstrap collapses to zero coverage at $n=50$ when $T\ge 50$, where at most one complete episode is available. 
At $T=10$ the collapse is avoided but coverage remains well below nominal.
The plug-in CI severely undercovers at $n\le 100$ and approaches near-nominal performance only at $n=1{,}000, T=100$ for the rightmost states and state-action pairs.
Coverage improves monotonically with $n$ for all methods, and with $T$ for the model-based bootstrap and plug-in CI.
The same conclusions hold at nominal $50\%$ and $90\%$ (Appendix~\ref{app:opr-tables-multilevel}).

\begin{figure*}[t]
\centering
\includegraphics[width=\textwidth]{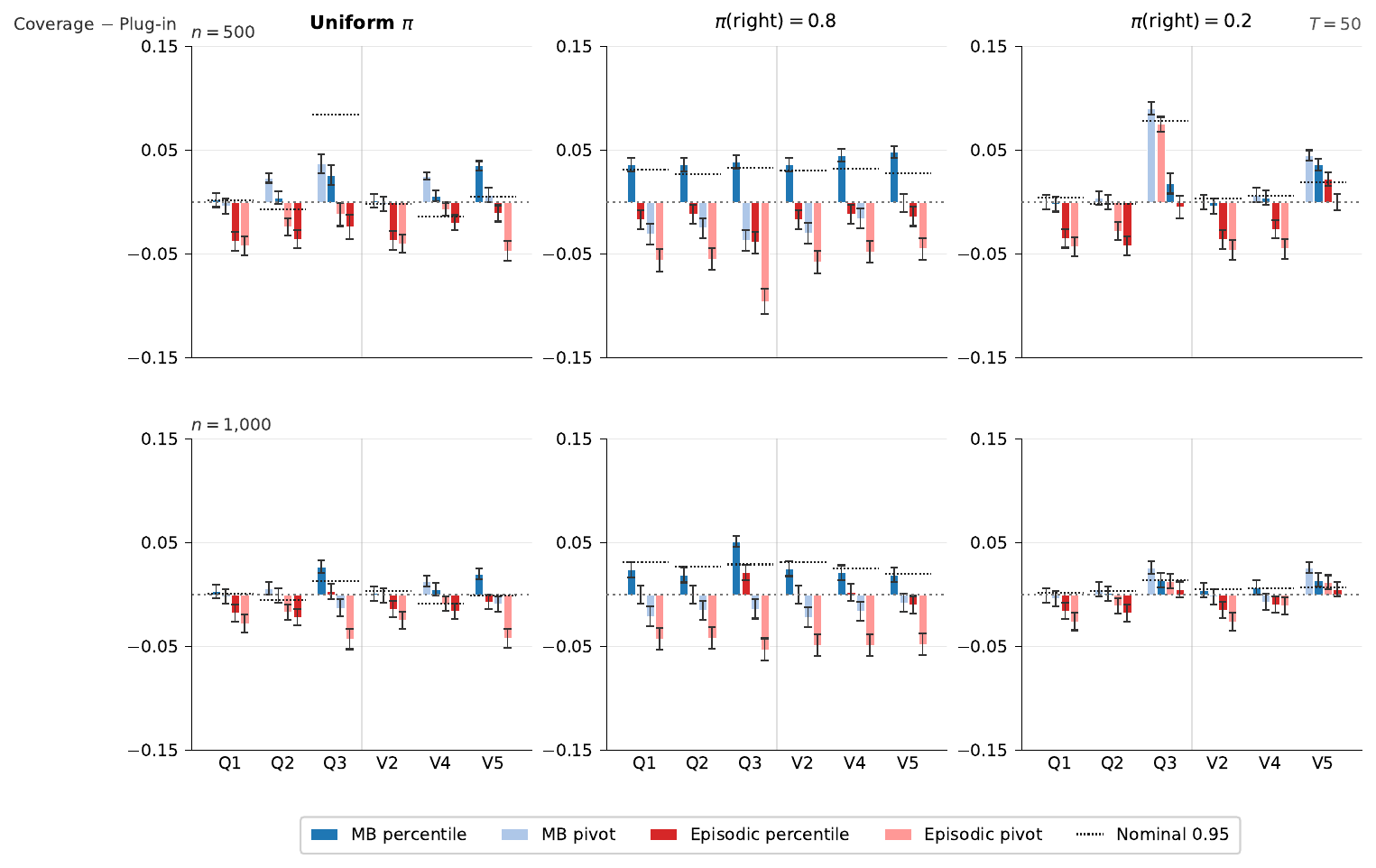}
\caption{Empirical coverage relative to the Plug-in (CLT) baseline for $Q^\pi(1,0)$, $Q^\pi(3,1)$, $Q^\pi(6,0)$, $V^\pi(2)$, $V^\pi(4)$, $V^\pi(5)$ in RiverSwim. Horizontal lines mark the nominal $0.95$ level for each entry. Columns correspond to the uniform, mostly-right, and mostly-left target policies, rows correspond to $n\in\{500,1{,}000\}$, and all panels use $T=50$. This figure visualizes the $n\in\{500,1{,}000\}$ subset of Tables~\ref{tab:riverswim-uniform-coverage}--\ref{tab:riverswim-mostly-left-coverage}. Labels Q1--Q3 and V2, V4, V5 correspond to these entries respectively.
Full results appear in Tables~\ref{tab:riverswim-uniform-coverage}--\ref{tab:riverswim-mostly-left-coverage}.}
\label{fig:coverage-policy}
\end{figure*}

\begin{figure*}[t]
\centering
\includegraphics[width=\textwidth]{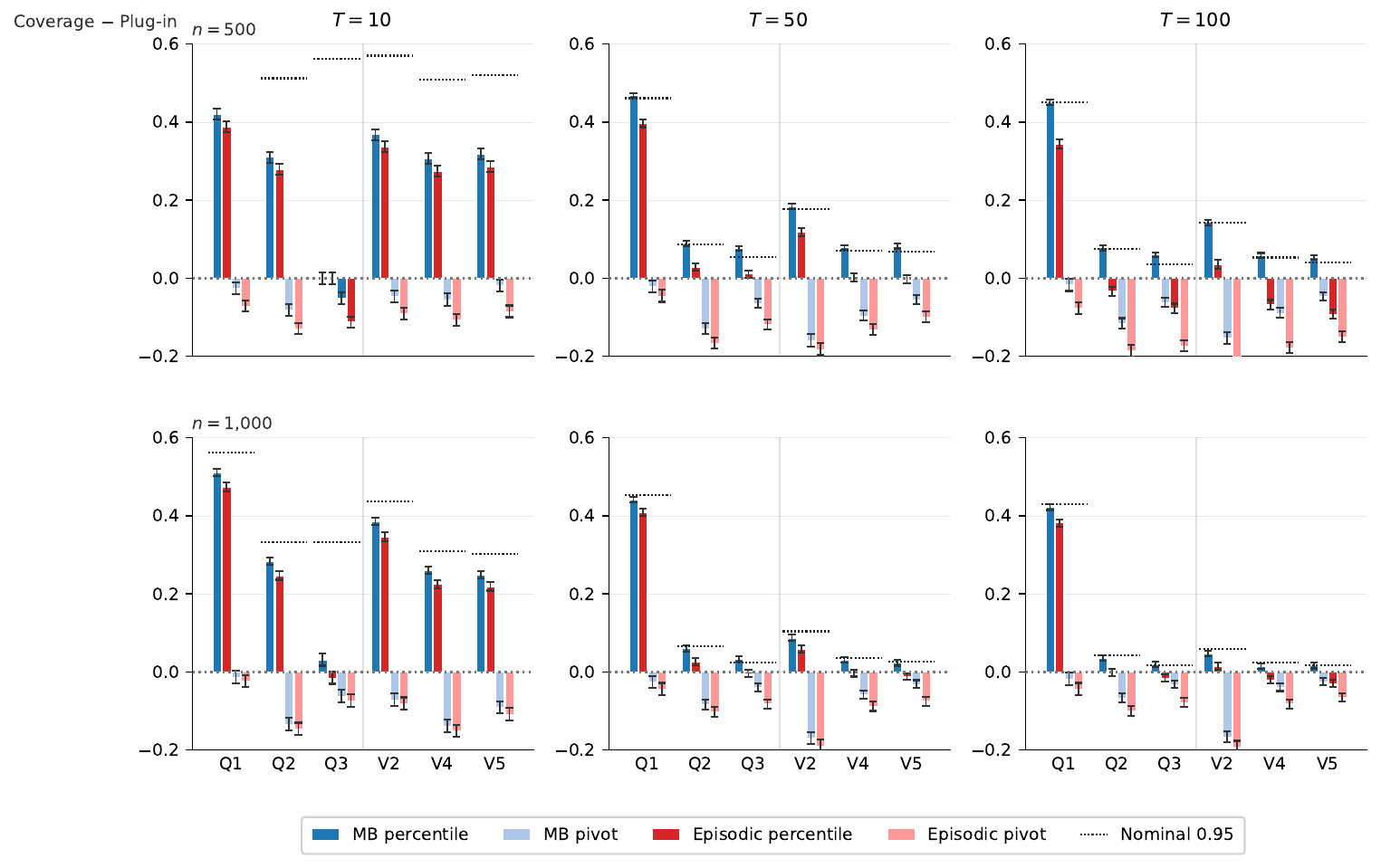}
\caption{Empirical coverage relative to the Plug-in (CLT) baseline for $Q_\star(1,0)$, $Q_\star(3,1)$, $Q_\star(6,0)$, $V_\star(2)$, $V_\star(4)$, $V_\star(5)$ in RiverSwim. Horizontal lines mark the nominal $0.95$ level for each entry. Columns correspond to $T\in\{10,50,100\}$, rows correspond to $n\in\{500,1{,}000\}$, and the target policy is optimal. This figure visualizes the $n\in\{500,1{,}000\}$ subset of Tables~\ref{tab:riverswim-optimal-coverage}--\ref{tab:riverswim-optimal-coverage-T100}. Labels Q1--Q3 and V2, V4, V5 correspond to these entries respectively.
Full results for all $n$ appear in Tables~\ref{tab:riverswim-optimal-coverage}--\ref{tab:riverswim-optimal-coverage-T100}.}
\label{fig:coverage-optimal}
\end{figure*}

\noindent

\textbf{Handling zero counts.} We handle state-action pairs where $N_s^{(a)}=0$ in the bootstrap simulation as follows.
In bootstrap simulation, we use the repaired kernel
\[
\hat M^{\mathrm{sim},(a)}_{s,\cdot}
=
\begin{cases}
\hat M^{(a)}_{s,\cdot}, & N_s^{(a)}>0,\\
\delta_s, & N_s^{(a)}=0,
\end{cases}
\]
where \(\delta_s\) is the point mass at \(s\).
We thus treat an unobserved state-action pair as a self-loop in bootstrap simulation rather than assigning fictitious transition probabilities. 
Similarly, if \(N_s=0\), we set
\(\hat\pi_b(\cdot\mid s)\) to the uniform distribution. Under Assumption~\ref{ass:semi-ergodic}, these rules are used with probability
tending to zero and hence have no asymptotic effect. 

We organize the remainder of this appendix as follows.
Appendix~\ref{app:ope-tables} reports OPE ($V^\pi$, $Q^\pi$) results at nominal $95\%$;
Appendix~\ref{app:opr-tables} reports OPR ($V_\star$, $Q_\star$) results at nominal
$95\%$;
Appendices~\ref{app:ope-tables-multilevel} and~\ref{app:opr-tables-multilevel} report OPE and OPR results at nominal $50\%$ and $90\%$, respectively.
Figures~\ref{fig:coverage-policy-50} and~\ref{fig:coverage-policy-90} display OPE coverages for the $n\in\{500,1{,}000\}$ subset at nominal $50\%$ and $90\%$;
Figures~\ref{fig:coverage-optimal-50} and~\ref{fig:coverage-optimal-90} display the corresponding OPR results.

\begin{table}[ht]
\centering
\caption{Settings for all RiverSwim coverage tables in this appendix.
$K = n/T$ is the number of episodes.
Policy abbreviations: uniform policy $\pi(a\mid s)=0.5$;
mostly-right policy $\pi(1\mid s)=0.8$; mostly-left policy $\pi(0\mid s)=0.8$;
optimal policy $\pi^\star$.
All experiments use $B=1{,}000$ bootstrap replicates and $N_{\mathrm{reps}}=1{,}000$
Monte Carlo replications.
Tables~\ref{tab:riverswim-uniform-coverage}--\ref{tab:riverswim-mostly-left-coverage} report OPE
coverages and Tables~\ref{tab:riverswim-optimal-coverage}--\ref{tab:riverswim-optimal-coverage-T100}
report OPR coverages at nominal $95\%$.
The $50\%$ and $90\%$ rows refer to additional-nominal-level tables in
Appendices~\ref{app:ope-tables-multilevel} and~\ref{app:opr-tables-multilevel}.}
\label{tab:exp-settings}
\vspace{1ex}
\small
% [inline block 0: 7 envs, 28834 chars in 7 pieces, piece 1 here, a bare % at each other -> data_tex | \begin{tabular}{|c|c|c|c|c|l|} \hline...]

\end{table}
\subsection{OPE ($V^\pi$, $Q^\pi$)}
\label{app:ope-tables}

\begin{table}[ht]
\centering
\caption{Empirical coverage (nominal $95\%$) for a fixed target policy $\pi$ (uniform over actions) in RiverSwim. Comparison of model-based bootstrap (ours), episodic bootstrap, and plug-in CI, episode length $T=50$.}
\label{tab:riverswim-uniform-coverage}
%
\end{table}

\begin{table}[ht]
\centering
\caption{Empirical coverage (nominal $95\%$) for a fixed target policy $\pi$ with $\pi(\text{right}\mid s)=0.8$ for all $s$ in RiverSwim. Comparison of model-based bootstrap (ours), episodic bootstrap, and plug-in CI, episode length $T=50$.}
\label{tab:riverswim-mostly-right-coverage}
%
\end{table}

\begin{table}[ht]
\centering
\caption{Empirical coverage (nominal $95\%$) for a fixed target policy $\pi$ with $\pi(\text{right}\mid s)=0.2$ for all $s$ in RiverSwim. Comparison of model-based bootstrap (ours), episodic bootstrap, and plug-in CI, episode length $T=50$.}
\label{tab:riverswim-mostly-left-coverage}
%
\end{table}

\clearpage
\subsection{OPR ($V_\star$, $Q_\star$)}
\label{app:opr-tables}

\begin{table}[ht]
\centering
\caption{Empirical coverage (nominal $95\%$) for selected entries of $Q_\star$ and $V_\star$ in RiverSwim. Comparison of model-based bootstrap, episodic bootstrap and plug-in CI, episode length $T=50$.}
\label{tab:riverswim-optimal-coverage}
%
\end{table}

\begin{table}[ht]
\centering
\caption{Empirical coverage (nominal $95\%$) for selected entries of $Q_\star$ and $V_\star$ in RiverSwim. Comparison of model-based bootstrap, episodic bootstrap and plug-in CI, episode length $T=10$.}
\label{tab:riverswim-optimal-coverage-T10}
%
\end{table}

\begin{table}[ht]
\centering
\caption{Empirical coverage (nominal $95\%$) for selected entries of $Q_\star$ and $V_\star$ in RiverSwim. Comparison of model-based bootstrap, episodic bootstrap, and plug-in CI, episode length $T=100$.}
\label{tab:riverswim-optimal-coverage-T100}
%
\end{table}

\clearpage
\subsection{OPE: Additional Nominal Levels ($50\%$ and $90\%$)}
\label{app:ope-tables-multilevel}

\begin{figure}[ht]
\centering
\includegraphics[width=\linewidth]{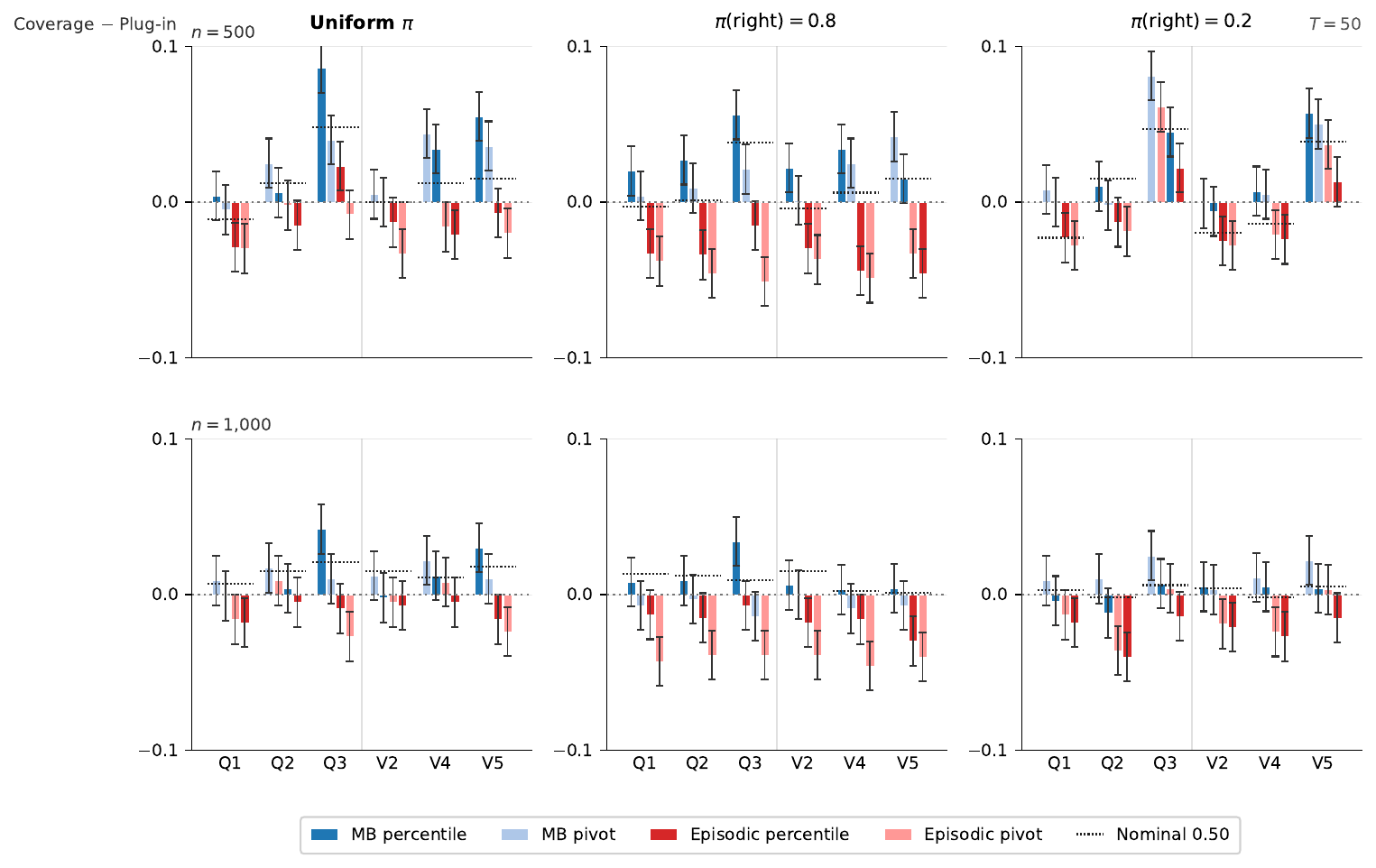}
\caption{Empirical coverage relative to the Plug-in (CLT) baseline for
$Q^\pi(1,0)$, $Q^\pi(3,1)$, $Q^\pi(6,0)$, $V^\pi(2)$, $V^\pi(4)$, $V^\pi(5)$
in RiverSwim at nominal $50\%$. Horizontal lines mark the nominal $0.50$ level
for each entry. Columns correspond to the uniform, mostly-right, and mostly-left
target policies, rows correspond to $n\in\{500,1{,}000\}$, and all panels use $T=50$.
Full results appear in Tables~\ref{tab:riverswim-uniform-coverage-50}--\ref{tab:riverswim-mostly-left-coverage-50}.}
\label{fig:coverage-policy-50}
\end{figure}

\begin{figure}[ht]
\centering
\includegraphics[width=\linewidth]{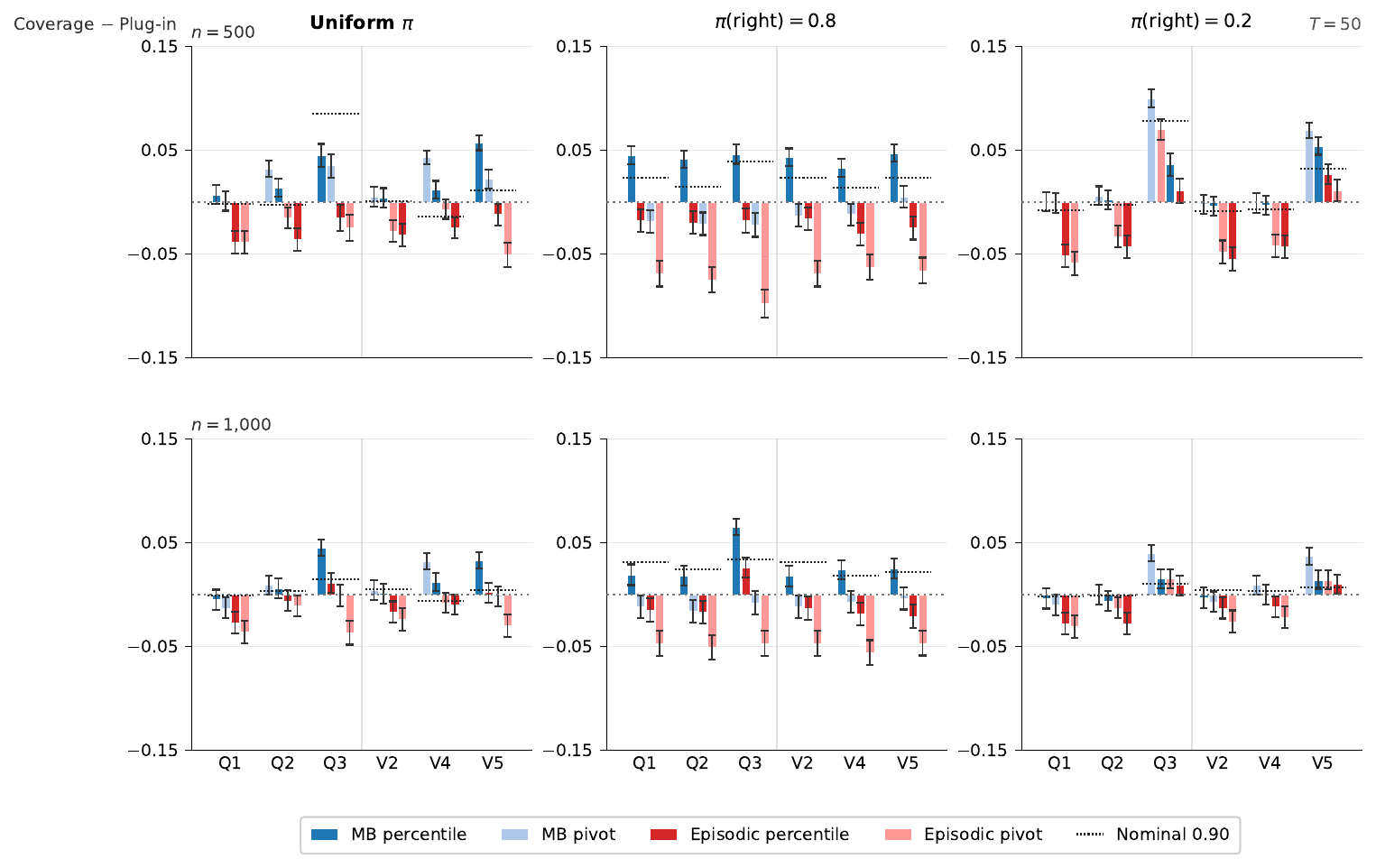}
\caption{Empirical coverage relative to the Plug-in (CLT) baseline for
$Q^\pi(1,0)$, $Q^\pi(3,1)$, $Q^\pi(6,0)$, $V^\pi(2)$, $V^\pi(4)$, $V^\pi(5)$
in RiverSwim at nominal $90\%$. Horizontal lines mark the nominal $0.90$ level
for each entry. Columns correspond to the uniform, mostly-right, and mostly-left
target policies, rows correspond to $n\in\{500,1{,}000\}$, and all panels use $T=50$.
Full results appear in Tables~\ref{tab:riverswim-uniform-coverage-90}--\ref{tab:riverswim-mostly-left-coverage-90}.}
\label{fig:coverage-policy-90}
\end{figure}

\begin{table}[ht]
\centering
\caption{Empirical coverage (nominal $50\%$) for a fixed target policy $\pi$ (uniform over actions) in RiverSwim. Comparison of model-based bootstrap, episodic bootstrap, and plug-in (CLT) CI, episode length $T=50$.}
\label{tab:riverswim-uniform-coverage-50}
% [inline block 1: 6 envs, 34211 chars in 6 pieces, piece 1 here, a bare % at each other -> data_tex | \begin{tabular}{c|ccc|cccccc} \hline...]

\end{table}

\begin{table}[ht]
\centering
\caption{Empirical coverage (nominal $50\%$) for a fixed target policy $\pi$ (mostly-right: $\pi(1\mid s)=0.8$) in RiverSwim. Comparison of model-based bootstrap, episodic bootstrap, and plug-in (CLT) CI, episode length $T=50$.}
\label{tab:riverswim-mostly-right-coverage-50}
%
\end{table}

\begin{table}[ht]
\centering
\caption{Empirical coverage (nominal $50\%$) for a fixed target policy $\pi$ (mostly-left: $\pi(0\mid s)=0.8$) in RiverSwim. Comparison of model-based bootstrap, episodic bootstrap, and plug-in (CLT) CI, episode length $T=50$.}
\label{tab:riverswim-mostly-left-coverage-50}
%
\end{table}

\clearpage

\begin{table}[ht]
\centering
\caption{Empirical coverage (nominal $90\%$) for a fixed target policy $\pi$ (uniform over actions) in RiverSwim. Comparison of model-based bootstrap, episodic bootstrap, and plug-in (CLT) CI, episode length $T=50$.}
\label{tab:riverswim-uniform-coverage-90}
%
\end{table}

\begin{table}[ht]
\centering
\caption{Empirical coverage (nominal $90\%$) for a fixed target policy $\pi$ (mostly-right: $\pi(1\mid s)=0.8$) in RiverSwim. Comparison of model-based bootstrap, episodic bootstrap, and plug-in (CLT) CI, episode length $T=50$.}
\label{tab:riverswim-mostly-right-coverage-90}
%
\end{table}

\begin{table}[ht]
\centering
\caption{Empirical coverage (nominal $90\%$) for a fixed target policy $\pi$ (mostly-left: $\pi(0\mid s)=0.8$) in RiverSwim. Comparison of model-based bootstrap, episodic bootstrap, and plug-in (CLT) CI, episode length $T=50$.}
\label{tab:riverswim-mostly-left-coverage-90}
%
\end{table}

\clearpage
\subsection{OPR: Additional Nominal Levels ($50\%$ and $90\%$)}
\label{app:opr-tables-multilevel}

\begin{figure}[ht]
\centering
\includegraphics[width=\linewidth]{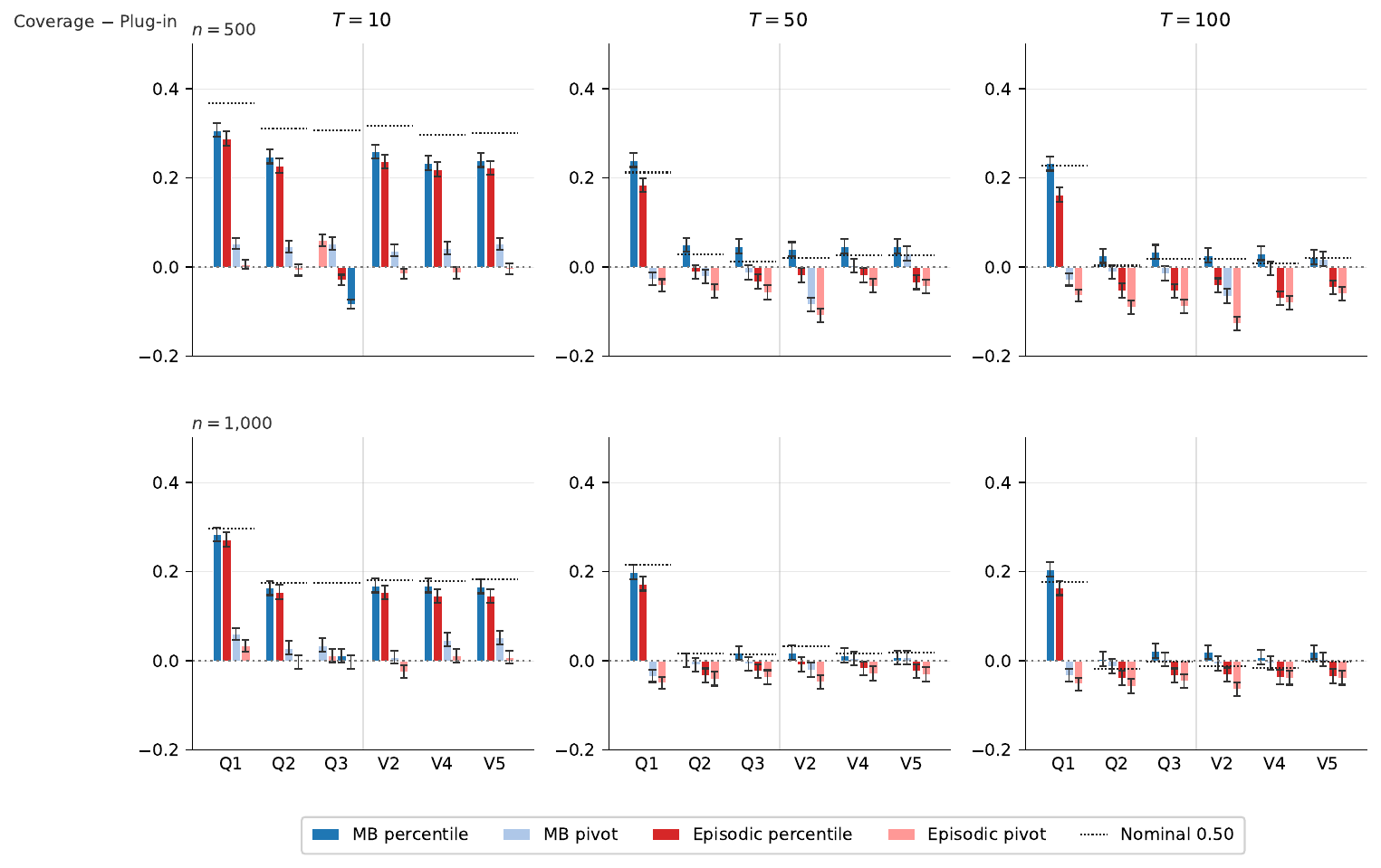}
\caption{Empirical coverage relative to the Plug-in (CLT) baseline for
$Q_\star(1,0)$, $Q_\star(3,1)$, $Q_\star(6,0)$, $V_\star(2)$, $V_\star(4)$, $V_\star(5)$
in RiverSwim at nominal $50\%$. Horizontal lines mark the nominal $0.50$ level
for each entry. Columns correspond to $T\in\{10,50,100\}$, rows correspond to
$n\in\{500,1{,}000\}$.
Full results appear in Tables~\ref{tab:riverswim-optimal-coverage-T10-50}--\ref{tab:riverswim-optimal-coverage-T100-50}.}
\label{fig:coverage-optimal-50}
\end{figure}

\begin{figure}[ht]
\centering
\includegraphics[width=\linewidth]{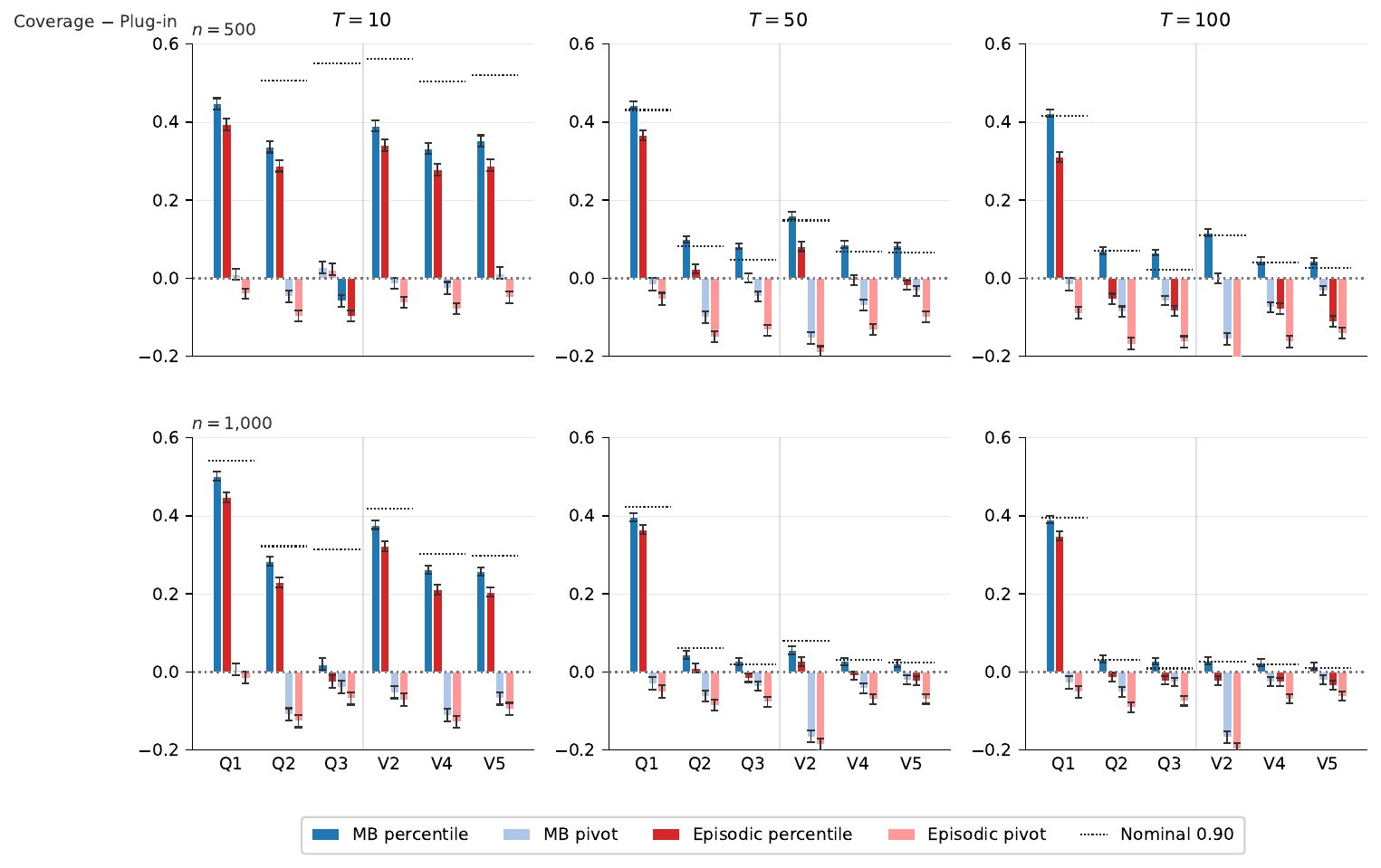}
\caption{Empirical coverage relative to the Plug-in (CLT) baseline for
$Q_\star(1,0)$, $Q_\star(3,1)$, $Q_\star(6,0)$, $V_\star(2)$, $V_\star(4)$, $V_\star(5)$
in RiverSwim at nominal $90\%$. Horizontal lines mark the nominal $0.90$ level
for each entry. Columns correspond to $T\in\{10,50,100\}$, rows correspond to
$n\in\{500,1{,}000\}$.
Full results appear in Tables~\ref{tab:riverswim-optimal-coverage-T10-90}--\ref{tab:riverswim-optimal-coverage-T100-90}.}
\label{fig:coverage-optimal-90}
\end{figure}

\begin{table}[ht]
\centering
\caption{Empirical coverage (nominal $50\%$) for $Q_\star$ and $V_\star$ in RiverSwim. Comparison of model-based bootstrap, episodic bootstrap, and plug-in (CLT) CI, episode length $T=10$.}
\label{tab:riverswim-optimal-coverage-T10-50}
% [inline block 2: 6 envs, 24635 chars in 4 pieces, piece 1 here, a bare % at each other -> data_tex | \begin{tabular}{c|ccc|cccccc} \hline...]

\end{table}

\begin{table}[ht]
\centering
\caption{Empirical coverage (nominal $90\%$) for $Q_\star$ and $V_\star$ in RiverSwim. Comparison of model-based bootstrap, episodic bootstrap, and plug-in (CLT) CI, episode length $T=10$.}
\label{tab:riverswim-optimal-coverage-T10-90}
%
\end{table}

\begin{table}[ht]
\centering
\caption{Empirical coverage (nominal $50\%$) for $Q_\star$ and $V_\star$ in RiverSwim. Comparison of model-based bootstrap, episodic bootstrap, and plug-in (CLT) CI, episode length $T=50$.}
\label{tab:riverswim-optimal-coverage-T50-50}
%
\end{table}

\begin{table}[ht]
\centering
\caption{Empirical coverage (nominal $90\%$) for $Q_\star$ and $V_\star$ in RiverSwim. Comparison of model-based bootstrap, episodic bootstrap, and plug-in (CLT) CI, episode length $T=50$.}
\label{tab:riverswim-optimal-coverage-T50-90}
%
\end{table}

%%%%%%%%%%%%%%%%%%%%%%%%%%%%%%%%%%%%%%%%%%%%%%%%%%%%%%%%%%%%
\clearpage
%%%%%%%%%%%%%%%%%%%%%%%%%%%%%%%%%%%%%%%%%%%%%%%%%%%%%%%%%%%%
\clearpage

\end{document}